\documentclass{article}

\usepackage[preprint]{neurips_2026}

\usepackage[utf8]{inputenc}
\usepackage[T1]{fontenc}
\usepackage{titletoc}
\usepackage{hyperref}
\usepackage{url}
\usepackage{booktabs}
\usepackage{amsfonts}
\usepackage{amssymb}
\usepackage{bm}
\usepackage{nicefrac}
\usepackage{microtype}
\usepackage{xcolor}
\usepackage{algorithm}
\usepackage{algpseudocode}
\usepackage{enumitem}
\usepackage{wrapfig}

\usepackage{conference}

\title{Signs Beat Floats: Low-Rank Double-Binary Adaptation for On-Device Fine-Tuning}

\author{%
Yoshihiko Fujisawa\thanks{Equal contribution.}\\
Fujitsu Limited,\\
Institute of Science Tokyo \\
\texttt{fujisawa.y.5bdc@m.isct.ac.jp} \\
\And 
Yuma Ichikawa\thanks{Equal contribution.} \\
Fujitsu Limited, \\
RIKEN Center for AIP \\
\texttt{ichikawa.yuma@fujitsu.com}
\And
Yudai Fujimoto \\
Fujitsu Limited,\\
Institute of Science Tokyo \\
\texttt{fujimoto.y.7de0@m.isct.ac.jp}
\And
Akira Sakai \\
Fujitsu Limited, \\
Tokai University \\
\texttt{akira.sakai@fujitsu.com}
\And
Katsuki Fujisawa \\
Institute of Science Tokyo \\
\texttt{fujisawa.k.2110@m.isct.ac.jp}
}

\begin{document}

\maketitle

\begin{abstract}
On-device adaptation of large language models commonly keeps a quantized base model frozen while training and deploying a small, task-specific LoRA adapter. In the unmerged adapter-mode setting, however, the adapter is more than a compact storage module; it introduces an additional dense floating-point branch, maintains a trainable state for local updates, and acts as a unit of communication and hot-swapping.
We introduce LoRDBA, a LoRA-compatible adapter that replaces both low-rank factors with binary sign carriers while representing magnitudes through lightweight, channel-wise scales, converting the dense adapter branch into two sign-accumulation matrix multiplications interleaved with channel-wise scaling. A finite-sample analysis shows that reconstruction quality is governed by the residual-to-magnitude ratio of the original LoRA factors. In adapter-mode experiments, LoRDBA outperforms low-bit baselines at matched model sizes while matching fp16 LoRA quality in selected regimes. The unmerged adapter incurs at most $8\%$ prefill latency overhead at matched rank $r{=}16$ despite an over $10\times$ reduction in adapter footprint, with moderate training memory overhead of approximately $1.6\times$ that of fp16 LoRA.

\end{abstract}

\section{Introduction}\label{sec:intro}

Large language models are increasingly adapted using parameter-efficient updates rather than full fine-tuning. LoRA enables this paradigm by freezing pretrained weights and injecting low-rank adapter matrices into Transformer projections, reducing the number of trainable parameters~\citep{hu2022lora}. QLoRA extends this approach to low-bit settings by backpropagating through a frozen quantized base model into LoRA adapters~\citep{dettmers2023qlora}. This base-plus-adapter design is attractive for on-device personalization, privacy-preserving adaptation, and communication-efficient update distribution; a compact low-bit base can remain fixed while task-specific adapters are trained, exchanged, or selected. Recent multi-adapter serving systems, including S-LoRA and Punica, demonstrate that practical deployment often treats adapters as swappable runtime objects rather than as weights that are always merged into the base model~\citep{sheng2024slora,chen2024punica}.

In practical low-bit deployments, merging a trained adapter into a quantized base requires materializing full-precision weights or requantizing per task, forfeiting hot-swap capability; it is therefore preferable to serve the adapter as an unmerged side branch~\citep{frantar2022gptq,lin2024awq}. This unmerged branch creates a distinct bottleneck: the base projection runs on fused low-bit kernels while the adapter uses dense fp16 factors. An adapter compressor for this regime must preserve accuracy, reduce unmerged computation cost, and avoid inflating the local training state.

\begin{wrapfigure}{r}{0.42\textwidth}
    \centering
    \vspace{-1.4em}
    \includegraphics[width=0.42\textwidth]{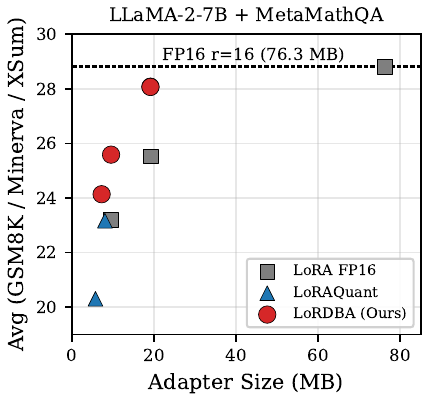}
    \vspace{-1.4em}
    \caption{\small LoRDBA (red) Pareto-dominates FP16 LoRA and LoRAQuant at matched adapter sizes on \textsc{LLaMA-2-7B}; cf.\ Table~\ref{tab:7b-size-ranking}.}
    \label{fig:intro-preview}
    \vspace{-1.0em}
\end{wrapfigure}

\begin{figure}[tb]
    \centering
    \includegraphics[width=\linewidth]{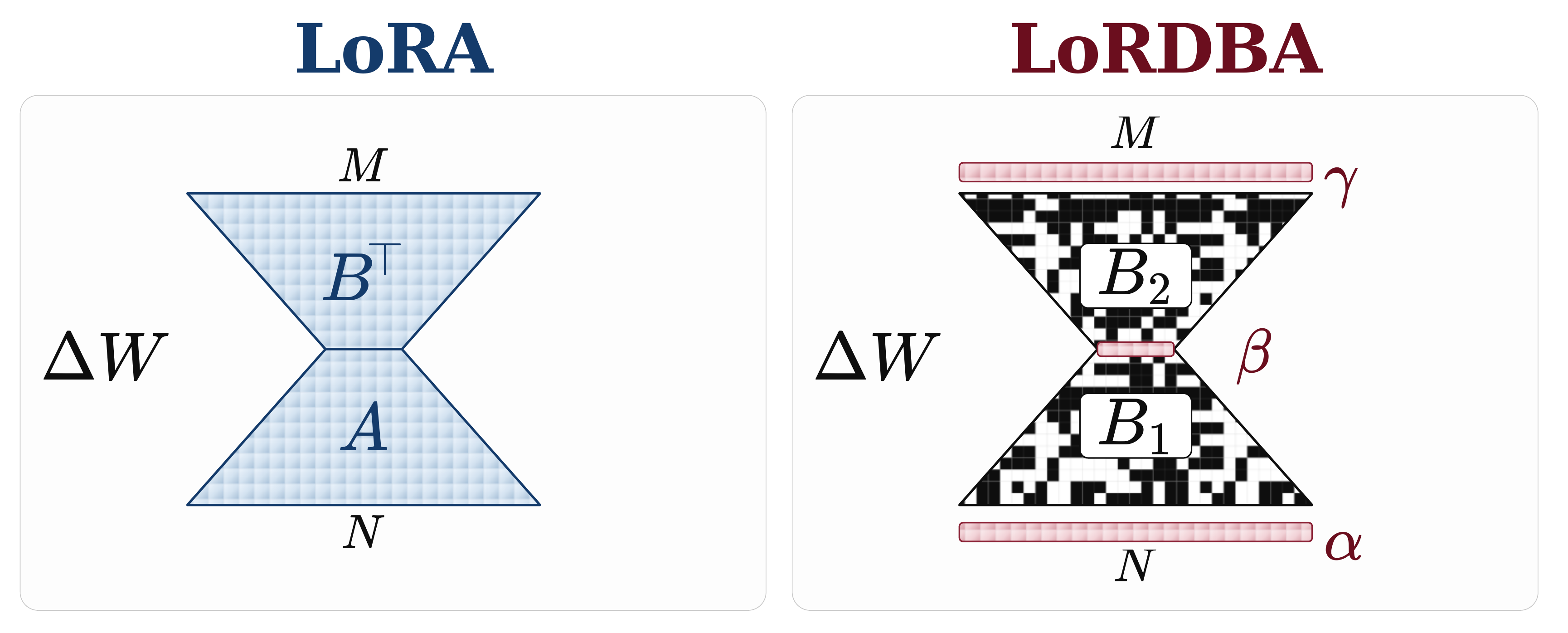}
    \caption{Adapter-mode LoRA versus LoRDBA. The base model is stored in a low-bit codebook and executed with a fused dequantization--matrix multiplication kernel, while the adapter is served in an unmerged form. (a) Standard LoRA introduces two dense fp16 factors and requires a separate fp16 matrix multiplication branch. (b) LoRDBA replaces both factors with sign matrices coupled by three channel-wise scale vectors, reducing the adapter forward pass to two sign-accumulation matrix multiplications.}
    \label{fig:concept}
\end{figure}

We introduce LoRDBA, a low-rank double-binary adapter designed for unmerged adapter-mode deployment. LoRDBA replaces both LoRA factors with binary sign carriers and augments them with a compact scale envelope along the input, rank, and output axes. Binary signs encode directional information while lightweight channel-wise scales capture magnitudes; the quantized base model remains frozen and unchanged.
LoRDBA is trained end-to-end using a smooth-sign straight-through estimator and exported as a near-binary adapter. At inference, it replaces the dense adapter branch with binary-weight accumulation interleaved with scaling. We analyze when this binary reconstruction is accurate, showing that the key quantity is the residual-to-magnitude ratio of the floating-point LoRA factors.
Unlike BitDelta~\citep{liu2024bitdelta}, which binarizes the full $\Theta(NM)$ delta discarding low-rank structure, and LoRAQuant~\citep{mirzaei2025loraquant}, which retains mixed-precision SVD components, LoRDBA commits \emph{both} rank-$R$ factors to $\pm 1$ and trains end-to-end, preserving the low-rank adapter format.

We evaluate LoRDBA under the adapter-mode protocol---adapter never merged; storage, latency, and training memory measured separately. Across LLaMA-based math reasoning and summarization benchmarks, LoRDBA improves the accuracy--size trade-off over all evaluated low-bit baselines as previewed in Figure~\ref{fig:intro-preview}.

\section{Preliminaries}\label{sec:prelim}

\paragraph{Notation.} 
Uppercase Roman letters denote matrices; lowercase boldface denotes vectors. $\{\pm 1\}^{p \times q}$ is the set of sign matrices. $\|\cdot\|_F$ and $\|\cdot\|_{\text{op}}$ are the Frobenius and operator norms. $\diag(\B{v})$ constructs a diagonal matrix; $\sign(U)$ is entry-wise sign with $\sign(0){=}{+}1$. The sub-Gaussian Orlicz norm is $\|X\|_{\psi_2} \coloneqq \inf \{ t > 0 : \mab{E}[\exp(X^2/t^2)] \le 2 \}$~\citep{vershynin2018high}.

\paragraph{LoRA.} 
Given a frozen effective weight matrix $W_{0} \in \mab{R}^{N\times M}$, LoRA reparameterizes the fine-tuning increment as
\begin{equation}
    \Delta W  =  AB^{\top} \in \mab{R}^{N\times M},~~
    A \in \mab{R}^{N\times r},~B \in \mab{R}^{M\times r}, 
\label{eq:lora}
\end{equation}
where $r \ll \min(N,M)$ is the adapter rank. Following the column-vector convention, the adapted forward pass for an input $\B{x} \in \mab{R}^{N}$ is
$\B{y} = W_{0}^{\top}\B{x} + B(A^{\top}\B{x}) \in \mab{R}^{M}$. Thus, each adapted projection incurs two adapter GEMMs with an aggregate arithmetic complexity of $\mathcal{O}(r(N+M))$ per token and stores $16r(N+M)$ bits when both factors are kept in fp16. The base weights remain fixed throughout adaptation.

\paragraph{Adapter-mode deployment.}
We assess LoRDBA in a QLoRA-style on-device regime \citep{dettmers2023qlora, sheng2024slora, chen2024punica}. Let $W_{\mathrm{fp}}$ denote the original full-precision pretrained weight, and let $W_{Q}$ be its stored low-bit codebook representation. The effective deployed base weight is
\begin{equation*}
    \widetilde W_{0} \coloneqq \mathrm{dequant}(W_{Q}),
\end{equation*}
which generally differs from $W_{\mathrm{fp}}$ due to quantization error. In this setting, the base model is never materialized in fp16, and the trained adapter is served in an unmerged state. The adapted forward pass is
\begin{equation}
    \B{y}  =  \widetilde W_{0}^{\top}\B{x}  +  \Delta W^{\top}\B{x}
    ~=~ \mathrm{dequant}(W_{Q})^{\top}\B{x}  +  \Delta W^{\top}\B{x}.
    \label{eq:adapter-mode-fwd}
\end{equation}
This factorizes the operation into a base GEMM---executed by a fused dequantize-matmul kernel when the base is quantized---and an independent adapter side branch. The device-side workload is characterized by the following four conditions:

\begin{enumerate}[label=(A\arabic*),leftmargin=*]
    \item \textbf{Frozen base:} the base model is held fixed and is not updated. In typical on-device deployments it is stored in a low-bit format such as NF4 or GPTQ, but the conditions themselves do not mandate a specific base precision.
    \item \textbf{Adapter-only training:} optimizer states for the base weights are never materialized.
    \item \textbf{unmerged serving:} the adapter is evaluated as a side branch rather than being merged into a task-specific base weight.
    \item \textbf{Hot-swappability:} multiple task-specific adapters can share the same on-device base \citep{sheng2024slora, chen2024punica}.
\end{enumerate}

The unmerged regime decouples base and adapter kernels: the base GEMM runs at the codebook's native precision while the adapter GEMM runs at its own bit width. Reducing the adapter bit width is thus the primary mechanism for improving the bandwidth-compute envelope without violating the no-merging constraint.

\paragraph{Bits per weight (BPW).}
Following standard practice~\citep{mirzaei2025loraquant}, we quantify adapter compression in \emph{bits per weight} (BPW) relative to the reference LoRA parameter count $r_{0}(N + M)$. An fp16 LoRA adapter of rank $r$ thus has BPW $= 16r/r_{0}$, and the full-size reference at rank $r_{0}$ has BPW $= 16$. The LoRDBA-specific BPW metrics are defined in Section~\ref{subsec:lordba-def} once the adapter parameterization is introduced.

\section{Method}\label{sec:method}

\subsection{Low Rank Double Binary Adapter}\label{subsec:lordba-def}

LoRDBA parameterizes the LoRA delta with two binary factors and a small collection of channel-wise fp16 scale vectors across the input, rank, and output axes. This design follows the multi-envelope principle introduced for full-weight extreme quantization in MDBF~\citep{ichikawa2025more}, but applies it to the much smaller low-rank LoRA delta. Since a single set of scales is a fragile one-dimensional summary of the rank axis, we allow the adapter to contain multiple envelopes that share the same binary carriers, with the number of envelopes controlled by an integer $\ell \ge 1$.

\begin{definition}[Low Rank Double Binary Adapter]
    \label{def:lordba}
    Fix a reference fp16 LoRA rank $r_{0} \ge 1$ used only for normalizing storage. A \emph{LoRDBA adapter} of binary carrier rank $R\in\mathbb{N}$ and envelope rank $\ell \ge 1$ for a host projection of shape $N \times M$ is represented as the tuple $\theta = \bigl(B_{1},B_{2},(\B{\alpha}^{(i)},\B{\beta}^{(i)},\B{\gamma}^{(i)})_{i=1}^{\ell}\bigr)$ with
    \begin{equation}
        B_{1} \in \{\pm 1\}^{N\times R}, ~ 
        B_{2} \in \{\pm 1\}^{R\times M}, ~ 
        \B{\alpha}^{(i)} \in \mab{R}^{N},~
        \B{\beta}^{(i)} \in \mab{R}^{R},~
        \B{\gamma}^{(i)} \in \mab{R}^{M},
    \end{equation}
    whose induced weight update is
    \begin{equation}
        \Delta W(\theta)
         = \sum_{i=1}^{\ell}\diag(\B{\alpha}^{(i)}) B_{1} \diag(\B{\beta}^{(i)}) B_{2} \diag(\B{\gamma}^{(i)})
         \in \mab{R}^{N\times M}.
        \label{eq:lordba}
    \end{equation}
    The adapted forward pass is $\B{y} = W_{0}^{\top}\B{x} + \Delta W(\theta)^{\top}\B{x}$
    and the exact storage cost is
    \begin{equation}
                \mathrm{bits}(\theta) = R(N + M) + 16 \ell (N + R + M).
        \label{eq:lordba-bits}
    \end{equation}
    We refer to $R$ as the \emph{binary carrier rank} (the inner dimension of the binary matmul), reserving $r_{0}$ for the fixed reference LoRA rank used only in BPW normalization.
\end{definition}

The exact storage cost of Eq.~\eqref{eq:lordba-bits} induces two BPW metrics:
\begin{equation}
    \mathrm{BPW}_{\mathrm{bc}}~\coloneqq~\frac{R}{r_{0}}, \qquad
    \mathrm{BPW}_{\mathrm{tot}}~\coloneqq~\frac{R(N + M) + 16\ell(N + R + M)}{r_{0}(N + M)}.
    \label{eq:bpw-defs}
\end{equation}
$\mathrm{BPW}_{\mathrm{bc}}$ counts only the binary carriers; $\mathrm{BPW}_{\mathrm{tot}}$ includes all fp16 scale vectors. At the default operating point $R{=}r_0, \ell{=}1$, we have $\mathrm{BPW}_{\mathrm{tot}} \to 1$ as $r_0, N, M \to \infty$. All tables report $\mathrm{BPW}_{\mathrm{tot}}$ and actual storage in MB.

The canonical $\ell = 1$, $R = r_{0}$ case reduces to $\Delta W = \diag(\B{\alpha})B_{1}\diag(\B{\beta})B_{2}\diag(\B{\gamma})$ (Figure~\ref{fig:concept}). For $\ell \ge 2$, Eq.~\eqref{eq:lordba} sums $\ell$ rank-$R$ terms; inference cost grows linearly in $\ell$, while binary-carrier storage is unchanged.

\subsection{LoRDBA Training}\label{subsec:training}

Under (A1)--(A4), LoRDBA is trained end-to-end on the downstream task with the base $\widetilde W_{0}=\mathrm{dequant}(W_Q)$ frozen. The procedure has three components: a forward pass with binarized carriers, a backward pass with a smooth-sign STE, and an SVD-based initialization. We call this pipeline \textbf{QAT Full}; all main results use it.

\paragraph{Forward and backward pass.}
We maintain real-valued latent carriers $H_{1}\in\mab{R}^{N\times R}$, $H_{2}\in\mab{R}^{R\times M}$ and binarize them in the forward pass: $B_{1}=\sign(H_{1})$, $B_{2}=\sign(H_{2})$. The weight update $\Delta W(\theta)$ is computed via Eq.~\eqref{eq:lordba}. 
On the backward pass, the non-differentiable $\sign(\cdot)$ is replaced by a smooth-sign straight-through estimator (STE)~\citep{leng2018extremely,liu2020reactnet,boza2025dsf, ichikawa2025optimization, ichikawa2024controlling} with a temperature schedule (Appendix~\ref{app:qat-ste}). Recent high-dimensional analyses of quantized-model training further indicate that STE dynamics can be strongly shaped by quantization hyperparameters such as bit width and quantization range~\citep{ichikawa2025high}.
Channel-wise scales $(\B{\alpha},\B{\beta},\B{\gamma})$ are trained in fp32 and cast to fp16 at export. The rest of the pipeline (AdamW optimizer, data pipeline, learning rate schedule) is identical to a standard LoRA training loop.

\paragraph{Initialization.}
The latent carriers are initialized from the rank-$R$ thin SVD of a short, $10\%$-budget fp16 LoRA warm-up. Given $\Delta W_{\mathrm{warm}}\approx U_R S_R V_R^{\top}$:
\begin{equation}
    H_{1}^{(0)} = U_{R},~~
    H_{2}^{(0)} = V_{R}^{\top},~~
    B_{1}^{(0)} = \sign(H_1^{(0)}),~~
    B_{2}^{(0)} = \sign(H_2^{(0)}),~~
    \B{\beta}^{(0)} = \diag(S_{R}),
    \label{eq:svd-init}
\end{equation}
with $\B{\alpha}^{(0)},\B{\gamma}^{(0)}$ recovered by one closed-form per-axis least-squares sweep, cf.\ Appendix~\ref{app:scale-updates}. For $\ell \ge 2$, additional envelopes are zero-initialized as $\B{\beta}^{(i)}{=}\B{0}$, so the initial function equals the $\ell{=}1$ warm-start. This initialization adds negligible overhead relative to the full QAT budget.

\paragraph{Parameter and storage cost.}
The trainable parameter count is $R(N{+}M)+\ell(N{+}R{+}M)$---the same leading term as a rank-$R$ LoRA adapter plus the envelope overhead. At export, only $\sign(H_1)$, $\sign(H_2)$, and the fp16 scales are stored, yielding the storage of Eq.~\eqref{eq:lordba-bits}.

\paragraph{Optional training variants.}
LoRDBA admits alternative training modes that share the same exported adapter format and inference kernel. These are evaluated as ablations in Appendix~\ref{app:training-modes}:
\begin{itemize}[leftmargin=*,itemsep=2pt,topsep=2pt]
  \item \textbf{QAT Freeze}: freeze the binary carriers $B_1,B_2$ from the PTQ initialization and train only the scale vectors $(\B{\alpha},\B{\beta},\B{\gamma})$, reducing the trainable count to $\ell(N{+}R{+}M)$ per projection. Wall-clock time is comparable to QAT Full due to the full forward--backward pass through the frozen base.
  \item \textbf{PTQ-LoRDBA}: a training-data-free scaled-consensus ADMM that fits a LoRDBA adapter to an existing fp16 LoRA $\Delta W^{\star}$ in Frobenius norm (Appendix~\ref{app:ptq-extension}). It also serves as the initialization for QAT Full.
\end{itemize}

\subsection{Adapter-Mode Inference Kernel}\label{subsec:kernel}

For an input batch $X\in\mab{R}^{T\times N}$ and a canonical $\ell{=}1$ LoRDBA adapter, the evaluation order is
\begin{equation}
    X \Delta W(\theta) = \Bigl(\Bigl(\Bigl(\bigl(X D_{\B{\alpha}}\bigr) B_{1}\Bigr) D_{\B{\beta}}\Bigr) B_{2}\Bigr) D_{\B{\gamma}},
    \label{eq:lordba-fwd}
\end{equation}
where $D_{\B{v}}\coloneqq\diag(\B{v})$. The diagonal products are element-wise fp16 rescalings; the inner matmuls $(\cdot)B_{1}$ and $(\cdot)B_{2}$ are binary-weight GEMMs. Each column reduces to sign-accumulation:
\begin{equation}
    (X B)_{tj}
      =  \sum_{i:B_{ij}=+1} X_{ti} - \sum_{i:B_{ij}=-1} X_{ti},
    \label{eq:sign-accum}
\end{equation}
which eliminates multiplications in the inner product. We implement the binary GEMMs via \texttt{gemlite}~\citep{badri2023gemlite}; the base codebook GEMM is unchanged. The bit-packed $\{B_{1},B_{2}\}$ is stored once per adapter; for $\ell\ge 2$ the same carriers are shared across envelopes. Pseudocode and microbenchmarks are in Appendices~\ref{app:kernel} and~\ref{sec:exp-speed}.

\section{Theoretical Analysis}\label{sec:theory}

We explain why replacing both fp16 LoRA factors by binary sign carriers can preserve the LoRA update when factor magnitudes are not too dispersed. The formal model and proof are in Appendix~\ref{app:expressivity}; here we state the guarantee in the compact form used to interpret the method.

\noindent\textbf{Intuition.}~~If the entries of each LoRA factor are well-separated from zero (i.e., dominated by their signs), then replacing the factors by their element-wise signs introduces a relative Frobenius error proportional to the residual-to-magnitude ratio $\zeta/\mu$.

\begin{theorem}[Informal LoRDBA expressivity]
    \label{thm:expressivity}
    Let $A\in\mathbb{R}^{N\times r}$ and $B\in\mathbb{R}^{M\times r}$ be fp16 LoRA factors. Suppose that, after fixing the usual positive column-rescaling gauge of LoRA, their entries can be written as
    \begin{equation*}
        A_{ik}=\mu_A\sigma^A_{ik}+\xi^A_{ik},
        \qquad
        B_{jk}=\mu_B\sigma^B_{jk}+\xi^B_{jk},
    \end{equation*}
    where $\mu_A,\mu_B>0$, the sign arrays are in $\{\pm1\}$, and the residuals are independent, mean-zero, sub-Gaussian with scale at most $\zeta$, and independent of the signs. For the relative bound, assume the signs are jointly i.i.d. Rademacher, $NM>8$, $\delta\in(0,1-8/(NM))$, $\zeta\le\max(\mu_A,\mu_B)$, and $\log(6NM/\delta)\le c_1r$ for a universal constant $c_1>0$.

    Let $\theta^\star$ be the canonical single-envelope LoRDBA adapter with binary carriers $\sigma^A$ and $(\sigma^B)^\top$, unit input/output scales, and rank-axis scale $\mu_A\mu_B\mathbf{1}_r$. Then, for a universal constant $C'>0$, with probability at least $1-\delta-8/(NM)$,
    \begin{equation}
        \frac{\|AB^{\top}-\Delta W(\theta^{\star})\|_{F}}
        {\|\Delta W(\theta^{\star})\|_{F}}
        \le
        C'\frac{\zeta}{\min(\mu_A,\mu_B)}\sqrt{\log(2NM/\delta)}.
        \label{eq:relative-bound}
    \end{equation}
    The full statement, including the absolute Frobenius bound, the observed-sign version using $\sign(A)$ and $\sign(B)$, and the monotonic extension to envelope rank $\ell\ge1$, is given in Appendix~\ref{app:expressivity}.
\end{theorem}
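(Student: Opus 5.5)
We write $S_A=\sigma^A\in\{\pm1\}^{N\times r}$ and $S_B=\sigma^B\in\{\pm1\}^{M\times r}$. By construction $\Delta W(\theta^\star)=\mu_A\mu_B S_AS_B^\top$, so the error decomposes as
\begin{equation*}
AB^\top-\Delta W(\theta^\star)=\mu_A S_A(\xi^B)^\top+\mu_B\,\xi^A S_B^\top+\xi^A(\xi^B)^\top .
\end{equation*}
The plan is to bound each of the three terms entrywise, which gives a Frobenius upper bound after a union bound over the $NM$ entries. We then lower-bound the denominator $\|S_AS_B^\top\|_F$ by concentration of Rademacher inner products.

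\textbf{Numerator.} Fix an entry $(i,j)$. The first term is $\mu_A\sum_k (S_A)_{ik}\xi^B_{jk}$. Conditional on the signs, this is a sum of $r$ independent mean-zero sub-Gaussians with $\psi_2$-norm at most $\zeta$. By Hoeffding's inequality (general sub-Gaussian form, \citep{vershynin2018high}), it is at most $C\mu_A\zeta\sqrt{r\log(6NM/\delta)}$ with probability at least $1-\delta/(3NM)$. The second term is handled the same way with $\mu_B$. The cross term $\sum_k\xi^A_{ik}\xi^B_{jk}$ is a sum of products of independent sub-Gaussians, hence sub-exponential with $\|\cdot\|_{\psi_1}\le\zeta^2$. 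Bernstein's inequality bounds it by $C\zeta^2\bigl(\sqrt{r\log(6NM/\delta)}+\log(6NM/\delta)\bigr)$. The hypothesis $\log(6NM/\delta)\le c_1 r$ lets us absorb the linear-log part into the square-root part. The hypothesis $\zeta\le\max(\mu_A,\mu_B)$ then gives $\zeta^2\le\zeta\max(\mu_A,\mu_B)$, so the cross term is dominated by the first two. A union bound over the $3NM$ events yields, with probability at least $1-\delta$,
\begin{equation*}
\|AB^\top-\Delta W(\theta^\star)\|_F\le C\,\zeta\max(\mu_A,\mu_B)\sqrt{NMr\log(2NM/\delta)},
\end{equation*}
after adjusting constants. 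This is the absolute Frobenius bound. It does not use the Rademacher assumption, only the independence of the residuals from the signs.

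\textbf{Denominator.} We need $\|S_AS_B^\top\|_F^2=\sum_{i,j}\bigl(\sum_k (S_A)_{ik}(S_B)_{jk}\bigr)^2\ge c\,NMr$. For fixed $(i,j)$ the inner sum is a sum of $r$ i.i.d.\ Rademacher variables, since products of independent Rademachers are Rademacher, so each squared entry has mean exactly $r$. The sum over entries therefore has mean $NMr$. The entries are not independent, since they share rows and columns. We would control the variance directly: expanding the fourth moments, $\mathrm{Var}\bigl(\|S_AS_B^\top\|_F^2\bigr)=O(NMr^2(N+M))$, because only pairs of entries sharing a row or column are correlated. Chebyshev's inequality then gives $\|S_AS_B^\top\|_F^2\ge NMr/2$ with failure probability $O((N+M)/(NM))$. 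This should be tuned to at most $8/(NM)$, which likely forces a slightly sharper argument. One option is to condition on $S_A$ and use that $\|S_AS_B^\top\|_F^2=\sum_j\|S_A (S_B)_{j\cdot}^\top\|^2$ is a sum of $M$ independent terms with mean $\|S_A\|_F^2=Nr$, and then apply Chebyshev with fourth-moment (Khintchine) control. We expect this failure-probability bookkeeping to be the main obstacle, in particular matching the exact $8/(NM)$ and the condition $NM>8$. A decoupled fourth-moment computation is the first thing to try.

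\textbf{Combining.} Dividing the numerator bound by $\mu_A\mu_B\sqrt{NMr/2}$ gives
\begin{equation*}
C\zeta\frac{\max(\mu_A,\mu_B)}{\mu_A\mu_B}\sqrt{\log(2NM/\delta)}=C'\frac{\zeta}{\min(\mu_A,\mu_B)}\sqrt{\log(2NM/\delta)}.
\end{equation*}
The $\sqrt r$ factors cancel, which is why the bound is rank-free. A union bound over the two events gives overall probability at least $1-\delta-8/(NM)$. The gauge fixing enters only in the model assumption. The observed-sign version and the monotonic extension to $\ell\ge1$ (taking the extra envelopes with zero rank-axis scale) are deferred, as in the statement.
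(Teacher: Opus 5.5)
Your numerator argument is sound and is essentially the paper's. Both use the same three-term expansion of $E_{ij}$, sub-Gaussian tails for the two linear terms, Bernstein for the cross term, and a union bound over entries. One small difference: the paper merges the three tails into one sub-exponential tail with proxy $V=r\zeta^{2}(\mu_A^{2}+\mu_B^{2}+\zeta^{2})$ and invokes $\zeta\le\max(\mu_A,\mu_B)$ only when forming the ratio, so its absolute bound does not need that hypothesis. Yours does, which is harmless for the relative bound.

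The gap is in the denominator. Your premise that squared entries of $S_AS_B^{\top}$ sharing a row or a column are correlated is false. Your estimate $\mathrm{Var}(\|S_AS_B^{\top}\|_F^{2})=O(NMr^{2}(N+M))$ is therefore loose by a factor $N+M$, and the Chebyshev step you commit to only gives failure probability of order $(N+M)/(NM)$. As written, the proposal does not establish the probability $1-\delta-8/(NM)$, and the conditioning fallback is left unexecuted.

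The missing observation is a short conditioning argument. Write $W_{ij}=\sum_k\sigma^A_{ik}\sigma^B_{jk}$ and $Z=\sum_{i,j}W_{ij}^{2}$.
\begin{itemize}
\item Shared row: for $j\neq j'$, conditionally on $\sigma^A_{i\cdot}$ the variables $W_{ij}$ and $W_{ij'}$ are independent, since they use the independent rows $\sigma^B_{j\cdot}$ and $\sigma^B_{j'\cdot}$. Also $\mathbb{E}[W_{ij}^{2}\mid\sigma^A_{i\cdot}]=\|\sigma^A_{i\cdot}\|_2^{2}=r$ is deterministic. Hence $\mathbb{E}[W_{ij}^{2}W_{ij'}^{2}]=r^{2}$ and $\mathrm{Cov}(W_{ij}^{2},W_{ij'}^{2})=0$.
\item Shared column: the same argument applies, conditioning on $\sigma^B_{j\cdot}$.
\item Sharing neither: the two entries are independent.
\end{itemize}
So only the $NM$ diagonal terms survive. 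Since $\mathbb{E}W_{11}^{4}=3r^{2}-2r$, you get $\mathrm{Var}(Z)=2r(r-1)NM\le 2r^{2}NM$. Plain Chebyshev at deviation $rNM/2$ then gives exactly $\Pr[Z<rNM/2]\le 8/(NM)$; no fourth-moment or Khintchine refinement is needed.

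Your conditional route also works if carried out exactly. With $G^A_{kl}=\sum_i\sigma^A_{ik}\sigma^A_{il}$, one has $\mathrm{Var}(Z\mid S_A)=2M\sum_{k\neq l}(G^A_{kl})^{2}$, and averaging gives the same $2r(r-1)NM$. The paper reaches this identity via $Z=rNM+2\sum_{k<l}G^A_{kl}G^B_{kl}$ and $\mathbb{E}[G^A_{kl}G^A_{k'l'}]=N\,\mathbf{1}\{(k,l)=(k',l')\}$.

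Finally, $NM>8$ and $\delta<1-8/(NM)$ play no role in the proof; they only make the final probability positive, so there is nothing there to ``match''.
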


The bound is controlled by $\zeta/\mu$: the residual-to-magnitude ratio after separating signs from factor entries. In the low-noise regime, binary carriers preserve the Frobenius signal up to logarithmic factors, predicting a performance plateau near the binary-carrier limit. Appendix~\ref{app:signnoise-diagnostic} reports the plug-in diagnostic.

\paragraph{Role of the bound in the QAT pipeline.}
Theorem~\ref{thm:expressivity} directly characterises PTQ reconstruction quality: given fp16 LoRA factors whose entries are well-separated from zero, the binary sign carriers preserve most of the Frobenius signal.
In the QAT Full pipeline (Section~\ref{subsec:training}), this PTQ solution serves as the initialization; the subsequent smooth-sign training refines both the binary carriers and the scales on the downstream loss, which explains why QAT Full consistently outperforms PTQ-LoRDBA (Appendix~\ref{app:training-modes}).
The theorem thus plays two complementary roles: (i)~it identifies the residual-to-magnitude ratio $\zeta/\min(\mu_A,\mu_B)$ as the diagnostic governing initialization quality, and (ii)~it predicts a diminishing-returns plateau near the binary-carrier limit, beyond which spending more bits on dense factors yields little improvement---a prediction confirmed by the BPW sweep of Section~\ref{sec:exp-ablation}.

\section{Experiments}\label{sec:exp}

Under the adapter-mode conditions (A1)--(A4), we evaluate LoRDBA along the three axes that jointly determine deployment viability:
\begin{itemize}[itemsep=1pt,topsep=3pt]
  \item[\textbf{RQ1}] \textbf{Accuracy} (Section~\ref{sec:exp-accuracy}): Does LoRDBA match or exceed fp16 LoRA at matched adapter size?
  \item[\textbf{RQ2}] \textbf{Inference speed} (Section~\ref{sec:exp-speed}): Does the binary-weight kernel reduce unmerged adapter latency?
  \item[\textbf{RQ3}] \textbf{Training cost} (Section~\ref{sec:exp-memory}): What is the peak memory and wall-clock overhead relative to the corresponding fp16 LoRA loop?
\end{itemize}

\paragraph{Evaluation protocol.}
We follow the adapter-mode plug-and-play protocol~\citep{liu2024bitdelta,mirzaei2025loraquant}. A reference fp16 LoRA of rank $r_0$ is first fine-tuned on the downstream task atop the frozen base. The candidate method then emits an adapter at the target BPW under (A1)--(A4), and the adapter is evaluated \emph{without merging}. Accuracy is compared at matched adapter size, latency on the unmerged forward pass, and peak memory on the same training device. LoRDBA trains end-to-end rather than compressing a pre-trained LoRA; the optional ADMM refinement is in Appendix~\ref{app:ptq-extension}.

\paragraph{Setup}

All experiments instantiate (A1)--(A4): the base is frozen in bf16, only the adapter is trainable, and the adapter is served unmerged. 
We use bf16 rather than a quantized base to isolate adapter-compression effects from base-quantization noise; LoRDBA is orthogonal to the base format and composes with base-side PTQ pipelines, including layer-wise error-propagation and submodule-level quantization methods~\citep{arai2025quantization,ichikawa2025lpcd}, since it only modifies the adapter branch.

\textbf{Models.}~~\textbf{LLaMA-2-7B} is the primary base; \textbf{LLaMA-3.2-3B} serves as an ablation base. The reference LoRA targets all seven Transformer projections $\{q,k,v,o,\mathrm{gate},\mathrm{up},\mathrm{down}\}_{\mathrm{proj}}$ with $\alpha{=}16$ and rank $r_0{=}16$; in ablations $r_0 \in \{1,2,4,64\}$.

\textbf{Tasks and metrics.}~~We fine-tune task-specific adapters on \textsc{MetaMathQA}, \textsc{Magicoder-Evol-Instruct-110K}, and \textsc{EdinburghNLP/xsum}, and evaluate on \textsc{GSM8K} with $8$-shot, \textsc{Minerva Math} with $4$-shot, and \textsc{XSum} via ROUGE-L F1, all through \texttt{lm-evaluation-harness}~\citep{eval-harness}.

\textbf{Baselines.}~~\textbf{LoRA fp16} at several ranks, \textbf{LoRAQuant}~\citep{mirzaei2025loraquant}, and \textbf{BitDelta}~\citep{liu2024bitdelta} (scalar, column, column$+$distill, and row variants). All post-hoc compressors operate on the same task-specific fp16 adapter.

\textbf{Hardware.}~~Training and evaluation use NVIDIA H100 (80\,GB) and B200 GPUs. The latency benchmark (Table~\ref{tab:inference-speed}) runs single-stream on a single H100 NVL. Full hyper-parameter and hardware details are in Appendix~\ref{app:setup}.

\textbf{Compute budget.}~~Candidates are matched at equal adapter storage, not equal training compute. QAT Full requires $1.5$--$1.8\times$ total GPU-hours vs.\ the fp16 LoRA baseline (one-time cost; details in Appendix~\ref{app:setup}). QAT Freeze offers near-equivalent accuracy at comparable wall-clock cost (Appendix~\ref{app:training-modes}).

\subsection{Adapter Accuracy (RQ1)}\label{sec:exp-accuracy}

\begin{table*}[t]
\centering
\caption{LLaMA-2-7B adapter-mode accuracy at matched size. \emph{Rank} denotes the binary carrier rank $R$, cf.\ Definition~\ref{def:lordba}; for uncompressed LoRA rows, it coincides with the standard LoRA rank. BPW is $\mathrm{BPW}_{\mathrm{tot}}$ of Eq.~\ref{eq:bpw-defs}; Avg averages GSM8K, Minerva Math, and XSum. The full-size LoRA FP16 reference at $r_0{=}16$, 76.3\,MB, BPW${}=16$ achieves Avg $28.81$. Optional training variants are compared in Appendix~\ref{app:training-modes}.}
\label{tab:7b-size-ranking}
\resizebox{\textwidth}{!}{%
\begin{tabular}{llrrrrrr}
\toprule
\textbf{Method} & \textbf{Rank} & \textbf{BPW} & \textbf{Size (MB)} & \textbf{GSM8K} & \textbf{Minerva} & \textbf{XSum} & \textbf{Avg} \\
\midrule
\multicolumn{8}{l}{\textit{Full-size reference (upper bound)}} \\
LoRA FP16                     & $r{=}16$ & 16.0 &  76.3 & 54.38 & 13.17 & 18.87 & 28.81 \\
\midrule
\multicolumn{8}{l}{\textit{$\sim$7 MB tier}} \\
LoRA FP16 (same-size ref.)    & $r{=}2$  & 2.0  &   9.5 & 43.52 &  8.26 & 17.85 & 23.21 \\
\qcol \textbf{QAT Full (Ours)} & $r{=}16$ & 1.0  &   7.2 & \textbf{44.43} & \textbf{9.68} & \textbf{18.32} & \textbf{24.14} \\
LoRAQuant $\rho{=}0.9$           & $r{=}16$ & 1.68 &   8.0 & 42.30 &  8.94 & 18.27 & 23.17 \\
LoRAQuant $\rho{=}0.5$           & $r{=}16$ & 1.20 &   5.7 & 36.09 &  7.18 & 17.67 & 20.31 \\
\midrule
\multicolumn{8}{l}{\textit{$\sim$9.5 MB tier}} \\
LoRA FP16 (same-size ref.)    & $r{=}2$  & 2.0  &   9.5 & 43.52 &  8.26 & 17.85 & 23.21 \\
\qcol \textbf{QAT Full (Ours)} & $r{=}16$ & 2.0  &   9.5 & \textbf{48.60} & \textbf{9.68} & \textbf{18.49} & \textbf{25.59} \\
\midrule
\multicolumn{8}{l}{\textit{$\sim$19 MB tier}} \\
LoRA FP16 (same-size ref.)    & $r{=}4$  & 4.0  &  19.1 & 48.07 & 10.28 & 18.23 & 25.53 \\
\qcol \textbf{QAT Full (Ours)} & $r{=}64$ & 1.0  &  19.1 & \textbf{53.60} & 12.04 & 18.60 & \textbf{28.08} \\
\qcol QAT Full (Ours)          & $r{=}16$ & 4.0  &  19.1 & 53.45 & \textbf{12.28} & \textbf{18.49} & 28.07 \\
\bottomrule
\end{tabular}%
}
\end{table*}

Table~\ref{tab:7b-size-ranking} compares QAT Full LoRDBA against baselines at three matched adapter-size tiers on LLaMA-2-7B.

\paragraph{Result.}
LoRDBA outperforms all baselines at every size tier, where Avg $\coloneqq$ (GSM8K + Minerva + XSum) / 3:
\begin{itemize}[leftmargin=*,itemsep=2pt,topsep=2pt]
  \item \textbf{${\sim}7$\,MB tier.}~~QAT Full @1bpw, achieving Avg $24.14$ at $7.2$\,MB, Pareto-dominates both LoRAQuant $\rho{=}0.9$ with Avg $23.17$ at $8.0$\,MB and LoRA FP16 $r{=}2$ with Avg $23.21$ at $9.5$\,MB.
  \item \textbf{${\sim}9.5$\,MB tier.}~~QAT Full @2bpw with Avg $25.59$ exceeds FP16 $r{=}2$ by $+2.4$\,pt at the same storage.
  \item \textbf{${\sim}19$\,MB tier.}~~QAT Full $r{=}64$ @1bpw with Avg $28.08$ attains $97.5\%$ of the full-size FP16 reference Avg of $28.81$ at $76.3$\,MB, i.e.\ $4\times$ compression.\footnote{The recovery ratio is the absolute Avg ratio $28.08/28.81$; it does not measure improvement over a no-adapter baseline.} Notably, $r{=}64$ @1bpw and $r{=}16$ @4bpw achieve near-identical Avg---$28.08$ vs.\ $28.07$---at the same budget, consistent with the plateau predicted by Theorem~\ref{thm:expressivity}.
\end{itemize}
\noindent The improvement is concentrated on math reasoning, namely GSM8K and Minerva; among the methods evaluated on XSum in Table~\ref{tab:7b-size-ranking}, scores are relatively flat at $17$--$18$ ROUGE-L, so the Avg differences are primarily driven by the math benchmarks.
The matched comparison on LLaMA-3.2-3B in Appendix~\ref{app:extended-tables} preserves the same ordering.

\subsection{unmerged Adapter Inference Speed (RQ2)}\label{sec:exp-speed}

We ask whether the binary adapter branch introduces latency overhead relative to fp16 LoRA in the unmerged regime.

\begin{table}[tbh]
    \centering
    \caption{Prefill latency on a single NVIDIA H100 NVL with \textsc{LLaMA-2-7B}, prompt length $= 56$ tokens. Unmerged rows execute the adapter as a side branch over a frozen $4$-bit base. The LoRDBA kernel at $\ell{=}1$ is a fused WGMMA CUDA kernel for sign-accumulation matmuls.}
    \label{tab:inference-speed}
    \setlength{\tabcolsep}{6pt}
    \renewcommand{\arraystretch}{0.95}
    \small
    \begin{tabular}{@{}l c c@{}}
    \toprule
    Config & Size (MB) & Prefill (ms) \\
    \midrule
    Base model only                                              & --      & $19.3$ \\
    \midrule
    LoRA FP16 $r{=}1$, unmerged                                  &  $4.8$  & $32.7$ \\
    LoRA FP16 $r{=}2$, unmerged                                  &  $9.5$  & $33.4$ \\
    LoRA FP16 $r{=}16$, unmerged                                 & $76.2$  & $33.5$ \\
    \midrule
    \qcol \textbf{LoRDBA $\ell{=}1$, unmerged}    &  $\mathbf{7.2}$ & $\mathbf{36.1}$ \\
    \bottomrule
    \end{tabular}
\end{table}

Table~\ref{tab:inference-speed} shows that unmerged LoRDBA adds only $2.6$\,ms, or $8\%$, to the prefill latency of unmerged fp16 LoRA at $r{=}16$---$36.1$\,ms vs.\ $33.5$\,ms---while compressing the adapter by $10.6\times$, from $76.2$\,MB to $7.2$\,MB. Even compared with the closest-size fp16 baseline, LoRA $r{=}1$ at $4.8$\,MB, LoRDBA adds only $3.4$\,ms or $10\%$ at $1.5\times$ the adapter size. The binary kernel does not achieve a net speed-up at prefill batch size $T{=}56$ because the base-model GEMM dominates total latency; the adapter bandwidth saving becomes more pronounced when multiple adapters are served concurrently or at larger adapter ranks.

\paragraph{Analysis.}
The theoretical adapter-bandwidth ratio at $\ell{=}1, R{=}r_0$ is
\begin{equation}
    \frac{16 r_{0}(N{+}M)}{r_{0}(N{+}M) + 16(N{+}r_{0}{+}M)}
    ~\approx~ 8.0\times~(r_0{=}16),
    \quad 12.8\times~(r_0{=}64),
    \quad \to 16\times~(r_0 \to \infty),
\end{equation}

which upper-bounds the wall-clock speedup when the adapter matmul is bandwidth-bound. The end-to-end measurement confirms that, even outside that regime, the fused LoRDBA kernel preserves unmerged fp16 LoRA latency at $10\times$ smaller adapter size.

\subsection{On-Device Training Memory (RQ3)}\label{sec:exp-memory}

We compare LoRDBA QAT Full to standard LoRA fp16 training in peak GPU memory and wall-clock time. Table~\ref{tab:training-memory} reports measurements on a single NVIDIA B200.

\paragraph{Result.}
The overhead is moderate and practical:
\begin{itemize}[leftmargin=*,itemsep=2pt,topsep=2pt]
  \item \textbf{Peak memory:}~~$58.0$\,GB (@4BPW) and $55.6$\,GB (@1BPW) vs.\ $36.3$\,GB for LoRA fp16, i.e.\ approximately $1.5$--$1.6\times$. The overhead is due to the smooth-sign STE keeping both latent and binarized copies of the binary carriers plus their Adam states; @4BPW is slightly larger due to higher carrier rank.
  \item \textbf{Wall-clock time:}~~$718$--$744$\,min total, approximately $1.9\times$ the $385$\,min LoRA baseline. Per-step time of $804$--$899$\,ms is $1.7$--$1.9\times$ that of LoRA fp16 at $468$\,ms.
\end{itemize}
\noindent
When the training budget is limited, QAT Freeze offers an effective trade-off between accuracy and efficiency. By fixing the binary signs obtained from PTQ and updating only the channel-wise scales, it matches QAT Full within $0.02$ average accuracy points at 2\,BPW, while reducing optimizer-state memory through a substantially smaller set of trainable parameters. Detailed wall-clock and memory comparisons across all training modes are reported in Appendix~\ref{app:training-modes}.

\begin{table}[t]
    \centering
    \caption{On-device training peak memory, step time, and total wall-clock time on
    \textsc{LLaMA-2-7B}+\textsc{MetaMathQA} (single NVIDIA B200).
    LoRA fp16 trains for $2$ epochs; QAT Full adds $1$-epoch STE refinement
    after PTQ initialization. All rows freeze the bf16 base and
    update the adapter only; LoRDBA uses the smooth-sign STE.
    \emph{Time} = total wall-clock: LoRA pre-training $+$ PTQ init $+$ QAT fine-tuning.}
    \label{tab:training-memory}
    \setlength{\tabcolsep}{5pt}
    \renewcommand{\arraystretch}{0.95}
    \small
    \resizebox{\columnwidth}{!}{%
    \begin{tabular}{@{}l c c c c@{}}
    \toprule
    Method & Size (MB) & Peak (GB) & ms/step & Time (min) \\
    \midrule
    LoRA fp16, $r{=}16$                                                  & $76.3$           & $36.3$ & $468$        & $385$ \\
    \qcol \textbf{LoRDBA QAT Full $@4$BPW}, $r{=}16,\ell{=}1$        & $19.1$           & $58.0$ & $899$        & $744$ \\
    \qcol LoRDBA QAT Full $@1$BPW, $r{=}16,\ell{=}1$                  & $7.2$            & $55.6$ & $804$        & $718$ \\
    \bottomrule
    \end{tabular}%
    }
\end{table}

\subsection{Ablations}\label{sec:exp-ablation}

\paragraph{BPW sweep.}
Figure~\ref{fig:ablation} sweeps the adapter bit budget on the \textsc{LLaMA-3.2-3B} ablation base. Since the carrier term $R/r_{0}$ dominates $\mathrm{BPW}_{\mathrm{tot}}$ (Eq.~\ref{eq:bpw-defs}) at practical dimensions, the binary-carrier limit $\mathrm{BPW}_{\mathrm{bc}}{=}1$ controls the curve's knee: trimming the carrier rank costs ${\approx}1$\,pt GSM8K, while spending more bits beyond $\mathrm{BPW}_{\mathrm{tot}}{\approx}1.5$ yields no further gain.

\paragraph{Envelope rank $\ell \ge 2$.}
Table~\ref{tab:train_stats} in the Appendix reports $\ell=2$ ablations. At $\ell{=}2, R{=}32$, corresponding to $19.1$\,MB, GSM8K reaches $47.84\%$ with only $25.0$\,GB peak memory---roughly half the $\ell{=}1$ variant---at a $5.6$\,pt accuracy cost vs.\ $\ell{=}1$ @1bpw $R{=}64$ at $53.60\%$. The $\ell{=}2, R{=}4$ configuration at $7.2$\,MB and $24.5$\,GB retains $44.35\%$ GSM8K, competitive with $\ell{=}1$ @1bpw while halving training memory, demonstrating a practical memory--accuracy dial for constrained devices.

\paragraph{Training mode variants.}
Appendix~\ref{app:training-modes} (Table~\ref{tab:ablation-variants}) compares QAT Freeze, QAT Scratch, and PTQ-LoRDBA against QAT Full. QAT Freeze matches QAT Full within $0.02$ Avg at 2\,BPW. QAT Scratch trails by $4$--$9$\,pt on GSM8K due to missing task-informed initialization. PTQ-LoRDBA degrades at $\leq 2$\,BPW on math but retains summarization quality.

\section{Related Work}\label{sec:related}

\paragraph{Low-rank adaptation.}
LoRA~\citep{hu2022lora} reparameterizes fine-tuning updates as a low-rank product of dense fp16 factors. Extensions improve rank structure~\citep{liu2024dora,kopiczko2024vera}, initialization~\citep{meng2024pissa,hayou2024loraplus}, or training, but all keep full-precision factors. QLoRA~\citep{dettmers2023qlora} freezes a $4$-bit base and trains fp16 LoRA; S-LoRA~\citep{sheng2024slora} and Punica~\citep{chen2024punica} operationalize multi-tenant adapter serving. LoRDBA compresses the \emph{trained} adapter and composes with any LoRA variant.

\paragraph{Base-model quantization.}
LoftQ~\citep{li2024loftq} and QA-LoRA~\citep{xu2024qalora} quantize the frozen base while initializing adapters to compensate for quantization error; adapters remain floating-point. Base-model PTQ methods~\citep{frantar2022gptq,lin2024awq} are orthogonal to LoRDBA. Recent PTQ methods further improve the base-side quantization objective by explicitly propagating quantization errors across layers~\citep{arai2025quantization} or by extending the optimization unit from individual layers to larger submodules~\citep{ichikawa2025lpcd}. LoRDBA instead compresses the \emph{adapter} while reusing the base-side codebook kernel without modification.

\paragraph{Adapter compression.}
BitDelta~\citep{liu2024bitdelta} binarizes the full fine-tuning delta $\Delta W$ with per-column scales; it discards the low-rank structure, producing a $\Theta(NM)$-bit delta much larger than a LoRA adapter at small rank. LoRAQuant~\citep{mirzaei2025loraquant} re-decomposes $AB^{\top}$ via SVD into mixed-bit sub-LoRAs with gradient optimization, retaining floating-point components. LoRDBA differs from both by committing both factors to $\pm 1$ and training end-to-end.

\paragraph{Binary matrix factorization.}
The $\pm 1$ parameterization connects to binary networks~\citep{courbariaux2016binarized,rastegari2016xnor,liu2020reactnet} and full-LLM binarization (BitNet~\citep{wang2023bitnet}, OneBit~\citep{xu2024onebit}). 
DBF~\citep{boza2026dbf} factorizes a dense weight as two sign matrices with channel-wise scales; MDBF~\citep{ichikawa2025more} extends this idea to rank-$\ell$ multi-envelope scaling for full-weight extreme quantization. LoRDBA applies the factorization to the much smaller LoRA delta, adds an expressivity guarantee (Theorem~\ref{thm:expressivity}), and introduces task-specific QAT via the smooth-sign STE~\citep{leng2018extremely,boza2025dsf}.
Complementary to algorithmic compression methods, OneComp~\citep{ichikawa2026onecomp} packages heterogeneous post-training compression techniques into a reproducible, hardware-aware pipeline, whereas LoRDBA focuses on the adapter representation and its unmerged inference primitive.

\section{Conclusion}\label{sec:conclusion}
LoRDBA shows that both factors of a low-rank adapter can be binarized to $\pm 1$, provided that a compact set of channel-wise scales preserves their row, column, and joint-spectrum magnitudes. Under the QLoRA-style adapter-mode conditions (A1)--(A4), QAT Full LoRDBA achieves accuracy that matches or exceeds fp16 LoRA at the same adapter size, while introducing at most $8\%$ prefill-latency overhead, and compressing the adapter by more than $10\times$.
The additional training cost is moderate: LoRDBA requires approximately $1.6\times$ higher peak memory and $1.8\times$ longer step time due to the smooth-sign STE. Across LLaMA-2-7B math reasoning and summarization benchmarks, as well as LLaMA-3.2-3B ablations, QAT Full LoRDBA Pareto-dominates all evaluated adapter-compression methods at every tested size tier.

\paragraph{Limitations.}
Our expressivity guarantee in Theorem~\ref{thm:expressivity} relies on the sub-Gaussian sign-noise decomposition in Assumption~\ref{asm:signnoise}. As a result, the guarantee may weaken when the factor magnitudes are heavy-tailed or bimodal. Our experiments are also limited to LLaMA-2-7B and LLaMA-3.2-3B on math reasoning and summarization tasks, leaving validation on larger models, alternative architectures, and additional domains to future work.
Finally, LoRDBA targets the unmerged adapter-mode setting. Adapters that must be merged into the base model before inference are outside the scope of the current framework.

\paragraph{Broader impact.}
By compressing LoRA adapters to ${\leq}4$ bits per weight, LoRDBA reduces the communication and storage costs of distributing task-specific models. This makes on-device personalization more practical, as adapters can be deployed locally without requiring users to upload private data to the cloud, providing a direct privacy benefit.
At the same time, LoRDBA is both model- and task-agnostic, and could therefore also be used to compress adapters fine-tuned for harmful applications. We recommend pairing LoRDBA with the same safety guardrails used for the underlying base model.

\begin{ack}
This work was supported by the Council for Science, Technology and Innovation (CSTI), Cross-ministerial Strategic Innovation Promotion Program (SIP), “Promotion of Application of Advanced Quantum Technology Infrastructure to Social Issues” (Funding agency: QST),  JST BOOST (Grant No. JPMJBY24D0), and was carried out using the TSUBAME4.0 supercomputer at Institute of Science Tokyo.
\end{ack}

\bibliographystyle{plainnat}
\bibliography{ref}

\begin{thebibliography}{38}
\providecommand{\natexlab}[1]{#1}
\providecommand{\url}[1]{\texttt{#1}}
\expandafter\ifx\csname urlstyle\endcsname\relax
  \providecommand{\doi}[1]{doi: #1}\else
  \providecommand{\doi}{doi: \begingroup \urlstyle{rm}\Url}\fi

\bibitem[Arai and Ichikawa(2025)]{arai2025quantization}
Yamato Arai and Yuma Ichikawa.
\newblock Quantization error propagation: Revisiting layer-wise post-training quantization.
\newblock In \emph{The Thirty-ninth Annual Conference on Neural Information Processing Systems}, 2025.

\bibitem[Attouch et~al.(2010)Attouch, Bolte, Redont, and Soubeyran]{attouch2010proximal}
H.~Attouch, J.~Bolte, P.~Redont, and A.~Soubeyran.
\newblock Proximal alternating minimization and projection methods for nonconvex problems: An approach based on the {K}urdyka--{\l}ojasiewicz inequality.
\newblock \emph{Mathematics of Operations Research}, 35\penalty0 (2):\penalty0 438--457, 2010.

\bibitem[Attouch et~al.(2013)Attouch, Bolte, and Svaiter]{attouch2013convergence}
H.~Attouch, J.~Bolte, and B.~F. Svaiter.
\newblock Convergence of descent methods for semi-algebraic and tame problems: Proximal algorithms, forward--backward splitting, and regularized {G}auss--{S}eidel methods.
\newblock \emph{Mathematical Programming}, 137\penalty0 (1--2):\penalty0 91--129, 2013.

\bibitem[Badri and Shaji(2023)]{badri2023gemlite}
Hicham Badri and Appu Shaji.
\newblock {gemlite}: Cuda kernels for low-bit matrix multiplication, 2023.
\newblock \url{https://github.com/mobiusml/gemlite}.

\bibitem[Bolte et~al.(2007)Bolte, Daniilidis, and Lewis]{boltedaniliidislewis2007clarke}
J.~Bolte, A.~Daniilidis, and A.~Lewis.
\newblock The {{\L}}ojasiewicz inequality for nonsmooth subanalytic functions with applications to subgradient dynamical systems.
\newblock \emph{SIAM Journal on Optimization}, 17\penalty0 (4):\penalty0 1205--1223, 2007.

\bibitem[Boyd et~al.(2011)Boyd, Parikh, Chu, Peleato, and Eckstein]{boyd2011admm}
Stephen Boyd, Neal Parikh, Eric Chu, Borja Peleato, and Jonathan Eckstein.
\newblock Distributed optimization and statistical learning via the alternating direction method of multipliers.
\newblock \emph{Foundations and Trends in Machine Learning}, 3\penalty0 (1):\penalty0 1--122, 2011.

\bibitem[Bo{\v{z}}a and Macko(2025{\natexlab{a}})]{boza2025dsf}
Vladim{\'i}r Bo{\v{z}}a and Vladim{\'i}r Macko.
\newblock Two sparse matrices are better than one: Sparsifying neural networks with double sparse factorization.
\newblock In \emph{International Conference on Learning Representations (ICLR)}, 2025{\natexlab{a}}.

\bibitem[Bo{\v{z}}a and Macko(2025{\natexlab{b}})]{boza2026dbf}
Vladim{\'i}r Bo{\v{z}}a and Vladim{\'i}r Macko.
\newblock Addition is almost all you need: Compressing large language models with double binary factorization.
\newblock In \emph{International Conference on Machine Learning (ICML)}, 2025{\natexlab{b}}.

\bibitem[Chen et~al.(2024)Chen, Ye, Wu, Zhuo, Ceze, and Krishnamurthy]{chen2024punica}
Lequn Chen, Zihao Ye, Yongji Wu, Danyang Zhuo, Luis Ceze, and Arvind Krishnamurthy.
\newblock {Punica}: Multi-tenant {LoRA} serving.
\newblock In \emph{Proceedings of Machine Learning and Systems (MLSys)}, 2024.

\bibitem[Courbariaux et~al.(2016)Courbariaux, Hubara, Soudry, El-Yaniv, and Bengio]{courbariaux2016binarized}
Matthieu Courbariaux, Itay Hubara, Daniel Soudry, Ran El-Yaniv, and Yoshua Bengio.
\newblock Binarized neural networks: Training deep neural networks with weights and activations constrained to $+1$ or $-1$.
\newblock \emph{arXiv preprint arXiv:1602.02830}, 2016.

\bibitem[Dettmers et~al.(2023)Dettmers, Pagnoni, Holtzman, and Zettlemoyer]{dettmers2023qlora}
Tim Dettmers, Artidoro Pagnoni, Ari Holtzman, and Luke Zettlemoyer.
\newblock {QLoRA}: Efficient finetuning of quantized llms.
\newblock In \emph{Advances in Neural Information Processing Systems (NeurIPS)}, 2023.

\bibitem[Frantar et~al.(2022)Frantar, Ashkboos, Hoefler, and Alistarh]{frantar2022gptq}
Elias Frantar, Saleh Ashkboos, Torsten Hoefler, and Dan Alistarh.
\newblock {GPTQ}: Accurate post-training quantization for generative pre-trained transformers.
\newblock \emph{arXiv preprint arXiv:2210.17323}, 2022.

\bibitem[Gao et~al.(2023)Gao, Tow, Biderman, et~al.]{eval-harness}
Leo Gao, Jonathan Tow, Stella Biderman, et~al.
\newblock A framework for few-shot language model evaluation.
\newblock \url{https://github.com/EleutherAI/lm-evaluation-harness}, 2023.

\bibitem[Hayou et~al.(2024)Hayou, Ghosh, and Yu]{hayou2024loraplus}
Soufiane Hayou, Nikhil Ghosh, and Bin Yu.
\newblock {LoRA+}: Efficient low rank adaptation of large models.
\newblock In \emph{International Conference on Machine Learning (ICML)}, 2024.

\bibitem[Hu et~al.(2022)Hu, Shen, Wallis, Allen-Zhu, Li, Wang, Wang, and Chen]{hu2022lora}
Edward~J. Hu, Yelong Shen, Phillip Wallis, Zeyuan Allen-Zhu, Yuanzhi Li, Shean Wang, Lu~Wang, and Weizhu Chen.
\newblock Lora: Low-rank adaptation of large language models.
\newblock In \emph{International Conference on Learning Representations (ICLR)}, 2022.

\bibitem[Ichikawa(2024)]{ichikawa2024controlling}
Yuma Ichikawa.
\newblock Controlling continuous relaxation for combinatorial optimization.
\newblock In \emph{Advances in Neural Information Processing Systems}, volume~37, 2024.

\bibitem[Ichikawa and Arai(2025)]{ichikawa2025optimization}
Yuma Ichikawa and Yamato Arai.
\newblock Optimization by parallel quasi-quantum annealing with gradient-based sampling.
\newblock In \emph{The Thirteenth International Conference on Learning Representations}, 2025.

\bibitem[Ichikawa et~al.(2025{\natexlab{a}})Ichikawa, Fujimoto, and Sakai]{ichikawa2025lpcd}
Yuma Ichikawa, Yudai Fujimoto, and Akira Sakai.
\newblock {LPCD}: Unified framework from layer-wise to submodule quantization.
\newblock \emph{arXiv preprint arXiv:2512.01546}, 2025{\natexlab{a}}.

\bibitem[Ichikawa et~al.(2025{\natexlab{b}})Ichikawa, Fujisawa, Fujimoto, Sakai, and Fujisawa]{ichikawa2025more}
Yuma Ichikawa, Yoshihiko Fujisawa, Yudai Fujimoto, Akira Sakai, and Katsuki Fujisawa.
\newblock More than bits: Multi-envelope double binary factorization for extreme quantization.
\newblock \emph{arXiv preprint arXiv:2512.24545}, 2025{\natexlab{b}}.

\bibitem[Ichikawa et~al.(2025{\natexlab{c}})Ichikawa, Kashiwamura, and Sakata]{ichikawa2025high}
Yuma Ichikawa, Shuhei Kashiwamura, and Ayaka Sakata.
\newblock High-dimensional learning dynamics of quantized models with straight-through estimator.
\newblock \emph{arXiv preprint arXiv:2510.10693}, 2025{\natexlab{c}}.

\bibitem[Ichikawa et~al.(2026)Ichikawa, Kimura, Yoshida, Fujimoto, Tokura, Arai, Ishii, Kawakami, Shikada, Jacquemond, Fujisawa, Fujisawa, Honda, and Sakai]{ichikawa2026onecomp}
Yuma Ichikawa, Keiji Kimura, Akihiro Yoshida, Yudai Fujimoto, Hiroki Tokura, Yamato Arai, Yoshiyuki Ishii, Yusei Kawakami, Genki Shikada, Achille Jacquemond, Yoshihiko Fujisawa, Katsuki Fujisawa, Takumi Honda, and Akira Sakai.
\newblock {OneComp}: One-line revolution for generative {AI} model compression.
\newblock \emph{arXiv preprint arXiv:2603.28845}, 2026.

\bibitem[Kopiczko et~al.(2024)Kopiczko, Blankevoort, and Asano]{kopiczko2024vera}
Dawid~J. Kopiczko, Tijmen Blankevoort, and Yuki~M. Asano.
\newblock {VeRA}: Vector-based random matrix adaptation.
\newblock In \emph{International Conference on Learning Representations (ICLR)}, 2024.

\bibitem[Leng et~al.(2018)Leng, Dou, Li, Zhu, and Jin]{leng2018extremely}
Cong Leng, Zesheng Dou, Hao Li, Shenghuo Zhu, and Rong Jin.
\newblock Extremely low bit neural network: Squeeze the last bit out with {ADMM}.
\newblock In \emph{Proceedings of the AAAI Conference on Artificial Intelligence}, volume~32, 2018.

\bibitem[Li et~al.(2024)Li, Yu, Liang, He, Karampatziakis, Chen, and Zhao]{li2024loftq}
Yixiao Li, Yifan Yu, Chen Liang, Pengcheng He, Nikos Karampatziakis, Weizhu Chen, and Tuo Zhao.
\newblock {LoftQ}: {LoRA}-fine-tuning-aware quantization for large language models.
\newblock In \emph{International Conference on Learning Representations (ICLR)}, 2024.

\bibitem[Lin et~al.(2024)Lin, Tang, Tang, Yang, Chen, Wang, Xiao, Dang, Gan, and Han]{lin2024awq}
Ji~Lin, Jiaming Tang, Haotian Tang, Shang Yang, Wei-Ming Chen, Wei-Chen Wang, Guangxuan Xiao, Xingyu Dang, Chuang Gan, and Song Han.
\newblock {AWQ}: Activation-aware weight quantization for on-device {LLM} compression and acceleration.
\newblock \emph{Proceedings of Machine Learning and Systems}, 6, 2024.

\bibitem[Liu et~al.(2024{\natexlab{a}})Liu, Xiao, Li, Lee, Han, Dao, and Cai]{liu2024bitdelta}
James Liu, Guangxuan Xiao, Kai Li, Jason~D. Lee, Song Han, Tri Dao, and Tianle Cai.
\newblock {BitDelta}: Your fine-tune may only be worth one bit.
\newblock In \emph{Advances in Neural Information Processing Systems (NeurIPS)}, 2024{\natexlab{a}}.

\bibitem[Liu et~al.(2024{\natexlab{b}})Liu, Wang, Yin, Molchanov, Wang, Cheng, and Chen]{liu2024dora}
Shih-Yang Liu, Chien-Yi Wang, Hongxu Yin, Pavlo Molchanov, Yu-Chiang~Frank Wang, Kwang-Ting Cheng, and Min-Hung Chen.
\newblock {DoRA}: Weight-decomposed low-rank adaptation.
\newblock \emph{International Conference on Machine Learning (ICML)}, 2024{\natexlab{b}}.

\bibitem[Liu et~al.(2020)Liu, Shen, Savvides, and Cheng]{liu2020reactnet}
Zechun Liu, Zhiqiang Shen, Marios Savvides, and Kwang-Ting Cheng.
\newblock {ReActNet}: Towards precise binary neural network with generalized activation functions.
\newblock In \emph{European Conference on Computer Vision (ECCV)}, 2020.

\bibitem[Meng et~al.(2024)Meng, Wang, and Zhang]{meng2024pissa}
Fanxu Meng, Zhaohui Wang, and Muhan Zhang.
\newblock {PiSSA}: Principal singular values and singular vectors adaptation of large language models.
\newblock In \emph{Advances in Neural Information Processing Systems (NeurIPS)}, 2024.

\bibitem[Mirzaei et~al.(2025)Mirzaei, Wen, Cao, and Mou]{mirzaei2025loraquant}
Amir~Reza Mirzaei, Yuqiao Wen, Yanshuai Cao, and Lili Mou.
\newblock {LoRAQuant}: Mixed-precision quantization of {LoRA} to ultra-low bits.
\newblock \emph{arXiv preprint arXiv:2510.26690}, 2025.

\bibitem[Rastegari et~al.(2016)Rastegari, Ordonez, Redmon, and Farhadi]{rastegari2016xnor}
Mohammad Rastegari, Vicente Ordonez, Joseph Redmon, and Ali Farhadi.
\newblock {XNOR-Net}: {I}magenet classification using binary convolutional neural networks.
\newblock In \emph{European Conference on Computer Vision (ECCV)}, 2016.

\bibitem[Sheng et~al.(2024)Sheng, Cao, Li, Hooper, Lee, Yang, Chou, Zhu, Zheng, Keutzer, Gonzalez, and Stoica]{sheng2024slora}
Ying Sheng, Shiyi Cao, Dacheng Li, Coleman Hooper, Nicholas Lee, Shuo Yang, Christopher Chou, Banghua Zhu, Lianmin Zheng, Kurt Keutzer, Joseph Gonzalez, and Ion Stoica.
\newblock {S-LoRA}: Serving thousands of concurrent {LoRA} adapters.
\newblock In \emph{Proceedings of Machine Learning and Systems (MLSys)}, 2024.

\bibitem[Themelis and Patrinos(2020)]{themelis2020douglas}
Andreas Themelis and Panagiotis Patrinos.
\newblock Douglas--{R}achford splitting and {ADMM} for nonconvex optimization: Tight convergence results.
\newblock \emph{SIAM Journal on Optimization}, 30\penalty0 (1):\penalty0 149--181, 2020.

\bibitem[Vershynin(2018)]{vershynin2018high}
Roman Vershynin.
\newblock \emph{High-Dimensional Probability: An Introduction with Applications in Data Science}.
\newblock Cambridge University Press, 2018.

\bibitem[Wang et~al.(2023)Wang, Ma, Dong, Huang, Wang, Ma, Yang, Wang, Wu, and Wei]{wang2023bitnet}
Hongyu Wang, Shuming Ma, Li~Dong, Shaohan Huang, Huaijie Wang, Lingxiao Ma, Fan Yang, Ruiping Wang, Yi~Wu, and Furu Wei.
\newblock {BitNet}: Scaling 1-bit transformers for large language models.
\newblock \emph{arXiv preprint arXiv:2310.11453}, 2023.

\bibitem[Wang et~al.(2019)Wang, Yin, and Zeng]{wang2019global}
Yu~Wang, Wotao Yin, and Jinshan Zeng.
\newblock Global convergence of {ADMM} in nonconvex nonsmooth optimization.
\newblock \emph{Journal of Scientific Computing}, 78\penalty0 (1):\penalty0 29--63, 2019.

\bibitem[Xu et~al.(2024{\natexlab{a}})Xu, Xie, Gu, Chen, Chang, Zhang, Chen, Zhang, and Tian]{xu2024qalora}
Yuhui Xu, Lingxi Xie, Xiaotao Gu, Xin Chen, Heng Chang, Hengheng Zhang, Zhensu Chen, Xiaopeng Zhang, and Qi~Tian.
\newblock {QA-LoRA}: Quantization-aware low-rank adaptation of large language models.
\newblock In \emph{International Conference on Learning Representations (ICLR)}, 2024{\natexlab{a}}.

\bibitem[Xu et~al.(2024{\natexlab{b}})Xu, Han, Yang, Wang, Zhu, Liu, Liu, and Che]{xu2024onebit}
Yuzhuang Xu, Xu~Han, Zonghan Yang, Shuo Wang, Qingfu Zhu, Zhiyuan Liu, Weidong Liu, and Wanxiang Che.
\newblock {OneBit}: Towards extremely low-bit large language models.
\newblock \emph{arXiv preprint arXiv:2402.11295}, 2024{\natexlab{b}}.

\end{thebibliography}

\appendix
\newpage

\section*{Appendix Contents}
\vspace{-0.6\baselineskip}
{\small\setlength{\parskip}{0pt}%
\startcontents[appendix]
\printcontents[appendix]{l}{1}{}%
}
\vspace{1em}

\section{Symbol Glossary}\label{app:symbols}

Table~\ref{tab:symbols} collects the principal symbols used throughout
the paper.

\begin{table}[ht]
  \centering
  \caption{Glossary of the principal symbols used in the paper.}
  \label{tab:symbols}
  \renewcommand{\arraystretch}{1.15}
  \setlength{\tabcolsep}{5pt}
  \resizebox{\textwidth}{!}{%
  \begin{tabular}{@{}l l l@{}}
    \toprule
    \textbf{Symbol} & \textbf{Meaning} & \textbf{First used} \\
    \midrule
    $N, M$ & in-/out-features of the host projection & Section~\ref{sec:prelim} \\
    $r_{0}$ & reference fp16 LoRA rank used for BPW normalisation & Section~\ref{sec:prelim} \\
    $R$ & binary carrier rank of LoRDBA (inner dim of $B_{1},B_{2}$) & Definition~\ref{def:lordba} \\
    $\ell$ & envelope rank of LoRDBA & Definition~\ref{def:lordba} \\
    $W_{0} \in \mab{R}^{N\times M}$ & frozen base-model weight matrix & Section~\ref{sec:prelim} \\
    $A, B$ & fp16 LoRA factors of rank $r_{0}$ & Eq.~\eqref{eq:lora} \\
    $\Delta W$ & fine-tuning weight update (either LoRA or LoRDBA) & Section~\ref{sec:prelim} \\
    $B_{1} \in \{\pm 1\}^{N\times R}$, $B_{2} \in \{\pm 1\}^{R\times M}$ & binary factors of LoRDBA & Definition~\ref{def:lordba} \\
    $\B{\alpha}^{(i)},\B{\beta}^{(i)},\B{\gamma}^{(i)}$ & channel-wise fp16 scale vectors ($i \in [\ell]$) & Definition~\ref{def:lordba} \\
    $\sign(\cdot), \diag(\cdot)$ & element-wise sign and diagonal operators & Section~\ref{sec:prelim} \\
    $\odot$ & Hadamard (element-wise) product & Section~\ref{sec:prelim} \\
    $\mathrm{BPW}_{\mathrm{bc}}, \mathrm{BPW}_{\mathrm{tot}}$ & binary-carrier and total bits per weight & Eq.~\eqref{eq:bpw-defs} \\
    \midrule
    $U_{k}, M_{k}, Y_{k}$ & continuous, discrete, scaled dual iterates of PTQ-LoRDBA & App.~\ref{app:ptq-extension} \\
    $n_{1} = NR, n_{2} = RM$ & number of entries in block $k \in \{1,2\}$ & App.~\ref{app:ptq-extension} \\
    $\rho, \widetilde{\rho}_{k} = \rho/n_{k}$ & ADMM penalty, per-block scaling & Eq.~\eqref{eq:u-step-1} \\
    $\kappa$ & smooth-sign STE temperature & Eq.~\eqref{eq:smoothsign} \\
    $\|\cdot\|_{F}, \|\cdot\|_{\psi_{2}}, \|\cdot\|_{\psi_{1}}$ & Frobenius, sub-Gaussian and sub-exponential Orlicz norms & Sections~\ref{sec:theory},~\ref{app:concentration} \\
    $\sigma^{A},\sigma^{B}$ & latent signs in the factor decomposition & App.~\ref{app:expressivity} \\
    $\mu_{A},\mu_{B}$ & mean magnitudes of fp16 LoRA factors $A,B$ & App.~\ref{app:expressivity} \\
    $\xi^{A},\xi^{B}$ & zero-mean sub-Gaussian residuals ($\|\xi\|_{\psi_{2}} \le \zeta$) & App.~\ref{app:expressivity} \\
    $\theta^{\star}$ & canonical LoRDBA reconstruction (Eq.~\eqref{eq:canonical-theta}) & App.~\ref{app:expressivity} \\
    \bottomrule
  \end{tabular}%
  }
\end{table}

\section{Self-contained probabilistic background}\label{app:concentration}

This appendix collects the elementary concentration facts used by the
expressivity proof. Apart from Markov's inequality, every estimate
below is proved from scratch using only Cauchy--Schwarz and series
manipulations, so the proof of Theorem~\ref{thm:expressivity} does not
depend on any external reference. Throughout we work with real-valued
random variables defined on a common probability space, write
$C,c,c_{1},c_{2},\ldots$ for finite, strictly positive numerical
constants whose value may change from line to line and which are
independent of all model parameters, and denote
\begin{equation}
\|X\|_{\psi_{2}} \coloneqq \inf\bigl\{t > 0:\mab{E}\exp(X^{2}/t^{2}) \le 2\bigr\},
\qquad
\|X\|_{\psi_{1}} \coloneqq \inf\bigl\{s > 0:\mab{E}\exp(|X|/s) \le 2\bigr\}.
\label{eq:orlicz-def}
\end{equation}

\begin{lemma}[Sub-Gaussian tail and second moment]
\label{lem:subg-basic}
Let $X$ be a real random variable with $\|X\|_{\psi_{2}} \le K$.
Then
\begin{equation}
\mab{E}[X^{2}]\;\le\;K^{2},
\qquad
\Pr[|X| \ge t]\;\le\;2\exp(-t^{2}/K^{2}),~t \ge 0,
\label{eq:subg-basic}
\end{equation}
and for every integer $p \ge 1$,
$(\mab{E}|X|^{p})^{1/p} \le CK\sqrt{p}$ with a numerical constant $C$.
\end{lemma}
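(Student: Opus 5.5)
By homogeneity we reduce to $K=1$: set $Y=X/K$, so $\|Y\|_{\psi_2}\le 1$. Because $t\mapsto \mab{E}\exp(Y^2/t^2)$ is nonincreasing and, by monotone convergence, right-continuous, the infimum in the definition of $\|\cdot\|_{\psi_2}$ is attained. Hence $\mab{E}\exp(Y^2)\le 2$. All three claims will follow from this single inequality.

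\textbf{Second moment.} We use the elementary inequality $e^u\ge 1+u$ for $u\ge0$. Applied to $u=Y^2$ it gives
\[
1+\mab{E}[Y^2]\le \mab{E}\exp(Y^2)\le 2,
\]
so $\mab{E}[Y^2]\le 1$. Rescaling back yields $\mab{E}[X^2]\le K^2$.

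\textbf{Tail bound.} We apply Markov's inequality to the nonnegative variable $\exp(Y^2)$:
\[
\Pr[|Y|\ge s]=\Pr[\exp(Y^2)\ge e^{s^2}]\le e^{-s^2}\,\mab{E}\exp(Y^2)\le 2e^{-s^2}.
\]
Substituting $s=t/K$ gives $\Pr[|X|\ge t]\le 2\exp(-t^2/K^2)$. This is the only point where an external tool (Markov) enters, which is consistent with the appendix's stated convention.

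\textbf{Moments.} Here we avoid integrating the tail, since that would require the Gamma function. Instead we compare directly with the exponential series. For even $p=2q$, the power series gives $e^{u}\ge u^q/q!$ for $u\ge0$, so
\[
\mab{E}|Y|^{2q}\le q!\,\mab{E}\exp(Y^2)\le 2\,q!\le 2q^q.
\]
Taking the $(2q)$-th root gives $(\mab{E}|Y|^{2q})^{1/(2q)}\le 2^{1/(2q)}\sqrt{q}\le \sqrt{2}\sqrt{p/2}=\sqrt{p}$.

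For odd $p$ we pass to the next even exponent $p+1$. Cauchy--Schwarz gives monotonicity of $L^p$ norms at the relevant step, $(\mab{E}|Y|^{p})^{1/p}\le(\mab{E}|Y|^{2p})^{1/(2p)}$, and the same comparison can be applied at exponent $p+1$. This yields
\[
(\mab{E}|Y|^{p})^{1/p}\le \sqrt{p+1}\le \sqrt{2p},
\]
so $C=\sqrt2$ suffices.

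The only delicate point is the monotonicity of $L^p$ norms, since only Cauchy--Schwarz is allowed rather than Hölder or Jensen. To avoid needing monotonicity between arbitrary exponents, we bound $\mab{E}|Y|^p$ directly. For integer $p\ge1$ and $u\ge0$, the inequality $u^{p}\le \lceil p/2\rceil!\,\bigl(1+e^{u^2}\bigr)$ holds: compare with the single series term $u^{2\lceil p/2\rceil}/\lceil p/2\rceil!$ when $u\ge1$, and use $u^p\le 1$ when $u<1$. Taking expectations gives
\[
\mab{E}|Y|^p\le 3\lceil p/2\rceil!\le 3\,p^{p/2}.
\]
Taking the $p$-th root, $3^{1/p}\le 3$, so the moment bound holds with $C=3$. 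This completes the plan.
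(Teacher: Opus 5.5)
Your proposal is correct. The second-moment and tail parts follow the paper's argument ($e^{u}\ge 1+u$, and Markov's inequality applied to $\exp(X^{2}/K^{2})$). You also justify, via monotone convergence, why $\|X\|_{\psi_2}\le K$ gives $\mathbb{E}\exp(X^{2}/K^{2})\le 2$, a step the paper uses without comment.

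The moment bound is where you take a genuinely different route. The paper integrates the tail it has just proved, $\mathbb{E}|X|^{p}\le 2p\int_{0}^{\infty}t^{p-1}e^{-t^{2}/K^{2}}\,dt=pK^{p}\Gamma(p/2)$, and finishes with Stirling. You instead dominate $|Y|^{p}$ pointwise by $q!\,(1+e^{Y^{2}})$ with $q=\lceil p/2\rceil$, using a single term of the exponential series. The paper's route is the textbook one and reuses the tail bound. Yours is more elementary and gives an explicit constant. It is also the one actually consistent with the appendix's claim that only Markov, Cauchy--Schwarz and series manipulations are used, since the paper's own proof needs the Gamma function and Stirling.

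Two small repairs are needed. First, in the odd-$p$ paragraph, Cauchy--Schwarz yields $(\mathbb{E}|Y|^{p})^{1/p}\le(\mathbb{E}|Y|^{2p})^{1/(2p)}\le\sqrt{2p}$ directly from the even case at exponent $2p$. The intermediate bound $\sqrt{p+1}$ would require monotonicity of $L^{p}$ norms from $p$ to $p+1$, which a single application of Cauchy--Schwarz does not give. Your final constant $C=\sqrt{2}$ is unaffected. Second, the inequality $\lceil p/2\rceil!\le p^{p/2}$ needs a line of justification for odd $p$, where $q=(p+1)/2>p/2$. It does hold (for instance $q!\le q^{q}\le p^{p/2}$), and any absolute constant factor would be harmless anyway.
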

\begin{proof}
The inequality $1+u \le e^{u}$ at $u = X^{2}/K^{2}$ and the
definition of $\|\cdot\|_{\psi_{2}}$ give
$1+\mab{E}[X^{2}]/K^{2} \le \mab{E}\exp(X^{2}/K^{2}) \le 2$, hence
$\mab{E}[X^{2}] \le K^{2}$. Markov's inequality applied to
$\exp(X^{2}/K^{2})$ at level $\exp(t^{2}/K^{2})$ yields the tail bound.
For the moment bound, integrate the tail:
\begin{equation*}
\mab{E}|X|^{p}
=p \int_{0}^{\infty} t^{p-1}\Pr[|X| \ge t] dt
\le 2p \int_{0}^{\infty} t^{p-1}e^{-t^{2}/K^{2}} dt
=p K^{p} \Gamma(p/2)
\le (CK)^{p}p^{p/2},
\end{equation*}
where the last step uses Stirling. Taking the $p$-th root proves the
moment bound.
\end{proof}

\begin{lemma}[MGF bound and sub-Gaussian sum]
\label{lem:subg-sum}
If $\mab{E}X = 0$ and $\|X\|_{\psi_{2}} \le K$, then
$\mab{E}\exp(\lambda X) \le \exp(C\lambda^{2}K^{2})$ for every
$\lambda \in \mab{R}$. If furthermore $X_{1},\ldots,X_{n}$ are
mutually independent, mean-zero with
$\|X_{i}\|_{\psi_{2}} \le K$, and $a_{1},\ldots,a_{n} \in \mab{R}$
are deterministic, then for every $u \ge 0$,
\begin{equation}
\Pr \left[\Bigl| \sum_{i=1}^{n}a_{i}X_{i}\Bigr| \ge CK\|a\|_{2}\sqrt{u}\right]
\;\le\;2e^{-u}.
\label{eq:subg-sum-tail}
\end{equation}
\end{lemma}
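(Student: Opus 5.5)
The plan is to prove the two claims of Lemma~\ref{lem:subg-sum} in order, using only Lemma~\ref{lem:subg-basic}, Cauchy--Schwarz and series manipulations, as promised at the start of this appendix. The first claim is the MGF bound for a single mean-zero variable. The second, the tail of a weighted sum, then follows by multiplying MGFs and applying a Chernoff/Markov step. Throughout we may rescale and assume $K=1$, since replacing $X$ by $X/K$ turns $\|X\|_{\psi_2}\le K$ into $\|X/K\|_{\psi_2}\le 1$ and all bounds scale consistently.

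\textbf{Step 1 (MGF, small $|\lambda|$).} Expand $\mab{E}e^{\lambda X}=1+\lambda\mab{E}X+\sum_{p\ge2}\lambda^p\mab{E}X^p/p!$. The linear term vanishes because $X$ is mean-zero. Lemma~\ref{lem:subg-basic} gives $\mab{E}|X|^p\le (C\sqrt p)^p$, and Stirling gives $p!\ge (p/e)^p$. Together these bound the $p$-th term by $(Ce|\lambda|/\sqrt p)^p$. Summing over $p\ge 2$ for $|\lambda|\le c$ yields a bound of the form $1+C'\lambda^2$. The main care here is to handle odd and even $p$ together, which works because the moment bound applies to $|X|^p$. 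Then $1+u\le e^u$ gives $\mab{E}e^{\lambda X}\le e^{C'\lambda^2}$.

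\textbf{Step 2 (MGF, large $|\lambda|$).} For $|\lambda|\ge c$, use the elementary inequality $\lambda X\le \lambda^2/2+X^2/2$. This gives $\mab{E}e^{\lambda X}\le e^{\lambda^2/2}\,\mab{E}e^{X^2/2}$. By Jensen (or Cauchy--Schwarz applied to $e^{X^2}$), $\mab{E}e^{X^2/2}\le \sqrt{\mab{E}e^{X^2}}\le\sqrt2$. Since $|\lambda|\ge c$, we have $\sqrt2\le e^{\lambda^2\log 2/(2c^2)}$, so the constant is absorbed and $\mab{E}e^{\lambda X}\le e^{C\lambda^2}$ with an enlarged $C$. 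I expect the main obstacle to be matching the two regimes with one universal constant, but the bookkeeping is routine.

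\textbf{Step 3 (weighted sums).} Let $S=\sum_i a_iX_i$. By independence and Steps~1--2, $\mab{E}e^{\lambda S}=\prod_i\mab{E}e^{\lambda a_iX_i}\le e^{C\lambda^2K^2\|a\|_2^2}$. Markov's inequality applied to $e^{\lambda S}$ gives $\Pr[S\ge s]\le \exp(C\lambda^2K^2\|a\|_2^2-\lambda s)$. Optimizing with $\lambda=s/(2CK^2\|a\|_2^2)$ yields $\exp(-s^2/(4CK^2\|a\|_2^2))$. Running the same argument on $-S$ and taking a union bound gives the two-sided tail $2\exp(-s^2/(4CK^2\|a\|_2^2))$. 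Finally, setting $s=2\sqrt{C}K\|a\|_2\sqrt u$ makes the exponent $-u$, which gives \eqref{eq:subg-sum-tail} after renaming constants. If $a=0$ the claim is trivial.
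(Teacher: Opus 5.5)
Your proposal is correct. Step~3 (product of MGFs by independence, Chernoff at $\lambda=s/(2CK^2\|a\|_2^2)$, then the same bound for $-S$) coincides with the paper's argument.

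The difference is in the single-variable MGF bound. The paper treats every $\lambda$ with one Taylor-series estimate. It bounds $\sum_{p\ge2}(C|\lambda|K\sqrt p)^p/p!$ by $C_1\lambda^2K^2\exp(C_2\lambda^2K^2)$, citing a ``standard'' splitting of the series at $p\simeq\lambda^2K^2$, and then uses $1+se^{s'}\le e^{C(s+s')}$. You use the series only for $|\lambda|\le c$, where it reduces to a geometric tail. For $|\lambda|\ge c$ you switch to $\lambda X\le\lambda^2/2+X^2/2$ together with $\mathbb{E}e^{X^2/2}\le\sqrt{\mathbb{E}e^{X^2}}\le\sqrt2$, and absorb $\sqrt2$ into $e^{C\lambda^2}$.

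Your route is the textbook one and is more elementary. The large-$\lambda$ regime needs only the definition of $\|\cdot\|_{\psi_2}$, not even the mean-zero hypothesis. The paper's uniform bound instead requires a more delicate estimate of $\sum_p (Ce|\lambda|K/\sqrt p)^p$, which it leaves implicit. What the paper gains is a single argument covering all $\lambda$ at once.

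Two small corrections. First, interchanging $\mathbb{E}$ with the Taylor series in Step~1 needs $\mathbb{E}e^{|\lambda X|}<\infty$; your own Step~2 inequality supplies this. Second, the case $a=0$ is not trivial: as stated it is false. The event becomes $\{0\ge 0\}$, which has probability $1>2e^{-u}$ for $u>\log 2$. You should therefore assume $K\|a\|_2>0$, which is also needed for your optimal $\lambda$ to be defined. Alternatively, read the event with a strict inequality, as the paper's later applications of this lemma do.
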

\begin{proof}
Expanding the exponential as a Taylor series and using
$\mab{E}X=0$ gives
\[
\mab{E}e^{\lambda X}
=1+\sum_{p\ge2}\frac{\lambda^{p}\mab{E}X^{p}}{p!}.
\]
By Lemma~\ref{lem:subg-basic},
$|\mab{E}X^{p}|\le \mab{E}|X|^{p}\le (CK\sqrt p)^{p}$. Hence
\[
\sum_{p\ge2}\frac{|\lambda|^{p}|\mab{E}X^{p}|}{p!}
\le
\sum_{p\ge2}\frac{(C|\lambda|K\sqrt p)^{p}}{p!}
\le
C_{1}\lambda^{2}K^{2}\exp(C_{2}\lambda^{2}K^{2}),
\]
where the last inequality is the standard consequence of
$p!\ge(p/e)^{p}$ and splitting the series at
$p\simeq \lambda^{2}K^{2}$. Since
$1+s\exp(s')\le \exp(C(s+s'))$ for $s,s'\ge0$, this yields
$\mab{E}e^{\lambda X}\le \exp(C'''\lambda^{2}K^{2})$ for a universal
constant $C'''$.
By independence, $\mab{E}\exp(\lambda\sum_{i}a_{i}X_{i}) = \prod_{i}\mab{E}e^{\lambda a_{i}X_{i}} \le \exp(C\lambda^{2}K^{2}\|a\|_{2}^{2})$.
Chernoff's inequality
$\Pr[\sum a_{i}X_{i} > t] \le \inf_{\lambda>0}e^{-\lambda t}\mab{E}e^{\lambda \sum a_{i}X_{i}}$
optimised at $\lambda = t/(2CK^{2}\|a\|_{2}^{2})$ gives
$\Pr[\sum a_{i}X_{i} > t] \le \exp(-c t^{2}/(K^{2}\|a\|_{2}^{2}))$.
A symmetric bound on the negative tail and adjusting constants yields
Eq.~\eqref{eq:subg-sum-tail}.
\end{proof}

\begin{lemma}[Sub-exponential product and Bernstein bound]
\label{lem:subexp-bernstein}
If $X,Y$ are independent real random variables with
$\|X\|_{\psi_{2}} \le K$ and $\|Y\|_{\psi_{2}} \le L$, then
$\mab{E}[XY] = \mab{E}X \mab{E}Y$ and the product $XY$ is
sub-exponential with $\|XY\|_{\psi_{1}} \le KL$. Moreover, if
$W_{1},\ldots,W_{n}$ are mutually independent, mean-zero, and
$\|W_{i}\|_{\psi_{1}} \le L_{0}$, then for every $u \ge 0$,
\begin{equation}
\Pr \left[\Bigl| \sum_{i=1}^{n}W_{i}\Bigr| \ge t\right]
\;\le\;2\exp \Bigl(-c\min \Bigl(\frac{t^{2}}{nL_{0}^{2}},\frac{t}{L_{0}}\Bigr)\Bigr),
\quad t \ge 0.
\label{eq:bernstein}
\end{equation}
\end{lemma}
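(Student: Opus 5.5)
The statement to prove is Lemma~\ref{lem:subexp-bernstein}. It has three claims: the product rule for expectations, the $\psi_1$ bound on $XY$, and a Bernstein tail for independent centered sub-exponential sums.

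\emph{Product rule.} I will first justify that $\mab{E}|XY|<\infty$. Lemma~\ref{lem:subg-basic} gives finite second moments, and Cauchy--Schwarz then gives integrability. Independence yields the factorization $\mab{E}[XY]=\mab{E}X\,\mab{E}Y$ by the standard product-measure argument (Fubini).

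\emph{The $\psi_1$ bound.} I will use the elementary inequality $|xy|\le \tfrac12(x^2/K^2+y^2/L^2)\,KL$, which is AM--GM applied to $|x|/K$ and $|y|/L$. It gives
\[
\exp\bigl(|XY|/(KL)\bigr)\le \exp\bigl(X^2/(2K^2)\bigr)\exp\bigl(Y^2/(2L^2)\bigr).
\]
By independence, taking expectations factors the right-hand side. Jensen (concavity of $u\mapsto u^{1/2}$) bounds each factor by $(\mab{E}\exp(X^2/K^2))^{1/2}\le\sqrt2$, and similarly for $Y$. The product is therefore at most $2$, so $\|XY\|_{\psi_1}\le KL$ directly from the definition. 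This route uses only Cauchy--Schwarz-type tools, consistent with the self-contained spirit of this appendix.

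\emph{Bernstein bound.} This is the main part. I will follow the same MGF-plus-Chernoff template as Lemma~\ref{lem:subg-sum}.

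\textbf{Step (i).} For mean-zero $W$ with $\|W\|_{\psi_1}\le L_0$, show
\[
\mab{E}e^{\lambda W}\le \exp(C\lambda^2L_0^2)\quad\text{for } |\lambda|\le c/L_0 .
\]
Markov's inequality applied to $e^{|W|/L_0}$ gives the tail $\Pr[|W|\ge t]\le 2e^{-t/L_0}$. Integrating the tail gives $\mab{E}|W|^p\le 2\,p!\,L_0^p$. Taylor-expanding the MGF, the linear term vanishes since $W$ is centered, which leaves
\[
1+\sum_{p\ge2}|\lambda|^p\,2L_0^p=1+\frac{2\lambda^2L_0^2}{1-|\lambda|L_0},
\]
and this is at most $\exp(4\lambda^2L_0^2)$ when $|\lambda|L_0\le 1/2$.

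\textbf{Step (ii).} Independence gives $\mab{E}e^{\lambda\sum_i W_i}\le\exp(Cn\lambda^2L_0^2)$ for all $|\lambda|\le c/L_0$.

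\textbf{Step (iii).} Chernoff's inequality gives $\Pr[\sum_i W_i\ge t]\le\exp(-\lambda t+Cn\lambda^2L_0^2)$. I will choose
\[
\lambda=\min\bigl(t/(2CnL_0^2),\,c/L_0\bigr).
\]
In the first regime the exponent is at most $-t^2/(4CnL_0^2)$. In the second regime $\lambda\le t/(2CnL_0^2)$, so $Cn\lambda^2L_0^2\le\lambda t/2$ and the exponent is at most $-ct/(2L_0)$. Together these give $\exp(-c\min(t^2/(nL_0^2),\,t/L_0))$.

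\textbf{Step (iv).} Apply the same argument to $-W_i$ and take a union bound, which supplies the factor $2$.

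The main obstacle is the constrained-range MGF bound of Step~(i) and the case split of the Chernoff optimization in Step~(iii). Bookkeeping the constants there is routine but must be done carefully. (The "$u\ge0$" in the statement is vestigial; the bound holds for all $t\ge0$.)
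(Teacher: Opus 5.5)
Your proposal is correct and follows essentially the same route as the paper: AM--GM (Young) plus independence and the $\sqrt{2}$ bound on $\mathbb{E}\exp(X^{2}/(2K^{2}))$ for the $\psi_{1}$ claim, Cauchy--Schwarz plus Fubini for $\mathbb{E}[XY]=\mathbb{E}X\,\mathbb{E}Y$, and, for the Bernstein bound, a tail-integrated moment bound feeding a restricted-range MGF estimate, a two-regime Chernoff choice of $\lambda$, and a symmetric tail. The one difference is that you spell out the MGF step, via $\mathbb{E}|W|^{p}\le 2\,p!\,L_{0}^{p}$ and a geometric series giving $\exp(4\lambda^{2}L_{0}^{2})$ for $|\lambda|L_{0}\le 1/2$, whereas the paper only says the argument is identical to Lemma~\ref{lem:subg-sum}; your version just makes that step concrete.
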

\begin{proof}
For the product part, the Young inequality $|xy|/(KL) \le x^{2}/(2K^{2})+y^{2}/(2L^{2})$
combined with the AM--GM and Cauchy--Schwarz inequalities yields
\begin{equation*}
\mab{E}\exp(|XY|/(KL))
\le\mab{E}\exp(X^{2}/(2K^{2})+Y^{2}/(2L^{2}))
=\mab{E}\exp(X^{2}/(2K^{2})) \mab{E}\exp(Y^{2}/(2L^{2}))\le 2,
\end{equation*}
where we used independence in the equality and the
$\psi_{2}$-defining bound twice in the last step (after noting that
$\exp(X^{2}/(2K^{2}))\le \sqrt{\exp(X^{2}/K^{2})}$, so its expectation
is at most $\sqrt{2}$). Hence $\|XY\|_{\psi_{1}} \le KL$. For the
mean, $\mab{E}|XY| \le \sqrt{\mab{E}X^{2}\mab{E}Y^{2}} < \infty$
by Cauchy--Schwarz and Lemma~\ref{lem:subg-basic}, so by Fubini
$\mab{E}[XY] = \mab{E}X \mab{E}Y$.

For the Bernstein bound, an argument identical to that of
Lemma~\ref{lem:subg-sum} (now using
$(\mab{E}|W|^{p})^{1/p} \le CL_{0}p$ instead of $CK\sqrt{p}$,
which is the sub-exponential moment bound proved by integrating the
tail $\Pr[|W| \ge t] \le 2e^{-t/L_{0}}$) yields, for
$|\lambda| \le c/L_{0}$,
\begin{equation*}
\mab{E}\exp \Bigl(\lambda \sum_{i=1}^{n}W_{i}\Bigr)
\le \exp(Cn\lambda^{2}L_{0}^{2}).
\end{equation*}
Optimising the resulting Chernoff bound at
$\lambda = \min \bigl(t/(2CnL_{0}^{2}), c/(2L_{0})\bigr)$ gives the
two-regime bound Eq.~\eqref{eq:bernstein}.
\end{proof}

\begin{lemma}[Sign-flip bound from a $\psi_{2}$ residual]
\label{lem:sign-flip}
If $\|X\|_{\psi_{2}} \le K$ and $\mu > 0$, then
$\Pr[|X| \ge \mu] \le 2\exp(-\mu^{2}/K^{2})$.
\end{lemma}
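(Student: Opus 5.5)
This follows directly from the tail bound already recorded in Lemma~\ref{lem:subg-basic}, applied at the level $t=\mu$. The plan is to use the defining property of the Orlicz norm together with Markov's inequality, exactly as in the proof of that lemma, rather than invoking the lemma as a black box, so that the constant in the exponent is visibly $1$ and no further numerical constants enter.

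Concretely, I first fix any $t>K$ with $t$ admissible in the infimum defining $\|X\|_{\psi_2}$; since the infimum is at most $K$, for every $t>K$ we have $\mathbb{E}\exp(X^2/t^2)\le 2$. This holds by monotonicity of $t\mapsto \mathbb{E}\exp(X^2/t^2)$, which is nonincreasing in $t$, so every $t$ above the infimum is admissible. Next, the event $\{|X|\ge\mu\}$ coincides with $\{\exp(X^2/t^2)\ge \exp(\mu^2/t^2)\}$, because $u\mapsto e^{u^2/t^2}$ is increasing on $[0,\infty)$. Markov's inequality applied to the nonnegative variable $\exp(X^2/t^2)$ then gives
\begin{equation*}
\Pr[|X|\ge\mu]\le \frac{\mathbb{E}\exp(X^2/t^2)}{\exp(\mu^2/t^2)}\le 2\exp(-\mu^2/t^2).
\end{equation*}

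Finally, I let $t\downarrow K$. The right-hand side is continuous in $t>0$, so it converges to $2\exp(-\mu^2/K^2)$. Alternatively, one can note that the infimum is attained by monotone convergence, so $t=K$ is admissible directly. This yields the claim.

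The only point requiring care is the boundary case $t=K$, i.e.\ whether the infimum is attained. The limiting argument sidesteps this, so there is no genuine obstacle. The case $K=0$ is trivial, since then $X=0$ almost surely and the left side vanishes for $\mu>0$.
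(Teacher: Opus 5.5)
Your proof is correct and uses the same argument as the paper: Markov's inequality applied to $\exp(X^{2}/t^{2})$ at level $\exp(\mu^{2}/t^{2})$, which is also exactly the tail bound of Lemma~\ref{lem:subg-basic} evaluated at $t=\mu$. The paper works directly at $t=K$ and tacitly uses $\mathbb{E}\exp(X^{2}/K^{2})\le 2$; your limit $t\downarrow K$, or equivalently your monotone-convergence remark that the infimum is attained, simply makes that boundary step explicit.
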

\begin{proof}
Markov's inequality applied to $\exp(X^{2}/K^{2})$ at level
$\exp(\mu^{2}/K^{2})$ gives
$\Pr[|X| \ge \mu] = \Pr[\exp(X^{2}/K^{2}) \ge \exp(\mu^{2}/K^{2})] \le 2\exp(-\mu^{2}/K^{2})$.
\end{proof}

\section{Expressivity of LoRDBA: Proof of Theorem~\ref{thm:expressivity}}\label{app:expressivity}

This appendix gives a full, self-contained proof of
Theorem~\ref{thm:expressivity} from Section~\ref{sec:theory}. The
argument is entirely a matter of entry-wise sub-Gaussian and
sub-exponential concentration, and no sketching, Rademacher
compression, or Johnson--Lindenstrauss reduction is used. All
underlying probabilistic facts are proved from scratch in
Appendix~\ref{app:concentration} (sub-Gaussian sum tail
Lemma~\ref{lem:subg-sum}, sub-exponential product and Bernstein
inequality Lemma~\ref{lem:subexp-bernstein}, and the sign-flip bound
Lemma~\ref{lem:sign-flip}). In particular, the sub-Gaussian Orlicz
norm and the second-moment bound
$\mab{E}[X^{2}] \le \|X\|_{\psi_{2}}^{2}$ used freely below are
recorded in Lemma~\ref{lem:subg-basic}.

\begin{assumption}[Sign-plus-sub-Gaussian-noise factor model]
    \label{asm:signnoise}
    The LoRA factors $A \in \mab{R}^{N\times r}$ and
    $B \in \mab{R}^{M\times r}$ are random matrices defined on a common
    probability space $(\Omega,\mac{A},\Pr)$ and admit the entry-wise
    decomposition
    \begin{equation}
        A_{ik} = \mu_{A}\sigma^{A}_{ik}+\xi^{A}_{ik},
        ~~
        B_{jk} = \mu_{B}\sigma^{B}_{jk}+\xi^{B}_{jk},
        ~~ i \in [N],~j \in [M],~k \in [r],
        \label{eq:signnoise-decomp}
    \end{equation}
    where $\mu_{A},\mu_{B} > 0$ and $\zeta > 0$ are deterministic
    scalars and the sign arrays
    $\sigma^{A} \in \{\pm 1\}^{N\times r}$,
    $\sigma^{B} \in \{\pm 1\}^{M\times r}$ and residual arrays
    $\xi^{A} \in \mab{R}^{N\times r}$,
    $\xi^{B} \in \mab{R}^{M\times r}$ satisfy:
    \begin{enumerate}[leftmargin=1.5em,itemsep=2pt,topsep=2pt]
        \item The families
        $\{\xi^{A}_{ik}\}_{(i,k)\in[N]\times[r]}$ and
        $\{\xi^{B}_{jk}\}_{(j,k)\in[M]\times[r]}$ consist of mutually
        independent random variables, the two families are mutually
        independent, and each entry has zero mean.
        \item The residuals are uniformly sub-Gaussian:
        \begin{equation*}
            \max \Bigl(
            \max_{(i,k)\in[N]\times[r]} \|\xi^{A}_{ik}\|_{\psi_{2}},
            \max_{(j,k)\in[M]\times[r]} \|\xi^{B}_{jk}\|_{\psi_{2}}
            \Bigr) \le \zeta.
        \end{equation*}
        \item The sign $\sigma$-algebra
        $\mac{F}\coloneqq\sigma(\sigma^{A},\sigma^{B})$ is independent
        of $\sigma(\xi^{A},\xi^{B})$.
    \end{enumerate}
\end{assumption}

We allow the latent sign arrays to be either fixed or random. When
$\sigma^{A},\sigma^{B}$ are fixed, all probability statements are taken
only over the residual arrays $(\xi^{A},\xi^{B})$; in this case
Assumption~\ref{asm:signnoise}(iii) is vacuous. For the relative bound
in Theorem~\ref{thm:expressivity}, the entries of $\sigma^{A}$ and
$\sigma^{B}$ are additionally assumed to be jointly i.i.d. Rademacher.

The canonical reconstruction used in Theorem~\ref{thm:expressivity} is
\begin{equation}
    \theta^{\star} \coloneqq
    \bigl(\sigma^{A}, (\sigma^{B})^{\top},
    \B{1}_{N}, \mu_{A}\mu_{B}\B{1}_{r}, \B{1}_{M}\bigr),
    \label{eq:canonical-theta}
\end{equation}
which is a single-envelope LoRDBA adapter with the latent signs as the
binary carriers. For each envelope rank $\ell\ge1$, let $\Theta_\ell$
denote the corresponding LoRDBA class of Definition~\ref{def:lordba}.
The zero-padding embedding
\begin{equation}
    \iota_{\ell}:\Theta_{\ell-1}\hookrightarrow\Theta_{\ell},~~
    \bigl(B_{1},B_{2},\{(\B{\alpha}^{(i)},\B{\beta}^{(i)},\B{\gamma}^{(i)})\}_{i=1}^{\ell-1}\bigr)
     \mapsto
    \bigl(B_{1},B_{2},\{(\B{\alpha}^{(i)},\B{\beta}^{(i)},\B{\gamma}^{(i)})\}_{i=1}^{\ell-1},(\B{0},\B{0},\B{0})\bigr)
    \label{eq:iota-embed}
\end{equation}
preserves $\Delta W$, so the best achievable reconstruction error is
non-increasing in the envelope rank.

Let $E \coloneqq AB^{\top} - \Delta W(\theta^{\star}) \in \mab{R}^{N\times M}$
denote the reconstruction residual. We first expand $E$ entry-wise
(Lemma~\ref{lem:residual-expansion}), then bound each entry by
sub-exponential concentration (Lemma~\ref{lem:subexp}), aggregate by a
union bound to obtain~\eqref{eq:formal-bound}, and finally combine the
Frobenius upper bound with a Chebyshev lower bound on
$\|\Delta W(\theta^{\star})\|_{F}^{2}$
(Lemma~\ref{lem:signal-lb}) to obtain~\eqref{eq:relative-bound}.

For the absolute bound~\eqref{eq:formal-bound} the signs
$\sigma^{A},\sigma^{B}$ are deterministic, so all probability
statements below are taken with respect to the residuals
$(\xi^{A},\xi^{B})$. For the relative bound~\eqref{eq:relative-bound}
the signs are additionally i.i.d.\ Rademacher and jointly independent
of the residuals; every statement proved for deterministic signs
therefore also holds \emph{conditionally} on
$(\sigma^{A},\sigma^{B})$, and by Fubini the residual event of
probability at least $1 - \delta$ holds on the product space.

\subsection{Entry-wise expansion}

The heart of the argument is the following entry-wise expansion of the
residual into three pieces with distinct concentration behaviour. Its
pairwise $L^{2}$-orthogonality keeps the final variance computation
additive.

\begin{lemma}[Residual expansion]
\label{lem:residual-expansion}
Under Assumption~\ref{asm:signnoise}, for every $i \in [N]$ and
$j \in [M]$,
\begin{equation}
E_{ij}
\;=\;
\underbrace{\mu_{A} \sum_{k=1}^{r} \sigma^{A}_{ik}\xi^{B}_{jk}}_{=:T^{(1)}_{ij}}
\;+\;\underbrace{\mu_{B} \sum_{k=1}^{r} \xi^{A}_{ik}\sigma^{B}_{jk}}_{=:T^{(2)}_{ij}}
\;+\;\underbrace{\sum_{k=1}^{r} \xi^{A}_{ik}\xi^{B}_{jk}}_{=:T^{(3)}_{ij}}.
\label{eq:residual-expansion}
\end{equation}
Let $\mac{F} \coloneqq \sigma(\sigma^{A},\sigma^{B})$ denote the
$\sigma$-algebra generated by the signs (which reduces to the trivial
$\sigma$-algebra when the signs are deterministic). Then the three
summands are pairwise $L^{2}$-orthogonal conditionally on $\mac{F}$:
\begin{equation}
\mab{E}\bigl[T^{(a)}_{ij}T^{(b)}_{ij} \big| \mac{F}\bigr]\;=\;0
\quad\text{for every distinct } a,b \in \{1,2,3\}.
\label{eq:conditional-orth}
\end{equation}
In particular, taking total expectation, the same orthogonality holds
unconditionally.
\end{lemma}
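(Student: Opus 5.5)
The plan is to prove the two claims separately: first the algebraic identity for $E_{ij}$, then the conditional orthogonality.

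\textbf{Expansion.} By Definition~\ref{def:lordba} and the choice \eqref{eq:canonical-theta} of $\theta^\star$ (unit input/output scales, rank scale $\mu_A\mu_B\B{1}_r$, carriers $\sigma^A$ and $(\sigma^B)^\top$), we have
\[
\Delta W(\theta^\star)_{ij}=\mu_A\mu_B\sum_{k=1}^r\sigma^A_{ik}\sigma^B_{jk}.
\]
Substituting the decomposition \eqref{eq:signnoise-decomp} into $(AB^\top)_{ij}=\sum_k A_{ik}B_{jk}$ gives four terms per $k$. The $\mu_A\mu_B\sigma^A_{ik}\sigma^B_{jk}$ term cancels against $\Delta W(\theta^\star)_{ij}$. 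The remaining three terms are exactly $T^{(1)}_{ij}$, $T^{(2)}_{ij}$ and $T^{(3)}_{ij}$. This step is a direct computation.

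\textbf{Orthogonality.} The signs are $\mac{F}$-measurable, and by Assumption~\ref{asm:signnoise}(iii) the residuals are independent of $\mac{F}$. So conditioning on $\mac{F}$ amounts to freezing the signs and taking expectations over $(\xi^A,\xi^B)$ alone; formally this is Fubini or the freezing lemma. All needed moments are finite, since $\mab{E}[\xi^2]\le\zeta^2$ by Lemma~\ref{lem:subg-basic}. Cauchy--Schwarz then makes each product below integrable. Expanding each product of two $T$'s as a double sum over $k,k'$, the three cases are:
\begin{itemize}
\item $T^{(1)}T^{(2)}$: the typical summand is $\mu_A\mu_B\sigma^A_{ik}\sigma^B_{jk'}\,\mab{E}[\xi^B_{jk}\xi^A_{ik'}]$. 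The two residual families are independent and mean zero, so $\mab{E}[\xi^B_{jk}\xi^A_{ik'}]=\mab{E}\xi^B_{jk}\,\mab{E}\xi^A_{ik'}=0$ (Lemma~\ref{lem:subexp-bernstein}).
\item $T^{(1)}T^{(3)}$: the summand involves $\mab{E}[\xi^B_{jk}\xi^A_{ik'}\xi^B_{jk'}]$. The factor $\xi^A_{ik'}$ is independent of the $\xi^B$ family, so the expectation factorises to $\mab{E}[\xi^A_{ik'}]\,\mab{E}[\xi^B_{jk}\xi^B_{jk'}]=0$.
\item $T^{(2)}T^{(3)}$: the case is symmetric, with $\mab{E}[\xi^A_{ik}\xi^A_{ik'}\xi^B_{jk'}]=\mab{E}[\xi^B_{jk'}]\,\mab{E}[\xi^A_{ik}\xi^A_{ik'}]=0$.
\end{itemize}
Notably, none of this requires independence within a family when $k=k'$. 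It only uses cross-family independence and the mean-zero property.

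\textbf{Unconditional version.} Taking total expectation of \eqref{eq:conditional-orth} gives orthogonality unconditionally. When the signs are deterministic, $\mac{F}$ is trivial and the conditional statement is already unconditional.

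The only point requiring care is justifying that conditional expectation given $\mac{F}$ reduces to freezing the signs. That follows from independence of the sign and residual $\sigma$-algebras together with the integrability noted above; I expect no real obstacle.
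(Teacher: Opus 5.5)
Your proposal is correct and follows the paper's proof essentially step for step. You use the same termwise expansion, identifying the $\mu_A\mu_B\sigma^A_{ik}\sigma^B_{jk}$ term with $\Delta W(\theta^\star)_{ij}$, then pull the $\mathcal{F}$-measurable signs out and kill each cross term by cross-family independence and zero means, with the tower property giving the unconditional version; your integrability check and your remark that only cross-family independence is needed are accurate points the paper leaves implicit.
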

\begin{proof}
\emph{Step 1: entry-wise identity.} Substituting
Eq.~\eqref{eq:signnoise-decomp} into
$(AB^{\top})_{ij} = \sum_{k=1}^{r}A_{ik}B_{jk}$ and expanding the
scalar product
$(\mu_{A}\sigma^{A}_{ik} + \xi^{A}_{ik})(\mu_{B}\sigma^{B}_{jk} + \xi^{B}_{jk})$
termwise,
\begin{equation*}
(AB^{\top})_{ij}
\;=\;\underbrace{\mu_{A}\mu_{B} \sum_{k=1}^{r} \sigma^{A}_{ik}\sigma^{B}_{jk}}_{= [\Delta W(\theta^{\star})]_{ij}}
\;+\;T^{(1)}_{ij}+T^{(2)}_{ij}+T^{(3)}_{ij},
\end{equation*}
where the identification of the leading term with
$[\Delta W(\theta^{\star})]_{ij}$ follows from
Eq.~\eqref{eq:canonical-theta} together with
$[\diag(\B{1}_{N})\sigma^{A}\diag(\mu_{A}\mu_{B}\B{1}_{r})(\sigma^{B})^{\top}\diag(\B{1}_{M})]_{ij}
 = \mu_{A}\mu_{B} \sum_{k} \sigma^{A}_{ik}\sigma^{B}_{jk}$.
Subtracting this leading term yields
Eq.~\eqref{eq:residual-expansion}.

\emph{Step 2: conditional $L^{2}$-orthogonality.} By
Assumption~\ref{asm:signnoise}(iii) the sign $\sigma$-algebra
$\mac{F} = \sigma(\sigma^{A},\sigma^{B})$ is independent of
$\sigma(\xi^{A},\xi^{B})$, which implies that the conditional law of
$(\xi^{A},\xi^{B})$ given $\mac{F}$ equals its unconditional law; all
residual moments appearing below therefore equal their unconditional
counterparts. Fix $(i,j)$ and recall
$T^{(1)}_{ij}T^{(2)}_{ij}
 = \mu_{A}\mu_{B} \sum_{k,l}\sigma^{A}_{ik}\sigma^{B}_{jl}\xi^{B}_{jk}\xi^{A}_{il}$.
The sign factors are $\mac{F}$-measurable and can be pulled out of
the conditional expectation; Assumption~\ref{asm:signnoise}(i)
ensures that $\xi^{A}_{il}$ and $\xi^{B}_{jk}$ are independent (they
belong to different independent families) with
$\mab{E}[\xi^{A}_{il}] = \mab{E}[\xi^{B}_{jk}] = 0$, so
\begin{equation*}
\mab{E}[T^{(1)}_{ij}T^{(2)}_{ij}\mid\mac{F}]
=\mu_{A}\mu_{B}  \sum_{k,l=1}^{r} \sigma^{A}_{ik}\sigma^{B}_{jl} 
  \mab{E}[\xi^{A}_{il}] \mab{E}[\xi^{B}_{jk}]=0.
\end{equation*}
For $T^{(1)}_{ij}T^{(3)}_{ij}
 = \mu_{A}\sum_{k,l}\sigma^{A}_{ik}\xi^{B}_{jk}\xi^{A}_{il}\xi^{B}_{jl}$,
the factor $\xi^{A}_{il}$ is independent of
$(\xi^{B}_{jk},\xi^{B}_{jl})$ (different families, cf.~(i)), so its
expectation factorises and vanishes:
\begin{equation*}
\mab{E}[T^{(1)}_{ij}T^{(3)}_{ij}\mid\mac{F}]
=\mu_{A}  \sum_{k,l=1}^{r} \sigma^{A}_{ik} 
  \mab{E}[\xi^{A}_{il}] \mab{E}[\xi^{B}_{jk}\xi^{B}_{jl}]=0.
\end{equation*}
The identity $\mab{E}[T^{(2)}_{ij}T^{(3)}_{ij}\mid\mac{F}] = 0$ is
symmetric, replacing the roles of $A$ and $B$:
\begin{equation*}
\mab{E}[T^{(2)}_{ij}T^{(3)}_{ij}\mid\mac{F}]
=\mu_{B}  \sum_{k,l=1}^{r} \sigma^{B}_{jk} 
  \mab{E}[\xi^{B}_{jl}] \mab{E}[\xi^{A}_{ik}\xi^{A}_{il}]=0.
\end{equation*}
This establishes Eq.~\eqref{eq:conditional-orth}. Taking
expectations and using the tower property yields the unconditional
orthogonality.
\end{proof}

\subsection{Sub-exponential tail of each entry}

With Lemma~\ref{lem:residual-expansion} in hand, each summand in
Eq.~\eqref{eq:residual-expansion} is either a sum of independent
sub-Gaussians or a sum of independent products of sub-Gaussians; the
latter is sub-exponential. A union bound over the three summands
therefore yields a sub-exponential tail for $E_{ij}$.

\begin{lemma}[Sub-exponential bound on $E_{ij}$]
\label{lem:subexp}
Let $V \coloneqq r\zeta^{2}(\mu_{A}^{2} + \mu_{B}^{2} + \zeta^{2})$.
There exists a universal constant $c_{1} > 0$ such that, for every
$(i,j) \in [N] \times [M]$ and every $t > 0$,
\begin{equation}
\Pr\bigl[|E_{ij}| > t\bigr]
\;\le\;
6\exp \Bigl( - c_{1}\min \Bigl(\frac{t^{2}}{V},\frac{t}{\zeta^{2}}\Bigr)\Bigr).
\label{eq:entry-tail}
\end{equation}
Moreover $\mab{E}[E_{ij}^{2}] \le V$, hence
$\mab{E}\|E\|_{F}^{2} \le NM V$.
\end{lemma}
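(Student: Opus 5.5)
We will work entry by entry, using the decomposition $E_{ij}=T^{(1)}_{ij}+T^{(2)}_{ij}+T^{(3)}_{ij}$ of Lemma~\ref{lem:residual-expansion}. Each of the three pieces gets its own tail bound. A union bound then gives the factor $6 = 2+2+2$. Throughout we condition on $\mac{F}$, which by Assumption~\ref{asm:signnoise}(iii) leaves the law of the residuals unchanged. The signs can therefore be treated as deterministic coefficients, and since the resulting bounds do not depend on them, the unconditional statement follows by taking expectations.

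\emph{Linear terms.} Conditionally on the signs, $T^{(1)}_{ij}=\sum_k a_k\xi^{B}_{jk}$ with $a_k=\mu_A\sigma^A_{ik}$, so $\|a\|_2=\mu_A\sqrt r$. The variables $\xi^B_{jk}$, $k\in[r]$, are independent, mean-zero, and have $\psi_2$-norm at most $\zeta$. Lemma~\ref{lem:subg-sum} then gives $\Pr[|T^{(1)}_{ij}|>t/3]\le 2\exp(-c\,t^2/(r\mu_A^2\zeta^2))$, and the same argument with $A$ and $B$ exchanged handles $T^{(2)}_{ij}$. Since $r\mu_A^2\zeta^2\le V$ and $r\mu_B^2\zeta^2\le V$, both exponents are at least $c\,t^2/V\ge c\min(t^2/V,t/\zeta^2)$.

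\emph{Bilinear term.} $T^{(3)}_{ij}=\sum_k W_k$ with $W_k=\xi^A_{ik}\xi^B_{jk}$. These summands are independent across $k$ because they involve disjoint residual entries. By the product part of Lemma~\ref{lem:subexp-bernstein}, each $W_k$ has mean $\mab{E}\xi^A_{ik}\,\mab{E}\xi^B_{jk}=0$ and $\|W_k\|_{\psi_1}\le\zeta^2$. The Bernstein bound~\eqref{eq:bernstein} with $n=r$ and $L_0=\zeta^2$ then gives
\[
\Pr[|T^{(3)}_{ij}|>t/3]\le 2\exp\bigl(-c\min(t^2/(r\zeta^4),\,t/\zeta^2)\bigr),
\]
and $r\zeta^4\le V$. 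Combining the three bounds with $\Pr[|E_{ij}|>t]\le\sum_a\Pr[|T^{(a)}_{ij}|>t/3]$, and absorbing the factor $1/9$ into $c_1$, yields~\eqref{eq:entry-tail}.

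\emph{Second moment.} The conditional orthogonality~\eqref{eq:conditional-orth} gives $\mab{E}E_{ij}^2=\sum_a\mab{E}(T^{(a)}_{ij})^2$, so we compute each term directly.
\begin{itemize}
\item $\mab{E}(T^{(1)}_{ij})^2=\mu_A^2\sum_k\mab{E}(\xi^B_{jk})^2\le r\mu_A^2\zeta^2$. The cross terms vanish by independence and zero mean, and each second moment is at most $\zeta^2$ by Lemma~\ref{lem:subg-basic}.
\item Symmetrically, $\mab{E}(T^{(2)}_{ij})^2\le r\mu_B^2\zeta^2$.
\item $\mab{E}(T^{(3)}_{ij})^2=\sum_k\mab{E}(\xi^A_{ik})^2\mab{E}(\xi^B_{jk})^2\le r\zeta^4$. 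The off-diagonal $k\ne l$ terms vanish because $\mab{E}\xi^A_{ik}\xi^A_{il}=0$.
\end{itemize}
Summing gives $\mab{E}E_{ij}^2\le V$, and summing over all $NM$ entries gives $\mab{E}\|E\|_F^2\le NMV$.

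The only delicate point is the bilinear term. Its tail is genuinely two-regime, and we must check that its variance proxy $r\zeta^4$ and its scale $\zeta^2$ fit under the common $\min(t^2/V,t/\zeta^2)$ form. For the linear terms the issue is the direction of the comparison. Their pure Gaussian tails are stronger than the minimum, so lower-bounding $t^2/V$ by the minimum is legitimate, and no further obstacle arises.
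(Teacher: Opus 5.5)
Your proposal is correct and follows the paper's proof essentially step for step. You condition on the signs, bound $T^{(1)}_{ij}$ and $T^{(2)}_{ij}$ by sub-Gaussian tails via Lemma~\ref{lem:subg-sum}, bound $T^{(3)}_{ij}$ by a Bernstein tail via Lemma~\ref{lem:subexp-bernstein}, take a union bound at level $t/3$ with $c_1=\min(c_a,c_b)/9$, and get the second moment from the conditional orthogonality of Lemma~\ref{lem:residual-expansion}. The only cosmetic difference is that you use $\mu_A\sigma^A_{ik}$ as a deterministic coefficient vector with $\|a\|_2=\mu_A\sqrt{r}$, whereas the paper folds the sign into the summand and rescales by $\mu_A$; the two are equivalent.
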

\begin{proof}
Fix $(i,j) \in [N] \times [M]$ and condition on the $\sigma$-algebra
$\mac{F} = \sigma(\sigma^{A},\sigma^{B})$. By
Assumption~\ref{asm:signnoise}(iii) the residuals $(\xi^{A},\xi^{B})$
are independent of $\mac{F}$, so the conditional joint law of the
residuals given $\mac{F}$ coincides with their unconditional law. All
probability statements in parts (a)--(c) below are understood
conditionally on $\mac{F}$; the bounds are uniform over realisations
of $\mac{F}$ and therefore also hold unconditionally. Throughout we
write $c_{a},c_{b} > 0$ for the universal constants produced by
Lemma~\ref{lem:subg-sum} (sub-Gaussian sum tail) and
Lemma~\ref{lem:subexp-bernstein} (Bernstein) of
Appendix~\ref{app:concentration}.

\emph{(a)~$T^{(1)}_{ij}$ is sub-Gaussian.} For each
$k \in [r]$, $|\sigma^{A}_{ik}| = 1$ and the positive-homogeneity
of $\|\cdot\|_{\psi_{2}}$ give
$\|\sigma^{A}_{ik}\xi^{B}_{jk}\|_{\psi_{2}} = \|\xi^{B}_{jk}\|_{\psi_{2}} \le \zeta$.
By Assumption~\ref{asm:signnoise}(i) the family
$\{\xi^{B}_{jk}\}_{k=1}^{r}$ is mutually independent and the
$\mac{F}$-measurable signs $\sigma^{A}_{ik}$ preserve (conditional)
independence and the zero mean. Applying Lemma~\ref{lem:subg-sum}
conditionally on $\mac{F}$ to
$\{X_{k} = \sigma^{A}_{ik}\xi^{B}_{jk}\}_{k=1}^{r}$ with
$a_{k} \equiv 1$, $K = \zeta$, scaled by the deterministic
constant $\mu_{A}$, yields
\begin{equation}
\Pr \bigl[|T^{(1)}_{ij}| > t \big| \mac{F}\bigr]
\;\le\;2\exp \bigl(-c_{a}t^{2}/(\mu_{A}^{2}r\zeta^{2})\bigr)
\qquad\forall t \ge 0.
\label{eq:t1-tail}
\end{equation}

\emph{(b)~$T^{(2)}_{ij}$ is sub-Gaussian.} Exchanging the roles of
the two families in (a),
\begin{equation}
\Pr \bigl[|T^{(2)}_{ij}| > t \big| \mac{F}\bigr]
\;\le\;2\exp \bigl(-c_{a}t^{2}/(\mu_{B}^{2}r\zeta^{2})\bigr)
\qquad\forall t \ge 0.
\label{eq:t2-tail}
\end{equation}

\emph{(c)~$T^{(3)}_{ij}$ is sub-exponential.} By
Assumption~\ref{asm:signnoise}(i), $\xi^{A}_{ik}$ and $\xi^{B}_{jk}$
are independent with zero mean, so
Lemma~\ref{lem:subexp-bernstein} gives
$\mab{E}[\xi^{A}_{ik}\xi^{B}_{jk}\mid\mac{F}] = \mab{E}[\xi^{A}_{ik}]\mab{E}[\xi^{B}_{jk}] = 0$
and
$\|\xi^{A}_{ik}\xi^{B}_{jk}\|_{\psi_{1}} \le \|\xi^{A}_{ik}\|_{\psi_{2}}\|\xi^{B}_{jk}\|_{\psi_{2}} \le \zeta^{2}$.
The summands $\{\xi^{A}_{ik}\xi^{B}_{jk}\}_{k=1}^{r}$ are independent
across $k$: indeed,
$\sigma(\xi^{A}_{ik}\xi^{B}_{jk}) \subseteq \sigma(\xi^{A}_{ik},\xi^{B}_{jk})$,
and the pairs $(\xi^{A}_{ik},\xi^{B}_{jk})$ for distinct
$k \in [r]$ involve disjoint index sets in the mutually independent
families $\{\xi^{A}_{\cdot\cdot}\},\{\xi^{B}_{\cdot\cdot}\}$.
The Bernstein bound Eq.~\eqref{eq:bernstein} of
Lemma~\ref{lem:subexp-bernstein} therefore yields, with
$c_{b} > 0$ the universal constant of that lemma,
\begin{equation}
\Pr \bigl[|T^{(3)}_{ij}| > t \big| \mac{F}\bigr]
\;\le\;2\exp \Bigl(-c_{b}\min \bigl(t^{2}/(r\zeta^{4}), t/\zeta^{2}\bigr)\Bigr)
\qquad\forall t \ge 0.
\label{eq:t3-tail}
\end{equation}

\emph{Combination.} Triangle inequality gives
$|E_{ij}| \le |T^{(1)}_{ij}| + |T^{(2)}_{ij}| + |T^{(3)}_{ij}|$,
so
\begin{equation*}
\{|E_{ij}| > t\}\;\subseteq\;\{|T^{(1)}_{ij}| > t/3\}\cup\{|T^{(2)}_{ij}| > t/3\}\cup\{|T^{(3)}_{ij}| > t/3\}.
\end{equation*}
Since $V = r\zeta^{2}(\mu_{A}^{2} + \mu_{B}^{2} + \zeta^{2})$
dominates each of $\mu_{A}^{2}r\zeta^{2}$, $\mu_{B}^{2}r\zeta^{2}$,
and $r\zeta^{4}$, Eqs.~\eqref{eq:t1-tail}--\eqref{eq:t2-tail} at
$t/3$ are bounded by
$2\exp(-c_{a}t^{2}/(9V))$, and Eq.~\eqref{eq:t3-tail} at $t/3$ is
bounded by
$2\exp(-c_{b}\min(t^{2}/(9V), t/(3\zeta^{2})))$. Defining
\begin{equation}
c_{1}\;\coloneqq\;\frac{1}{9}\min(c_{a},c_{b}),
\label{eq:c1-def}
\end{equation}
each of the three bounds is dominated by
$2\exp(-c_{1}\min(t^{2}/V,t/\zeta^{2}))$: indeed
$c_{a}/9 \ge c_{1}$, $c_{b}/9 \ge c_{1}$, and
$c_{b}/3 \ge c_{1}$. Taking the union over the three events
yields Eq.~\eqref{eq:entry-tail} conditionally on $\mac{F}$;
uniformity over $\mac{F}$ gives the unconditional bound.

\emph{Second moment.} Lemma~\ref{lem:subg-basic} gives
$\mab{E}[(\xi^{A}_{ik})^{2}] \le \zeta^{2}$ and
$\mab{E}[(\xi^{B}_{jk})^{2}] \le \zeta^{2}$. Using the mutual
independence of Assumption~\ref{asm:signnoise}(i), zero means, and
$(\sigma^{\cdot}_{\cdot\cdot})^{2} = 1$,
\begin{align*}
\mab{E}[(T^{(1)}_{ij})^{2}\mid\mac{F}]
&=\mu_{A}^{2}  \sum_{k,l=1}^{r} \sigma^{A}_{ik}\sigma^{A}_{il} \mab{E}[\xi^{B}_{jk}\xi^{B}_{jl}]
=\mu_{A}^{2} \sum_{k=1}^{r} \mab{E}[(\xi^{B}_{jk})^{2}]
\;\le\;\mu_{A}^{2}r\zeta^{2},\\
\mab{E}[(T^{(2)}_{ij})^{2}\mid\mac{F}]
&\le\;\mu_{B}^{2}r\zeta^{2},\\
\mab{E}[(T^{(3)}_{ij})^{2}\mid\mac{F}]
&= \sum_{k,l=1}^{r} \mab{E}[\xi^{A}_{ik}\xi^{A}_{il}]\mab{E}[\xi^{B}_{jk}\xi^{B}_{jl}]
= \sum_{k=1}^{r} \mab{E}[(\xi^{A}_{ik})^{2}]\mab{E}[(\xi^{B}_{jk})^{2}]
\;\le\;r\zeta^{4},
\end{align*}
where the off-diagonal ($k \ne l$) cross terms vanish because the
residuals within each family are mean zero and mutually independent.
Combined with the conditional orthogonality of
Lemma~\ref{lem:residual-expansion},
$\mab{E}[E_{ij}^{2}\mid\mac{F}]
 = \sum_{a=1}^{3}\mab{E}[(T^{(a)}_{ij})^{2}\mid\mac{F}] \le V$.
Taking total expectation and summing over $(i,j) \in [N] \times [M]$
yields $\mab{E}\|E\|_{F}^{2} \le NM V$.
\end{proof}

\subsection{Frobenius bound by union}

The per-entry bound of Lemma~\ref{lem:subexp} transforms into a
Frobenius bound by a union argument over the $NM$ entries of $E$.

\begin{proof}[Proof of Theorem~\ref{thm:expressivity}, Eq.~\eqref{eq:formal-bound}]
Fix $N,M \ge 2$, $r \ge 1$, and $\delta \in (0,1)$ with
$u \coloneqq \log(6NM/\delta) \le c_{1}r$, where $c_{1}$ is the
constant defined in Eq.~\eqref{eq:c1-def}. Since $NM \ge 4$ and
$\delta < 1$, $u \ge \log 24 > 0$. Set
$t_{\star} \coloneqq \sqrt{Vu/c_{1}} > 0$.

\emph{Step 1: the sub-Gaussian branch binds at $t = t_{\star}$.}
In Eq.~\eqref{eq:entry-tail} the inner minimum satisfies
$\min(c_{1}t^{2}/V,c_{1}t/\zeta^{2}) = c_{1}t^{2}/V$ iff
$t^{2}/V \le t/\zeta^{2}$, iff $t \le V/\zeta^{2}$. At
$t = t_{\star}$ this becomes $\sqrt{Vu/c_{1}} \le V/\zeta^{2}$,
equivalently $u \le c_{1}V/\zeta^{4}$. From
$V = r\zeta^{2}(\mu_{A}^{2} + \mu_{B}^{2} + \zeta^{2})$ and
$\zeta > 0$,
\begin{equation*}
\frac{V}{\zeta^{4}}
\;=\;r\Bigl(\frac{\mu_{A}^{2}}{\zeta^{2}}+\frac{\mu_{B}^{2}}{\zeta^{2}}+1\Bigr)
\;\ge\;r,
\end{equation*}
so the standing hypothesis $u \le c_{1}r$ entails
$u \le c_{1}V/\zeta^{4}$, and the sub-Gaussian branch binds:
\begin{equation*}
c_{1}\min(t_{\star}^{2}/V, t_{\star}/\zeta^{2})
\;=\;c_{1}t_{\star}^{2}/V\;=\;u.
\end{equation*}

\emph{Step 2: per-entry bound.} Substituting $t = t_{\star}$ into
Eq.~\eqref{eq:entry-tail} and using Step~1,
\begin{equation*}
\Pr \bigl[E_{ij}^{2} > Vu/c_{1}\bigr]
\;=\;\Pr \bigl[|E_{ij}| > t_{\star}\bigr]
\;\le\;6 e^{-u}\;=\;\frac{\delta}{NM},
\qquad(i,j) \in [N] \times [M].
\end{equation*}

\emph{Step 3: union bound.} If
$\max_{(i,j)}E_{ij}^{2} \le Vu/c_{1}$, then
$\|E\|_{F}^{2} = \sum_{i,j}E_{ij}^{2} \le NM\cdot Vu/c_{1}$.
Contrapositively,
$\{\|E\|_{F}^{2} > NM Vu/c_{1}\} \subseteq \bigcup_{i,j}\{E_{ij}^{2} > Vu/c_{1}\}$.
A union bound over the $NM$ entries gives
$\Pr[\|E\|_{F}^{2} > NM Vu/c_{1}] \le NM\cdot\delta/(NM) = \delta$.
Hence, with probability at least $1 - \delta$,
\begin{equation}
\|E\|_{F}^{2}
\;\le\;\frac{NMr\zeta^{2}(\mu_{A}^{2} + \mu_{B}^{2} + \zeta^{2})}{c_{1}}\log(6NM/\delta).
\label{eq:raw-formal-bound}
\end{equation}

\emph{Step 4: conversion $\log(6NM/\delta)\to\log(2NM/\delta)$.}
For $N,M \ge 2$ and $\delta \in (0,1)$,
$2NM/\delta \ge 8$, so $\log(2NM/\delta) \ge \log 8 > \log 3$.
Therefore
\begin{equation*}
\log(6NM/\delta)
\;=\;\log 3+\log(2NM/\delta)
\;\le\;\log(2NM/\delta)+\log(2NM/\delta)
\;=\;2\log(2NM/\delta).
\end{equation*}
Substituting into Eq.~\eqref{eq:raw-formal-bound} and setting
$C \coloneqq 2/c_{1} > 0$ yields the absolute bound
\begin{equation}
\big\|AB^{\top}-\Delta W(\theta^{\star})\big\|_{F}^{2}
\;\le\;
C\cdot NMr\zeta^{2}\bigl(\mu_{A}^{2}+\mu_{B}^{2}+\zeta^{2}\bigr)\log(2NM/\delta).
\label{eq:formal-bound}
\end{equation}
The constant $C$ depends only on the universal constants
$c_{a},c_{b}$ produced by Lemma~\ref{lem:subg-sum} and
Lemma~\ref{lem:subexp-bernstein} of Appendix~\ref{app:concentration}
through Eq.~\eqref{eq:c1-def}, and is otherwise independent of all
model parameters.
\end{proof}

\subsection{Chebyshev lower bound on the signal}

The relative bound requires a lower bound on
$\|\Delta W(\theta^{\star})\|_{F}^{2}$. This is supplied by a
second-moment argument on the Rademacher bilinear form of the signs.

\begin{lemma}[Lower bound on $\|\Delta W(\theta^{\star})\|_{F}^{2}$]
\label{lem:signal-lb}
Assume $\{\sigma^{A}_{ik}\}\cup\{\sigma^{B}_{jk}\}$ are i.i.d.\
Rademacher with $\sigma^{A} \perp \sigma^{B}$. Let
$W_{ij} \coloneqq \sum_{k=1}^{r}\sigma^{A}_{ik}\sigma^{B}_{jk}$. Then
for every $N,M \ge 1$, $r \ge 1$,
\begin{equation}
\Pr \Bigl[\textstyle\sum_{i,j}W_{ij}^{2} \ge \frac{1}{2}NMr\Bigr]
\;\ge\;1-\frac{8}{NM}.
\label{eq:signal-lb-prob}
\end{equation}
In particular, on this event
$\|\Delta W(\theta^{\star})\|_{F}^{2} = \mu_{A}^{2}\mu_{B}^{2}\sum_{ij}W_{ij}^{2} \ge \frac{1}{2}\mu_{A}^{2}\mu_{B}^{2}NMr$.
\end{lemma}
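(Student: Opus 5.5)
Set $S\coloneqq\sum_{i,j}W_{ij}^{2}$. The plan is a second-moment (Chebyshev) argument: compute $\mab{E}S$ exactly, bound $\mathrm{Var}(S)$ by $O(NMr^{2}(N+M))$, and apply Chebyshev's inequality at deviation $\tfrac12\mab{E}S$.

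\emph{Step 1: the mean.} Write
\[
W_{ij}^{2}=\sum_{k,l}\sigma^{A}_{ik}\sigma^{A}_{il}\sigma^{B}_{jk}\sigma^{B}_{jl}.
\]
By independence and zero mean, only the diagonal terms $k=l$ survive in expectation. Hence $\mab{E}W_{ij}^{2}=r$ and $\mab{E}S=NMr$.

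\emph{Step 2: the variance.} Write $S-\mab{E}S=\sum_{i,j}\sum_{k\neq l}X^{(ij)}_{kl}$ with $X^{(ij)}_{kl}=\sigma^{A}_{ik}\sigma^{A}_{il}\sigma^{B}_{jk}\sigma^{B}_{jl}$. Then
\[
\mathrm{Var}(S)=\sum_{(i,j),(i',j')}\sum_{k\neq l}\sum_{k'\neq l'}\mab{E}\bigl[X^{(ij)}_{kl}X^{(i'j')}_{k'l'}\bigr].
\]
A product of Rademacher variables has nonzero expectation (namely $1$) only if every variable appears an even number of times. Take $i\neq i'$. The factor $\sigma^{A}_{ik}\sigma^{A}_{il}$ with $k\neq l$ then appears only once, so the term vanishes. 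The same holds for $j\neq j'$. So only $i=i'$, $j=j'$ survives, and there evenness forces $\{k',l'\}=\{k,l\}$. This gives $\mathrm{Var}(S)=\sum_{i,j}2r(r-1)\le 2NMr^{2}$.

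That variance appears too small, so I will recheck it. For fixed $i$ and $i'\neq i$, $\sigma^{A}_{ik}\sigma^{A}_{il}$ has no partner, so the term is zero; the same argument works for $j$. I conclude $\mathrm{Var}(S)=2NMr(r-1)$ exactly, because the terms $X^{(ij)}_{kl}$ over distinct $(i,j,\{k,l\})$ are uncorrelated. This is the main step to verify carefully, particularly the parity bookkeeping across shared row indices $i$ with different $j$. In that case the $A$-factors may match, but the $B$-factors $\sigma^{B}_{jk}\sigma^{B}_{jl}$ and $\sigma^{B}_{j'k'}\sigma^{B}_{j'l'}$ with $j\neq j'$ cannot pair up, so the term still vanishes.

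\emph{Step 3: Chebyshev.}
\[
\Pr\bigl[S<\tfrac12NMr\bigr]\le\Pr\bigl[|S-\mab{E}S|\ge\tfrac12NMr\bigr]\le\frac{4\cdot2NMr(r-1)}{N^{2}M^{2}r^{2}}\le\frac{8}{NM}.
\]
This is exactly Eq.~\eqref{eq:signal-lb-prob}, and it holds for all $N,M,r\ge1$. For $r=1$ the bound is trivial, since $S=NM$ deterministically.

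The ``in particular'' clause follows from Lemma~\ref{lem:residual-expansion}, Step~1, which gives $[\Delta W(\theta^{\star})]_{ij}=\mu_{A}\mu_{B}W_{ij}$. Squaring and summing yields $\|\Delta W(\theta^{\star})\|_{F}^{2}=\mu_{A}^{2}\mu_{B}^{2}S$, and on the event above this is at least $\tfrac12\mu_{A}^{2}\mu_{B}^{2}NMr$.
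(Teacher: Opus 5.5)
Your proposal is correct and follows essentially the same route as the paper: the mean is exactly $NMr$, and the variance is exactly $2NMr(r-1)$ by Rademacher parity (a cross term survives only when the row indices, the column indices, and the unordered pair $\{k,l\}$ all coincide), after which Chebyshev is applied at deviation $\tfrac12 NMr$. The only difference is bookkeeping: you expand entrywise over $(i,j)$, whereas the paper first groups terms by column pairs through the Gram products $G^{A}_{kl}G^{B}_{kl}$; your initially announced $O(NMr^{2}(N+M))$ variance bound is simply superseded by the exact computation.
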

\begin{proof}
Define $Z \coloneqq \sum_{i,j}W_{ij}^{2}$ and, for $k,l \in [r]$,
\begin{equation*}
G^{A}_{kl}\;\coloneqq\;\sum_{i=1}^{N}\sigma^{A}_{ik}\sigma^{A}_{il},
\qquad
G^{B}_{kl}\;\coloneqq\;\sum_{j=1}^{M}\sigma^{B}_{jk}\sigma^{B}_{jl},
\end{equation*}
so that $G^{A}_{kl} = G^{A}_{lk}$ and $G^{B}_{kl} = G^{B}_{lk}$.
Expanding $W_{ij}^{2} = \sum_{k,l}\sigma^{A}_{ik}\sigma^{A}_{il}\sigma^{B}_{jk}\sigma^{B}_{jl}$
and interchanging the finite sums over $(i,j)$ and $(k,l)$,
\begin{equation}
Z\;=\;\sum_{k,l=1}^{r}G^{A}_{kl}G^{B}_{kl}
\;=\;\underbrace{\sum_{k=1}^{r}G^{A}_{kk}G^{B}_{kk}}_{= rNM}
\;+\;\sum_{k\ne l}G^{A}_{kl}G^{B}_{kl},
\label{eq:Z-decomp}
\end{equation}
where the first sum is deterministic because
$G^{A}_{kk} = \sum_{i}(\sigma^{A}_{ik})^{2} = N$ and
$G^{B}_{kk} = M$. If $r = 1$ the off-diagonal sum is empty,
$Z = NM = rNM \ge rNM/2$ deterministically, and
Eq.~\eqref{eq:signal-lb-prob} is trivial. We henceforth assume
$r \ge 2$.

\emph{Mean.} For $k \ne l$ and fixed $i \in [N]$,
$\sigma^{A}_{ik}$ and $\sigma^{A}_{il}$ are independent Rademachers,
hence $\mab{E}[\sigma^{A}_{ik}\sigma^{A}_{il}] = \mab{E}[\sigma^{A}_{ik}]\mab{E}[\sigma^{A}_{il}] = 0$,
so $\mab{E}[G^{A}_{kl}] = 0$ and similarly $\mab{E}[G^{B}_{kl}] = 0$.
By $\sigma^{A} \perp \sigma^{B}$,
$\mab{E}[G^{A}_{kl}G^{B}_{kl}] = \mab{E}[G^{A}_{kl}]\mab{E}[G^{B}_{kl}] = 0$.
Hence $\mab{E}[Z] = rNM$.

\emph{Variance.} By symmetry,
$\sum_{k\ne l}G^{A}_{kl}G^{B}_{kl} = 2S$ with
$S \coloneqq \sum_{1\le k<l\le r}G^{A}_{kl}G^{B}_{kl}$. We claim
\begin{equation}
\mab{E}\bigl[G^{A}_{kl}G^{A}_{k'l'}\bigr]\;=\;N\cdot\mathbf{1}\{(k,l) = (k',l')\}
\qquad\text{for }k < l,~k' < l'.
\label{eq:gram-moment}
\end{equation}
Indeed,
$G^{A}_{kl}G^{A}_{k'l'}
=\sum_{i,i'=1}^{N}\sigma^{A}_{ik}\sigma^{A}_{il}\sigma^{A}_{i'k'}\sigma^{A}_{i'l'}$.
By the joint i.i.d.\ Rademacher hypothesis, all $\{\sigma^{A}_{pq}\}_{(p,q)\in[N]\times[r]}$
are mutually independent, so the expectation of any finite product
factorises across distinct $(p,q)$ pairs. We split by the coincidence
pattern of $i$ and $i'$:
\begin{itemize}[leftmargin=1.5em,itemsep=1pt,topsep=2pt]
\item \emph{$i \ne i'$.} The four indices
$(i,k),(i,l),(i',k'),(i',l')$ are pairwise distinct (as $k \ne l$
and $k' \ne l'$ and $i \ne i'$), so the expectation factorises
into four separate factors $\mab{E}[\sigma^{A}_{ik}]\cdots\mab{E}[\sigma^{A}_{i'l'}] = 0$.
\item \emph{$i = i'$.} The product becomes
$\sigma^{A}_{ik}\sigma^{A}_{il}\sigma^{A}_{ik'}\sigma^{A}_{il'}$,
whose expectation equals $1$ iff each column index
$q \in \{k,l,k',l'\}$ appears with even multiplicity in the
multiset and equals $0$ otherwise. Under $k < l$ and $k' < l'$,
even multiplicity forces $\{k,l\} = \{k',l'\}$ as multisets, hence
$(k,l) = (k',l')$. In the diagonal case each index appears exactly
twice, $(\sigma^{A}_{i\cdot})^{2} = 1$, and the expectation is $1$;
summing over $i \in [N]$ contributes $N$.
\end{itemize}
This proves Eq.~\eqref{eq:gram-moment}; the identical argument with
$B$ in place of $A$ gives
$\mab{E}[G^{B}_{kl}G^{B}_{k'l'}] = M\cdot\mathbf{1}\{(k,l) = (k',l')\}$
for $k < l$, $k' < l'$. By $\sigma^{A} \perp \sigma^{B}$,
\begin{equation*}
\mab{E}\bigl[G^{A}_{kl}G^{B}_{kl}G^{A}_{k'l'}G^{B}_{k'l'}\bigr]
=\mab{E}\bigl[G^{A}_{kl}G^{A}_{k'l'}\bigr] \mab{E}\bigl[G^{B}_{kl}G^{B}_{k'l'}\bigr]
=NM\cdot\mathbf{1}\{(k,l) = (k',l')\}.
\end{equation*}
Summing over the $\binom{r}{2}$ pairs with $k < l$,
\begin{equation*}
\mab{E}[S^{2}]
\;=\;  \sum_{\substack{k<l,\\k'<l'}} \mab{E}\bigl[G^{A}_{kl}G^{B}_{kl}G^{A}_{k'l'}G^{B}_{k'l'}\bigr]
\;=\;\binom{r}{2}NM.
\end{equation*}
Combined with Eq.~\eqref{eq:Z-decomp} and $Z-rNM = 2S$,
\begin{equation*}
\mathrm{Var}(Z)\;=\;\mab{E}[(Z-\mab{E}[Z])^{2}]
\;=\;4\mab{E}[S^{2}]\;=\;2r(r-1)NM\;\le\;2r^{2}NM.
\end{equation*}

\emph{Chebyshev.} Since $Z \ge 0$,
$\{Z < rNM/2\} = \{rNM - Z > rNM/2\} \subseteq \{|Z - rNM| > rNM/2\}$.
Chebyshev's inequality therefore gives
\begin{equation*}
\Pr \bigl[Z < rNM/2\bigr]
\;\le\;\Pr \bigl[|Z - rNM| > rNM/2\bigr]
\;\le\;\frac{\mathrm{Var}(Z)}{(rNM/2)^{2}}
\;\le\;\frac{2r^{2}NM}{r^{2}N^{2}M^{2}/4}
\;=\;\frac{8}{NM},
\end{equation*}
which is the complementary form of Eq.~\eqref{eq:signal-lb-prob}.
\end{proof}
\begin{proof}[Proof of Theorem~\ref{thm:expressivity}, Eq.~\eqref{eq:relative-bound}]
\emph{Transfer of the absolute bound to the joint space.}
In the relative statement of Theorem~\ref{thm:expressivity} we assume
$NM>8$ and $\delta\in(0,1-8/(NM))$, so the final probability lower
bound is positive. Under the Rademacher hypothesis on the signs, the arguments in
Lemmas~\ref{lem:residual-expansion}--\ref{lem:subexp} and in the
preceding proof were performed conditionally on
$\mac{F} = \sigma(\sigma^{A},\sigma^{B})$. In particular, for every
realisation $(s^{A},s^{B})$ of $(\sigma^{A},\sigma^{B})$ the
conditional probability
\begin{equation*}
\Pr \bigl[\mac{E}_{1} \bigl| \sigma^{A} = s^{A},\sigma^{B} = s^{B}\bigr]
\;\ge\;1-\delta,
\qquad
\mac{E}_{1}\;\coloneqq\;
\bigl\{\|E\|_{F}^{2} \le C NMr\zeta^{2}(\mu_{A}^{2} + \mu_{B}^{2} + \zeta^{2})\log(2NM/\delta)\bigr\}.
\end{equation*}
Taking expectation over the signs (tower property) gives
$\Pr[\mac{E}_{1}] \ge 1 - \delta$ on the joint space. Let
$\mac{E}_{2} \coloneqq \{\sum_{i,j}W_{ij}^{2} \ge NMr/2\}$ be the
signal event of Lemma~\ref{lem:signal-lb}; it depends only on the
signs and has $\Pr[\mac{E}_{2}] \ge 1 - 8/(NM)$. The union bound
$\Pr[\mac{E}_{1}^{c}\cup\mac{E}_{2}^{c}] \le \delta + 8/(NM)$
yields
\begin{equation*}
\Pr[\mac{E}_{1}\cap\mac{E}_{2}]
\;\ge\;1-\delta-\frac{8}{NM}.
\end{equation*}

\emph{Ratio on $\mac{E}_{1}\cap\mac{E}_{2}$.}
By definition of $\mac{E}_{2}$,
\begin{equation*}
\|\Delta W(\theta^{\star})\|_{F}^{2}
=\mu_{A}^{2}\mu_{B}^{2} \sum_{i,j} W_{ij}^{2}
\;\ge\;\tfrac{1}{2}\mu_{A}^{2}\mu_{B}^{2}NMr.
\end{equation*}
Combining with the formal upper bound on $\|E\|_{F}^{2}$ provided by
$\mac{E}_{1}$,
\begin{equation}
\frac{\|E\|_{F}^{2}}{\|\Delta W(\theta^{\star})\|_{F}^{2}}
\;\le\;
\frac{C NMr\zeta^{2}(\mu_{A}^{2} + \mu_{B}^{2} + \zeta^{2})\log(2NM/\delta)}
     {\tfrac{1}{2}\mu_{A}^{2}\mu_{B}^{2}NMr}
\;=\;
2C\cdot\frac{\zeta^{2}(\mu_{A}^{2} + \mu_{B}^{2} + \zeta^{2})}
             {\mu_{A}^{2}\mu_{B}^{2}} \log(2NM/\delta).
\label{eq:ratio-oracle}
\end{equation}
Let $\mu_{\max} \coloneqq \max(\mu_{A},\mu_{B})$ and
$\mu_{\min} \coloneqq \min(\mu_{A},\mu_{B})$; the hypothesis
$\zeta \le \mu_{\max}$ yields
$\mu_{A}^{2}+\mu_{B}^{2}+\zeta^{2} \le 3\mu_{\max}^{2}$.
Using $\mu_{A}\mu_{B} = \mu_{\max}\mu_{\min}$,
\begin{equation*}
\frac{\zeta^{2}(\mu_{A}^{2}+\mu_{B}^{2}+\zeta^{2})}{\mu_{A}^{2}\mu_{B}^{2}}
\;\le\;\frac{3\zeta^{2}\mu_{\max}^{2}}{\mu_{\max}^{2}\mu_{\min}^{2}}
\;=\;\frac{3\zeta^{2}}{\mu_{\min}^{2}}.
\end{equation*}
Substituting into Eq.~\eqref{eq:ratio-oracle} yields
\begin{equation*}
\frac{\|E\|_{F}^{2}}{\|\Delta W(\theta^{\star})\|_{F}^{2}}
\;\le\;\frac{6C\zeta^{2}}{\mu_{\min}^{2}}\log(2NM/\delta).
\end{equation*}
Taking square roots (both sides are non-negative) and setting
$C' \coloneqq \sqrt{6C} > 0$ proves
Eq.~\eqref{eq:relative-bound} on the event
$\mac{E} \coloneqq \mac{E}_{1}\cap\mac{E}_{2}$, whose probability is
at least $1 - \delta - 8/(NM)$.

\emph{Small-noise transfer.} Put
$\tau \coloneqq C'\zeta\sqrt{\log(2NM/\delta)}/\mu_{\min}$ and
assume $\tau \le 1/2$. On $\mac{E}$, Eq.~\eqref{eq:relative-bound}
gives $\|E\|_{F} \le \tau \|\Delta W(\theta^{\star})\|_{F}$, and
the reverse triangle inequality
$\bigl|\|AB^{\top}\|_{F}-\|\Delta W(\theta^{\star})\|_{F}\bigr| \le \|AB^{\top}-\Delta W(\theta^{\star})\|_{F} = \|E\|_{F}$
implies
\begin{equation*}
\|AB^{\top}\|_{F}
\;\ge\;\|\Delta W(\theta^{\star})\|_{F}-\|E\|_{F}
\;\ge\;(1-\tau)\|\Delta W(\theta^{\star})\|_{F}
\;\ge\;\tfrac{1}{2}\|\Delta W(\theta^{\star})\|_{F}.
\end{equation*}
Dividing the bound Eq.~\eqref{eq:relative-bound} by a denominator at
least $\tfrac{1}{2}\|\Delta W(\theta^{\star})\|_{F}$ costs at most a
factor of $2$, which establishes the transferred bound
$\|E\|_{F}/\|AB^{\top}\|_{F} \le 2\tau$ on $\mac{E}$.
\end{proof}

The theorem is \emph{honest} in the sense that it tracks the actual
quantity controlling LoRDBA error---the residual-to-mean ratio
$\zeta/\min(\mu_{A},\mu_{B})$. A naive ``$1$-bit is lossless to
$\varepsilon$'' claim is false for arbitrary fp16 adapters (sign
quantisation loses $\Theta(1)$ of the Frobenius signal whenever the
entries are concentrated on a sphere rather than near $\pm\mu$); the
bound above degrades gracefully in the relevant regime and becomes
vacuous precisely where sign quantisation should fail. The ratio is
directly observable on any trained adapter, as we verify next.

\subsection{Sign consistency and observed-sign reconstruction}

The canonical reconstruction $\theta^{\star}$ uses the latent sign
arrays $\sigma^{A},\sigma^{B}$, which are not in general equal to the
observed entrywise signs $\sign(A),\sign(B)$. The next lemma shows
that the two coincide with high probability whenever the residual is
small relative to the latent mean, so the observed-sign LoRDBA adapter
inherits the same Theorem~\ref{thm:expressivity} guarantees up to an
explicit additional failure probability.

\begin{lemma}[Sign consistency of the canonical reconstruction]
\label{lem:sign-consistency}
Under Assumption~\ref{asm:signnoise}, the latent sign arrays
$\sigma^{A},\sigma^{B}$ and the observed entrywise sign patterns
$\sign(A),\sign(B)$ agree simultaneously with probability at least
\begin{equation}
1
\;-\;
2Nr\exp \bigl(-\mu_{A}^{2}/\zeta^{2}\bigr)
\;-\;
2Mr\exp \bigl(-\mu_{B}^{2}/\zeta^{2}\bigr).
\label{eq:sign-cons-prob}
\end{equation}
Equivalently, on the event of probability~\eqref{eq:sign-cons-prob},
\begin{equation}
\sign(A)\;=\;\sigma^{A},
\qquad
\sign(B)\;=\;\sigma^{B}.
\label{eq:sign-cons-event}
\end{equation}
Consequently, on the same event, the observed-sign choice
$B_{1}=\sign(A)$, $B_{2}=\sign(B^{\top})$ equals the latent-sign binary
factors of the canonical reconstruction in
Eq.~\eqref{eq:canonical-theta}.
\end{lemma}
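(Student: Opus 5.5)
The plan is an entrywise sign-flip argument combined with a union bound over all $Nr + Mr$ factor entries. For a fixed entry $(i,k)$ of $A$ we have $A_{ik} = \mu_A\sigma^A_{ik} + \xi^A_{ik}$ with $\mu_A>0$. Whenever $|\xi^A_{ik}| < \mu_A$, the quantity $A_{ik}$ lies strictly on the same side of zero as $\mu_A\sigma^A_{ik}$.

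To see this, consider the two cases. If $\sigma^A_{ik}=+1$, then $A_{ik} > \mu_A - \mu_A = 0$, so $\sign(A_{ik}) = +1$. If $\sigma^A_{ik}=-1$, then $A_{ik} < -\mu_A + \mu_A = 0$, so $\sign(A_{ik})=-1$. Because the inequality is strict, the convention $\sign(0)=+1$ never enters.

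It follows that $\{\sign(A_{ik}) \ne \sigma^A_{ik}\} \subseteq \{|\xi^A_{ik}| \ge \mu_A\}$. This inclusion holds for every realisation of the signs, fixed or random. The same inclusion holds for $B$ with $\mu_B$ and $\xi^B_{jk}$.

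Next I will bound each flip event using Lemma~\ref{lem:sign-flip} with $K=\zeta$ and $\mu=\mu_A$ (respectively $\mu_B$). This is legitimate because Assumption~\ref{asm:signnoise}(ii) gives $\|\xi^A_{ik}\|_{\psi_2}\le\zeta$ and $\|\xi^B_{jk}\|_{\psi_2}\le\zeta$. The result is $\Pr[|\xi^A_{ik}|\ge\mu_A]\le 2\exp(-\mu_A^2/\zeta^2)$ and the analogous bound for $B$.

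Only the marginal law of the residuals is used here, so no independence is needed. If the signs are random, the inclusion is pointwise, so the probability bound holds unconditionally. Equivalently, one can use Assumption~\ref{asm:signnoise}(iii) to argue conditionally on $\mathcal F$ and then apply the tower property, exactly as in the proof of Lemma~\ref{lem:subexp}.

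A union bound over the $Nr$ entries of $A$ and the $Mr$ entries of $B$ then shows that the probability that some observed sign disagrees with its latent sign is at most $2Nr\exp(-\mu_A^2/\zeta^2)+2Mr\exp(-\mu_B^2/\zeta^2)$. The complement is exactly the event~\eqref{eq:sign-cons-event}, with the probability stated in~\eqref{eq:sign-cons-prob}.

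The final claim follows by substituting into the definition. On this event, $B_1=\sign(A)=\sigma^A$ and $B_2=\sign(B^\top)=\sign(B)^\top=(\sigma^B)^\top$. These are precisely the binary carriers of $\theta^\star$ in Eq.~\eqref{eq:canonical-theta}, since the scales are left unchanged.

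There is no substantial obstacle in this argument. The only point needing care is the strictness at the boundary: ties $|\xi|=\mu$ must be placed inside the bad event. Lemma~\ref{lem:sign-flip} already controls $\Pr[|X|\ge\mu]$ with a non-strict inequality, so this is covered. One should also note that the bound is informative only when $\mu_A/\zeta$ and $\mu_B/\zeta$ are of order $\sqrt{\log(Nr)}$ and $\sqrt{\log(Mr)}$ respectively; otherwise the stated probability may be negative and the lemma is vacuous, which is consistent with its statement.
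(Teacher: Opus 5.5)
Your proposal is correct and follows essentially the same route as the paper: the entrywise inclusion $\{\sign(A_{ik})\ne\sigma^{A}_{ik}\}\subseteq\{|\xi^{A}_{ik}|\ge\mu_{A}\}$ (and its analogue for $B$), then Lemma~\ref{lem:sign-flip} with $K=\zeta$, then a union bound over the $Nr+Mr$ entries, and finally transposition to identify $B_{2}=\sign(B)^{\top}=(\sigma^{B})^{\top}$. The only cosmetic difference is how you prove the inclusion. You argue by contraposition: $|\xi^{A}_{ik}|<\mu_{A}$ forces strict agreement, so the convention $\sign(0)=+1$ never arises. The paper instead multiplies by $\sigma^{A}_{ik}$ and handles that convention explicitly in the mismatch case.
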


\begin{proof}
Fix an entry $(i,k)\in[N]\times[r]$ of $A$. By
Assumption~\ref{asm:signnoise}(i)--(ii),
\begin{equation*}
A_{ik}\;=\;\mu_{A}\sigma^{A}_{ik}+\xi^{A}_{ik},
\qquad
\sigma^{A}_{ik}\in\{\pm 1\},
\qquad
\mu_{A}>0,
\qquad
\|\xi^{A}_{ik}\|_{\psi_{2}}\le\zeta.
\end{equation*}
We claim the implication
\begin{equation}
\sign(A_{ik})\ne\sigma^{A}_{ik}
\;\Longrightarrow\;
|\xi^{A}_{ik}|\ge\mu_{A}.
\label{eq:sign-mismatch-implies-large-noise}
\end{equation}
Indeed, multiplying $A_{ik}$ by $\sigma^{A}_{ik}\in\{\pm 1\}$ gives
\begin{equation*}
\sigma^{A}_{ik}A_{ik}
\;=\;\mu_{A}+\sigma^{A}_{ik}\xi^{A}_{ik}.
\end{equation*}
Under the convention $\sign(0) = +1$, a mismatch implies
$\sigma^{A}_{ik}A_{ik}\le 0$: if $\sigma^{A}_{ik}=+1$ then the
mismatch forces $A_{ik}<0$, while if $\sigma^{A}_{ik}=-1$ it forces
$A_{ik}\ge 0$. Therefore
\[
\sigma^{A}_{ik}A_{ik}
=\mu_{A}+\sigma^{A}_{ik}\xi^{A}_{ik}\le 0,
\]
which implies $\sigma^{A}_{ik}\xi^{A}_{ik}\le-\mu_{A}$ and hence
$|\xi^{A}_{ik}|\ge\mu_{A}$. This proves
Eq.~\eqref{eq:sign-mismatch-implies-large-noise}.

By Eq.~\eqref{eq:sign-mismatch-implies-large-noise} and
Lemma~\ref{lem:sign-flip} of Appendix~\ref{app:concentration} applied
to $\xi^{A}_{ik}$ with $K=\zeta$ and $\mu=\mu_{A}$,
\begin{equation}
\Pr \bigl[\sign(A_{ik})\ne\sigma^{A}_{ik}\bigr]
\;\le\;
\Pr \bigl[|\xi^{A}_{ik}|\ge\mu_{A}\bigr]
\;\le\;
2\exp \bigl(-\mu_{A}^{2}/\zeta^{2}\bigr).
\label{eq:per-entry-A-sign-prob}
\end{equation}
The identical argument with $A$ replaced by $B$ and $\mu_{A}$ by
$\mu_{B}$ gives, for every $(j,k)\in[M]\times[r]$,
\begin{equation}
\Pr \bigl[\sign(B_{jk})\ne\sigma^{B}_{jk}\bigr]
\;\le\;
2\exp \bigl(-\mu_{B}^{2}/\zeta^{2}\bigr).
\label{eq:per-entry-B-sign-prob}
\end{equation}

A union bound over the $Nr$ entries of $A$ and the $Mr$ entries of
$B$, using
Eqs.~\eqref{eq:per-entry-A-sign-prob}--\eqref{eq:per-entry-B-sign-prob},
yields
\begin{equation}
\Pr \bigl[\sign(A)\ne\sigma^{A}\ \text{or}\ \sign(B)\ne\sigma^{B}\bigr]
\;\le\;
2Nr\exp \bigl(-\mu_{A}^{2}/\zeta^{2}\bigr)
+
2Mr\exp \bigl(-\mu_{B}^{2}/\zeta^{2}\bigr).
\label{eq:union-bound-sign}
\end{equation}
The complement is exactly Eq.~\eqref{eq:sign-cons-event} and has
probability at least the value displayed in
Eq.~\eqref{eq:sign-cons-prob}.

On the event Eq.~\eqref{eq:sign-cons-event}, the observed-sign binary
factors satisfy
\begin{equation*}
B_{1}\;=\;\sign(A)\;=\;\sigma^{A},
\qquad
B_{2}\;=\;\sign(B^{\top})\;=\;\sign(B)^{\top}\;=\;(\sigma^{B})^{\top},
\end{equation*}
because $\sign$ is applied entrywise and is therefore preserved under
transposition. These coincide with the binary factors of the
canonical reconstruction in Eq.~\eqref{eq:canonical-theta}, completing
the proof.
\end{proof}

\begin{corollary}[LoRDBA envelope-rank monotonicity]
\label{cor:lordba-expressivity}
Under the hypotheses of Theorem~\ref{thm:expressivity}, for every
$\ell \ge 1$,
$\inf_{\theta\in\Theta_{\ell}}\|AB^{\top}-\Delta W(\theta)\|_{F}$
is bounded by the absolute bound in Eq.~\eqref{eq:formal-bound} and,
in the small-noise regime $\tau \le 1/2$, by a factor of two times
the right-hand side of Eq.~\eqref{eq:relative-bound} relative to
$\|AB^{\top}\|_{F}$.
\end{corollary}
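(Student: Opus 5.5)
The plan is to squeeze the infimum between zero and the error of one explicit member of every class $\Theta_\ell$, namely the image of the canonical adapter $\theta^\star$ of Eq.~\eqref{eq:canonical-theta} under the zero-padding maps. The first step is to define $\theta^\star_\ell \coloneqq \iota_\ell\circ\cdots\circ\iota_2(\theta^\star)\in\Theta_\ell$ for each $\ell\ge 1$, with $\theta^\star_1=\theta^\star$. Each padded envelope carries $(\B{0},\B{0},\B{0})$ and so contributes $\diag(\B{0})B_1\diag(\B{0})B_2\diag(\B{0})=0$ to the sum in Eq.~\eqref{eq:lordba}. A one-line induction on $\ell$ then gives $\Delta W(\theta^\star_\ell)=\Delta W(\theta^\star)$, which yields
\[
\inf_{\theta\in\Theta_\ell}\|AB^\top-\Delta W(\theta)\|_F\;\le\;\|AB^\top-\Delta W(\theta^\star)\|_F=\|E\|_F
\]
deterministically, for every realisation of the signs and residuals.

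The second step is to intersect this with the high-probability events already constructed. For the absolute claim, I would invoke the event of probability at least $1-\delta$ on which Eq.~\eqref{eq:formal-bound} holds. Because the comparison above is pointwise, the same event bounds the infimum simultaneously for all $\ell$, with no union over $\ell$ needed. For the relative claim, I would work on the event $\mathcal{E}=\mathcal{E}_1\cap\mathcal{E}_2$ from the proof of Eq.~\eqref{eq:relative-bound}, which has probability at least $1-\delta-8/(NM)$. In the regime $\tau\le1/2$, the small-noise transfer already shows $\|E\|_F\le 2\tau\|AB^\top\|_F$ on this event. Since $\tau$ is exactly the right-hand side of Eq.~\eqref{eq:relative-bound}, this gives $\inf_{\Theta_\ell}\|AB^\top-\Delta W(\theta)\|_F/\|AB^\top\|_F\le 2\tau$, which is the factor-of-two statement.

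There is no real analytic obstacle here; all the concentration work is already done in the theorem. The one point needing care is the wording of the statement. The infimum is over a class that depends on $\ell$, but the dominating adapter is the same for all $\ell$, so the probability statements should be read as holding on a single event uniformly in $\ell$. I would also note that the denominator has been switched from $\|\Delta W(\theta^\star)\|_F$ to $\|AB^\top\|_F$. This switch is exactly what costs the factor of two, and it is the reason for the hypothesis $\tau\le 1/2$.
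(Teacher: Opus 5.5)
Your proposal is correct and follows the paper's proof. You iterate the zero-padding embedding $\iota_\ell$ to place $\theta^\star$ in $\Theta_\ell$ with unchanged $\Delta W$, so the infimum is dominated by $\|E\|_F$, and then read off the absolute bound and the small-noise transfer $\|E\|_F\le 2\tau\|AB^\top\|_F$ from the proof of Theorem~\ref{thm:expressivity}; your remark that a single event serves all $\ell$ simultaneously is a correct sharpening that the paper leaves implicit.
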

\begin{proof}
Apply $\iota_{\ell}$ iteratively to $\theta^{\star}\in\Theta_{1}$ to
obtain a feasible point with the same $\Delta W$ in $\Theta_{\ell}$,
then invoke Theorem~\ref{thm:expressivity}; the small-noise transfer
follows from $\|AB^{\top}\|_{F} \ge \tfrac{1}{2}\|\Delta W(\theta^{\star})\|_{F}$
under the standing condition $\tau \le 1/2$.
\end{proof}

\begin{corollary}[Observed-sign reconstruction]
\label{cor:observed-sign}
Assume the hypotheses of Theorem~\ref{thm:expressivity} and define the
observed-sign LoRDBA adapter
\begin{equation}
\widehat{\theta}
\;\coloneqq\;
\bigl(
\sign(A), \sign(B)^{\top}, 
\B{1}_{N}, \mu_{A}\mu_{B}\B{1}_{r}, \B{1}_{M}
\bigr).
\label{eq:observed-sign-theta}
\end{equation}
On the sign-consistency event of Lemma~\ref{lem:sign-consistency},
\begin{equation}
\Delta W(\widehat{\theta})\;=\;\Delta W(\theta^{\star}).
\label{eq:observed-sign-equality}
\end{equation}
Consequently, every bound in Theorem~\ref{thm:expressivity} and
Corollary~\ref{cor:lordba-expressivity} applies to
$\widehat{\theta}$, with the failure probability of the corresponding
event increased by at most
\begin{equation}
p_{\mathrm{flip}}
\;\coloneqq\;
2Nr\exp \bigl(-\mu_{A}^{2}/\zeta^{2}\bigr)
+
2Mr\exp \bigl(-\mu_{B}^{2}/\zeta^{2}\bigr).
\label{eq:p-flip-def}
\end{equation}
\end{corollary}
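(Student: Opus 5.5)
The argument reduces almost entirely to Lemma~\ref{lem:sign-consistency}. We compare the two adapters directly on the sign-consistency event, and then handle failure probabilities with a single union bound.

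\emph{Step 1 (deterministic identity on the event).} Let $\mathcal{S}\coloneqq\{\sign(A)=\sigma^{A},\ \sign(B)=\sigma^{B}\}$. By Lemma~\ref{lem:sign-consistency}, $\Pr[\mathcal{S}^{c}]\le p_{\mathrm{flip}}$. On $\mathcal{S}$, the tuples $\widehat\theta$ of Eq.~\eqref{eq:observed-sign-theta} and $\theta^\star$ of Eq.~\eqref{eq:canonical-theta} agree componentwise. Their binary carriers are $\sign(A)=\sigma^{A}$ and $\sign(B)^{\top}=(\sigma^{B})^{\top}$, and their scale vectors $\mathbf{1}_N$, $\mu_A\mu_B\mathbf{1}_r$, $\mathbf{1}_M$ are literally identical. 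Since $\Delta W(\cdot)$ in Eq.~\eqref{eq:lordba} is a function of the tuple, Eq.~\eqref{eq:observed-sign-equality} follows pointwise on $\mathcal{S}$. No probabilistic argument is needed here.

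\emph{Step 2 (transfer of the bounds).} Let $\mathcal{G}$ be any good event from Theorem~\ref{thm:expressivity} or Corollary~\ref{cor:lordba-expressivity}. This covers the absolute event $\mathcal{E}_1$ with failure probability at most $\delta$, the relative event $\mathcal{E}_1\cap\mathcal{E}_2$ with failure probability at most $\delta+8/(NM)$, and the small-noise transfer. On $\mathcal{G}$, the quantities $\|AB^\top-\Delta W(\theta^\star)\|_F$, $\|\Delta W(\theta^\star)\|_F$ and $\|AB^\top\|_F$ satisfy the stated inequalities. On $\mathcal{G}\cap\mathcal{S}$ we may replace $\Delta W(\theta^\star)$ by $\Delta W(\widehat\theta)$ everywhere by Step~1, so the same inequalities hold for $\widehat\theta$. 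The union bound then gives $\Pr[(\mathcal{G}\cap\mathcal{S})^c]\le\Pr[\mathcal{G}^c]+p_{\mathrm{flip}}$, which is exactly the claimed increase in failure probability.

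For Corollary~\ref{cor:lordba-expressivity}, the infimum over $\Theta_\ell$ is already bounded via $\theta^\star$. The observed-sign statement concerns the specific point $\iota_\ell^{(\ell-1)}(\widehat\theta)$, which has the same $\Delta W$ as $\widehat\theta$. Applying Step~1 and the embedding argument handles it identically.

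\emph{Main point of care.} The only subtlety is that all of these events live on one probability space, the joint law of signs and residuals. $\mathcal{S}$ depends on the residuals and, in the Rademacher case, also on the signs. The earlier events were established either conditionally on $\mathcal{F}$ and then lifted by the tower property, or directly on the joint space. I will note that the per-entry bound in Lemma~\ref{lem:sign-consistency} holds conditionally on $\mathcal{F}$ uniformly, since it uses only $\|\xi\|_{\psi_2}\le\zeta$ and independence of the residuals from $\mathcal{F}$. It therefore also holds on the joint space, and the union bound is legitimate without any independence between $\mathcal{S}$ and $\mathcal{G}$.
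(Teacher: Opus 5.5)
Your proposal is correct and follows the paper's proof. Both show $\Delta W(\widehat{\theta})=\Delta W(\theta^{\star})$ pointwise on the sign-consistency event, since the carriers and scale vectors coincide there, and then intersect each good event of Theorem~\ref{thm:expressivity} and Corollary~\ref{cor:lordba-expressivity} with that event, using a union bound to add at most $p_{\mathrm{flip}}$. You also note two points the paper leaves implicit: the substitution applies to the denominator $\|\Delta W(\theta^{\star})\|_F$ of the relative bound as well, and the union bound needs no independence between the events.
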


\begin{proof}
By Lemma~\ref{lem:sign-consistency}, on the event
Eq.~\eqref{eq:sign-cons-event} the binary factors of $\widehat{\theta}$
and $\theta^{\star}$ coincide,
\begin{equation*}
\sign(A)\;=\;\sigma^{A},
\qquad
\sign(B)^{\top}\;=\;(\sigma^{B})^{\top}.
\end{equation*}
The scale vectors of $\widehat{\theta}$ are the same as those of
$\theta^{\star}$, namely $\B{1}_{N},\mu_{A}\mu_{B}\B{1}_{r},\B{1}_{M}$
in Eq.~\eqref{eq:canonical-theta}. Substituting both equalities into
Eq.~\eqref{eq:lordba} yields
$\Delta W(\widehat{\theta})=\Delta W(\theta^{\star})$ on this event,
proving Eq.~\eqref{eq:observed-sign-equality}.

For each event $\mac{E}'$ produced by Theorem~\ref{thm:expressivity}
or Corollary~\ref{cor:lordba-expressivity}, the corresponding
inequality bounds the residual
$\|AB^{\top}-\Delta W(\theta^{\star})\|_{F}$. On the intersection
$\mac{E}'\cap\{\sign(A)=\sigma^{A},\sign(B)=\sigma^{B}\}$, this
residual equals $\|AB^{\top}-\Delta W(\widehat{\theta})\|_{F}$ by
Eq.~\eqref{eq:observed-sign-equality}, so the inequality also bounds
the observed-sign residual. The complement bound follows from
\begin{equation*}
\Pr \bigl[(\mac{E}')^{c}\cup\{\sign(A)\ne\sigma^{A}\ \text{or}\ \sign(B)\ne\sigma^{B}\}\bigr]
\;\le\;
\Pr[(\mac{E}')^{c}]\;+\;p_{\mathrm{flip}},
\end{equation*}
which adds at most $p_{\mathrm{flip}}$ of Eq.~\eqref{eq:p-flip-def} to
the original failure probability of $\mac{E}'$.
\end{proof}

\begin{corollary}[Operator-norm consequence]
\label{cor:op-norm}
Under the hypotheses used for the absolute bound in Theorem~\ref{thm:expressivity},
$E\coloneqq AB^{\top}-\Delta W(\theta^{\star})$ satisfies, with
probability at least $1-\delta$ over the residuals,
\begin{equation}
\|E\|_{\mathrm{op}}
\;\le\;
\|E\|_{F}
\;\le\;
\sqrt{C NMr\zeta^{2}(\mu_{A}^{2}+\mu_{B}^{2}+\zeta^{2})\log(2NM/\delta)},
\label{eq:op-norm-bound}
\end{equation}
where $C>0$ is the universal constant of Eq.~\eqref{eq:formal-bound}.
\end{corollary}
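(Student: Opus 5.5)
This corollary follows from two facts. The operator norm is always dominated by the Frobenius norm. The Frobenius bound is exactly the absolute bound \eqref{eq:formal-bound} already established in the proof of Theorem~\ref{thm:expressivity}. The plan is to chain these two inequalities on the same high-probability event.

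\emph{First}, I will record the deterministic inequality $\|E\|_{\mathrm{op}}\le\|E\|_{F}$ for every matrix $E\in\mathbb{R}^{N\times M}$. Write $s_1\ge s_2\ge\cdots\ge 0$ for the singular values of $E$. Then $\|E\|_{\mathrm{op}}^2=s_1^2\le\sum_k s_k^2=\operatorname{tr}(E^\top E)=\|E\|_F^2$. Alternatively, for a unit vector $x$, Cauchy--Schwarz applied row by row gives $\|Ex\|_2^2=\sum_i(\sum_j E_{ij}x_j)^2\le\sum_i\sum_j E_{ij}^2$. This step holds pointwise on the sample space and needs no probability.

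\emph{Second}, I will invoke the event $\mathcal{E}_1$ from the proof of Eq.~\eqref{eq:formal-bound}. I will work under the same standing hypotheses used there: Assumption~\ref{asm:signnoise}, $N,M\ge2$, $\delta\in(0,1)$, and $\log(6NM/\delta)\le c_1 r$. On $\mathcal{E}_1$, which has probability at least $1-\delta$ over the residuals, we have
\[
\|E\|_F^2\le C\,NMr\zeta^2(\mu_A^2+\mu_B^2+\zeta^2)\log(2NM/\delta),
\]
with $C=2/c_1$. If the signs are random, the same conclusion holds conditionally on $\mathcal{F}$ and hence, by the tower property, unconditionally. Taking square roots, which is monotone on nonnegative reals, and combining with the first step gives Eq.~\eqref{eq:op-norm-bound} on $\mathcal{E}_1$.

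There is no real obstacle. The only point needing care is bookkeeping: the constant $C$ and the failure probability $\delta$ must be exactly those of the absolute bound, with no additional union bound. The signal lower bound of Lemma~\ref{lem:signal-lb} is not used, so the $8/(NM)$ term does not appear. I will also remark that this bound is likely loose by a factor of roughly $\sqrt{\min(N,M)}$, since a sharper operator-norm estimate would need a random-matrix argument. The corollary only claims the Frobenius-dominated bound, so none is needed.
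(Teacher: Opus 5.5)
Your proof is correct and is essentially the paper's own argument: the deterministic inequality $\|E\|_{\mathrm{op}}\le\|E\|_F$ via singular values, chained with the square root of the absolute bound Eq.~\eqref{eq:formal-bound} on its probability-$(1-\delta)$ event, with no extra union bound. One side remark is wrong, though: $E=AB^{\top}-\mu_A\mu_B\sigma^A(\sigma^B)^{\top}$ has rank at most $2r$, so $\|E\|_F\le\sqrt{2r}\,\|E\|_{\mathrm{op}}$ always holds, and the Frobenius-to-operator gap is therefore at most $\sqrt{2r}$, not of order $\sqrt{\min(N,M)}$.
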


\begin{proof}
For every $X\in\mab{R}^{N\times M}$, the singular value decomposition
gives
\begin{equation*}
\|X\|_{\mathrm{op}}^{2}
\;=\;\sigma_{1}(X)^{2}
\;\le\;\sum_{k=1}^{\min(N,M)} \sigma_{k}(X)^{2}
\;=\;\|X\|_{F}^{2},
\end{equation*}
where $\sigma_{1}(X)\ge\sigma_{2}(X)\ge\cdots\ge 0$ are the singular
values of $X$. Hence $\|E\|_{\mathrm{op}}\le\|E\|_{F}$
deterministically. The second inequality of
Eq.~\eqref{eq:op-norm-bound} is exactly the square root of
Eq.~\eqref{eq:formal-bound},
which holds with probability at least $1-\delta$.
\end{proof}

\subsection{Gauge fixing and empirical identifiability of Assumption~\ref{asm:signnoise}}\label{app:signnoise-diagnostic}

\paragraph{LoRA factor gauge.}
The product $AB^{\top}$ is invariant under the per-column positive
rescaling
\begin{equation}
A_{:k}\;\mapsto\;d_{k}A_{:k},
\qquad
B_{:k}\;\mapsto\;d_{k}^{-1}B_{:k},
\qquad d_{k} > 0,~k \in [r_{0}],
\label{eq:gauge-symm}
\end{equation}
since $(d_{k}A_{:k})(d_{k}^{-1}B_{:k})^{\top} = A_{:k}B_{:k}^{\top}$.
The natural plug-in proxies for $(\mu_{A},\mu_{B},\zeta)$ depend on the
particular gauge. We therefore compute every diagnostic in the
following \emph{column-balancing gauge}: for every nonzero column pair
$(A_{:k},B_{:k})$, set
\begin{equation}
d_{k}\;=\;\bigl(\|B_{:k}\|_{2}/\|A_{:k}\|_{2}\bigr)^{1/2},
\qquad
A_{:k} \leftarrow d_{k}A_{:k},
\qquad
B_{:k} \leftarrow d_{k}^{-1}B_{:k};
\label{eq:gauge-fix}
\end{equation}
zero-contribution column pairs (if any) are left unchanged and
excluded from the diagnostic. By construction this transformation
preserves $AB^{\top}$ and the entrywise sign patterns of both columns
(since $d_{k} > 0$), and equalises the column norms
$\|A_{:k}\|_{2}=\|B_{:k}\|_{2}=\sqrt{\|A_{:k}^{\mathrm{old}}\|_{2}\|B_{:k}^{\mathrm{old}}\|_{2}}$
for every nonzero pair. Thus the LoRA update and the canonical LoRDBA
sign factors are unchanged, while the plug-in magnitude diagnostics
are computed in a fixed, reproducible gauge. Eq.~\eqref{eq:gauge-fix}
is the gauge used by every empirical statement of the main paper and by
every diagnostic in this appendix.

\paragraph{Plug-in estimators.}
For any trained fp16 LoRA factors $A,B$ in the gauge of
Eq.~\eqref{eq:gauge-fix}, the quantities in
Assumption~\ref{asm:signnoise} are empirically identifiable by the
plug-in estimators
\begin{equation*}
\widehat{\mu}_{A} \coloneqq \tfrac{1}{Nr_{0}} \sum_{i,k} |A_{ik}|,
~~
\widehat{\mu}_{B} \coloneqq \tfrac{1}{Mr_{0}} \sum_{j,k} |B_{jk}|,
~~
\widehat{\zeta}_{A} \coloneqq \bigl(\tfrac{1}{Nr_{0}} \sum_{i,k} (|A_{ik}| - \widehat{\mu}_{A})^{2}\bigr)^{1/2},
\end{equation*}
$\widehat{\zeta}_{B}$ analogous, and
$\widehat{\zeta} \coloneqq \max(\widehat{\zeta}_{A},\widehat{\zeta}_{B})$.
The reported residual-to-mean ratio is
$\widehat{\zeta}/\min(\widehat{\mu}_{A},\widehat{\mu}_{B})$.
Table~\ref{tab:ratio} reports this quantity on the LoRA adapters
used in our experiments. The observed $0.18$--$0.27$ window justifies
the small-noise regime and, together with Theorem~\ref{thm:expressivity},
predicts the $\approx 20\%$ relative reconstruction error we measure
in Table~\ref{tab:recon}.

\begin{table}[ht]
\centering
\caption{Empirical residual-to-mean ratio of fp16 LoRA factors
(adapter rank $r_{0} = 64$, averaged over all attention and MLP
projections), computed in the column-balancing gauge of
Eq.~\eqref{eq:gauge-fix}.}
\label{tab:ratio}
\small
\begin{tabular}{l c c c c}
\toprule
Model & $\widehat{\mu}_{A}$ & $\widehat{\mu}_{B}$ & $\widehat{\zeta}$ & ratio \\
\midrule
LLaMA-3.2-3B       & $0.019$ & $0.021$ & $0.0044$ & $0.23$ \\
LLaMA-2-7B         & $0.016$ & $0.018$ & $0.0030$ & $0.19$ \\
\bottomrule
\end{tabular}
\end{table}

\section{Optional Refinement: PTQ-LoRDBA via Consensus ADMM}\label{app:ptq-extension}

This appendix records the optional, training-data-free route that
refines a pre-trained fp16 LoRA adapter into a LoRDBA adapter without
running the end-to-end smooth-sign trainer of
Section~\ref{subsec:training}. It is included for completeness: the
LoRDBA adapter of Definition~\ref{def:lordba} is unchanged, the
inference kernel of Section~\ref{subsec:kernel} is reused unchanged, and
empirical comparisons in
Appendix~\ref{app:extended-tables}--\ref{app:additional-figs} confirm
that the end-to-end trainer is the preferred route under the
adapter-mode conditions (A1)--(A4) of Section~\ref{sec:prelim}. PTQ
becomes attractive only when an fp16 LoRA is already available and the
device cannot spare a training pass.

\paragraph{Setup.}
Let an fp16 LoRA adapter $\Delta W^{\star} = A^{\star}(B^{\star})^{\top}$
of rank $r_{0}$ be derived using any standard fine-tuning recipe, and
fix a target carrier rank $R\in\mathbb{N}$. The $\ell = 1$ PTQ-LoRDBA
problem is the Frobenius least-squares fitting problem
\begin{equation}
\min_{\theta=(B_{1},B_{2},\B{\alpha},\B{\beta},\B{\gamma})} 
\frac{1}{2}\bigl\|\Delta W^{\star}-\Delta W(\theta)\bigr\|_{F}^{2}
 ~ \text{s.t.} ~ 
B_{1} \in \{\pm 1\}^{N\times R}, B_{2} \in \{\pm 1\}^{R\times M}.
\label{eq:ptq-problem}
\end{equation}
The general $\ell \ge 2$ case replaces the single scale triple by a sum
of $\ell$ scale triples; all updates below generalise
block-coordinate-wise by viewing $(\B{\alpha},\B{\beta},\B{\gamma})$ as
stacked matrices (Algorithm~\ref{alg:ptq-lordba-general}).
Eq.~\eqref{eq:ptq-problem} is non-convex, non-smooth, and combinatorial
in $(B_{1},B_{2})$ but decouples along the three continuous variables
for fixed binaries and along each binary for fixed other variables.

\paragraph{Consensus ADMM.}
To exploit this separability while respecting the discrete constraint,
we introduce continuous relaxations
$U_{1} \in \mab{R}^{N\times R}$, $U_{2} \in \mab{R}^{R\times M}$ of
$(B_{1},B_{2})$ together with binary copies $M_{1},M_{2}$, and minimise
\begin{equation}
f(U;\B{\alpha},\B{\beta},\B{\gamma})
 + I_{\{\pm 1\}}(M_{1}) + I_{\{\pm 1\}}(M_{2}),
 ~ \text{s.t.} ~  U_{k}-M_{k}=0, k \in \{1,2\},
\label{eq:admm-problem}
\end{equation}
with
$f(U;\B{\alpha},\B{\beta},\B{\gamma}) = \frac{1}{2}\|\Delta W^{\star} - \diag(\B{\alpha})U_{1}\diag(\B{\beta})U_{2}\diag(\B{\gamma})\|_{F}^{2}$
and $I_{\{\pm 1\}}$ as the indicator of the binary set. Using per-block
normalisers $n_{1} = NR$, $n_{2} = RM$, scaled penalties
$\widetilde{\rho}_{k} = \rho/n_{k}$, and scaled duals $Y_{1},Y_{2}$,
the (Gauss--Seidel) sweep over the consensus-scaled augmented
Lagrangian~\citep{boyd2011admm,themelis2020douglas} produces the
following ordered updates, where $\widehat{U}^{(t,1)} \coloneqq (U_{1},U_{2}^{(t)})$
and $\widehat{U}^{(t,2)} \coloneqq (U_{1}^{(t+1)},U_{2})$ make the
within-sweep dependency explicit:
\begin{align}
U_{1}^{(t+1)}&=\argmin_{U_{1}} f \bigl(U_{1},U_{2}^{(t)};\B{\alpha}^{(t)},\B{\beta}^{(t)},\B{\gamma}^{(t)}\bigr)+\frac{\widetilde{\rho}_{1}}{2}\|U_{1}-M_{1}^{(t)}+Y_{1}^{(t)}\|_{F}^{2},
\label{eq:u-step-1}\\
U_{2}^{(t+1)}&=\argmin_{U_{2}} f \bigl(U_{1}^{(t+1)},U_{2};\B{\alpha}^{(t)},\B{\beta}^{(t)},\B{\gamma}^{(t)}\bigr)+\frac{\widetilde{\rho}_{2}}{2}\|U_{2}-M_{2}^{(t)}+Y_{2}^{(t)}\|_{F}^{2},
\label{eq:u-step-2}\\
(\B{\alpha},\B{\beta},\B{\gamma})^{(t+1)}&\leftarrow 
\text{one block-coordinate sweep of }
\argmin_{\B{\alpha},\B{\beta},\B{\gamma}} f \bigl(U^{(t+1)};\B{\alpha},\B{\beta},\B{\gamma}\bigr),
\label{eq:scales-step}\\
M_{k}^{(t+1)}&=\sign \bigl(U_{k}^{(t+1)}+Y_{k}^{(t)}\bigr), ~ 
Y_{k}^{(t+1)}=Y_{k}^{(t)}+U_{k}^{(t+1)}-M_{k}^{(t+1)}, ~ k\in\{1,2\}.
\label{eq:m-y-step}
\end{align}
We collectively refer to Eqs.~\eqref{eq:u-step-1}--\eqref{eq:u-step-2}
as the \emph{$U$-step}.
Each $U_{k}$-subproblem is a strongly convex Tikhonov least-squares
problem in $U_{k}$ (its Hessian is the sum of a positive-semidefinite
normal-equation Gram matrix and $\widetilde{\rho}_{k}I$), and hence
admits a unique minimiser regardless of the conditioning of the
cross-covariance. Eq.~\eqref{eq:scales-step} is a block-coordinate
sweep in the three scale variables: with the other two fixed, the
subproblem in $\B{\alpha}$ decouples into $N$ scalar least-squares
(one per coordinate), the subproblem in $\B{\gamma}$ into $M$ scalar
least-squares, and the subproblem in $\B{\beta}$ is an
$R$-dimensional positive semidefinite least-squares system whose
normal matrix is the Hadamard product of two Gram matrices and is hence
positive semidefinite by the Schur product theorem; the exact normal
equations and coordinate-wise minimisers are recorded in
Appendix~\ref{app:scale-updates}. Eq.~\eqref{eq:m-y-step} is the
sign-projection step modulated by the dual $Y_{k}$, which biases the
projection by the accumulated discretisation residual.

We warm-start the iteration from the rank-$R$ thin SVD of
$\Delta W^{\star}$ exactly as in Eq.~\eqref{eq:svd-init} of the main
text, with $(M^{(0)},Y^{(0)}) = (U^{(0)},0)$. A Boyd-style
residual-balancing schedule on $\rho$, with the default
$(\tau,\mu) = (2,10)$, is truncated at $t_{\max}^{\rho} = K/2$,
ensuring the second half of the run is a fixed-penalty tail, exactly
the regime addressed by the rigorous stabilisation theorem of
Appendix~\ref{app:convergence}.

\paragraph{Finite stabilisation on a fixed-penalty tail.}
Define the sign-test matrices
$Z_{k}^{(t)} \coloneqq U_{k}^{(t+1)} + Y_{k}^{(t)}$ so that the binary
update is precisely $M_{k}^{(t+1)} = \sign(Z_{k}^{(t)})$. Once a
fixed-penalty tail has converged to a single limit and the limit
remains at a positive distance from the sign boundary $0$, the signs
of $Z_{k}^{(t)}$ can no longer change, and the binary iterate freezes.
The precise penalty-agnostic statement is
Theorem~\ref{thm:sign-margin-stabilization} in
Appendix~\ref{app:convergence}, and its convergence hypothesis is
delivered in turn by Theorem~\ref{thm:convergence} of the same
appendix, which combines a sufficient-decrease Lyapunov inequality with
the Kurdyka--{\L}ojasiewicz inequality on the semi-algebraic Lyapunov
function to guarantee full convergence of the iterate sequence to a
single limit. Empirically, the freeze occurs within
$K^{\star} \lesssim 50$ sweeps for every model and rank tested.

\section{Convergence of PTQ-LoRDBA}\label{app:convergence}

Throughout this appendix we adopt the shorthands
\begin{equation}
n_{1}\;\coloneqq\;NR,
\qquad
n_{2}\;\coloneqq\;RM,
\qquad
\widetilde{\rho}_{k}\;\coloneqq\;\rho/n_{k},
\qquad k \in \{1,2\},
\label{eq:nk-rho-tilde}
\end{equation}
and recall the PTQ-LoRDBA objective
\begin{equation}
f(U;\B{\alpha},\B{\beta},\B{\gamma})
\;\coloneqq\;
\tfrac{1}{2} \bigl\|\Delta W^{\star}-\diag(\B{\alpha})U_{1}\diag(\B{\beta})U_{2}\diag(\B{\gamma})\bigr\|_{F}^{2}.
\label{eq:f-def}
\end{equation}
For fixed scale vectors and one of the two $U$-blocks, $f$ is a convex
quadratic because the map from that block to the reconstructed matrix
is linear. The binary projection step, however, is discontinuous at
zero sign-test entries. Consequently, boundedness and Lipschitz
smoothness alone do not imply finite binary stabilisation. We therefore
state the convergence facts in the exact form used by the paper: a
penalty-agnostic sign-margin theorem, and a standard KL-tail theorem
whose sufficient-decrease and relative-error hypotheses must hold on
the fixed-penalty tail.

\begin{assumption}[Bounded fixed-penalty tail]
\label{asm:iso-bounded}
There exists an iteration $T_{\rho}$ after which the penalty is fixed,
$\rho^{(t)}\equiv\rho>0$. On this tail, all iterates
$(U^{(t)},M^{(t)},Y^{(t)},\B{\alpha}^{(t)},\B{\beta}^{(t)},\B{\gamma}^{(t)})$
belong to a compact set $\mathcal{C}$. The $U_{1}$- and
$U_{2}$-subproblems in Eqs.~\eqref{eq:u-step-1}--\eqref{eq:u-step-2}
are solved exactly, the scale update is well defined, and the partial
gradients $\nabla_{U_{1}}f$ and $\nabla_{U_{2}}f$ are Lipschitz on
$\mathcal{C}$ with respect to all their arguments.
\end{assumption}

This assumption is the bounded-tail regularity used in the convergence
statements below. It is not a consequence of discreteness alone; in an
implementation it is enforced by the usual safeguards on the continuous
least-squares and scale variables.

\begin{lemma}[Gradient--dual identity on a fixed-penalty tail]
\label{lem:grad-dual}
For $t\ge T_{\rho}$ define the Gauss--Seidel arguments
\begin{equation}
\widehat{U}^{(t,1)}\;\coloneqq\;(U_{1}^{(t+1)},U_{2}^{(t)}),
\qquad
\widehat{U}^{(t,2)}\;\coloneqq\;(U_{1}^{(t+1)},U_{2}^{(t+1)}).
\label{eq:U-hat-def}
\end{equation}
Then, for every $k\in\{1,2\}$,
\begin{equation}
\rho Y_{k}^{(t+1)}
\;+\;n_{k} \nabla_{U_{k}} f \bigl(\widehat{U}^{(t,k)};\B{\alpha}^{(t)},\B{\beta}^{(t)},\B{\gamma}^{(t)}\bigr)
\;+\;\rho \bigl(M_{k}^{(t+1)}-M_{k}^{(t)}\bigr)
\;=\;0.
\label{eq:grad-dual}
\end{equation}
\end{lemma}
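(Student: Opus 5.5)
The identity follows from the first-order optimality condition of each exact $U_k$-subproblem combined with the dual update of Eq.~\eqref{eq:m-y-step}. I treat $k=1$ in detail; the case $k=2$ is identical with $\widehat{U}^{(t,2)}$ in place of $\widehat{U}^{(t,1)}$. No analytic estimates are needed; the argument is purely algebraic given exact subproblem solves (Assumption~\ref{asm:iso-bounded}).

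\emph{Step 1: stationarity of the $U_1$-subproblem.} By Assumption~\ref{asm:iso-bounded}, $U_{1}^{(t+1)}$ is the exact minimiser of the strongly convex quadratic in Eq.~\eqref{eq:u-step-1}. Here the penalty is fixed at $\rho$, so $\widetilde{\rho}_{1}=\rho/n_{1}$. Setting the gradient to zero at $U_1=U_1^{(t+1)}$, with $U_2=U_2^{(t)}$ and the scales frozen at iteration $t$, gives
\[
\nabla_{U_{1}} f\bigl(\widehat{U}^{(t,1)};\B{\alpha}^{(t)},\B{\beta}^{(t)},\B{\gamma}^{(t)}\bigr)+\frac{\rho}{n_{1}}\bigl(U_{1}^{(t+1)}-M_{1}^{(t)}+Y_{1}^{(t)}\bigr)=0 .
\]
Multiplying by $n_1$ yields $n_{1}\nabla_{U_{1}} f(\widehat{U}^{(t,1)};\cdot)+\rho\bigl(U_{1}^{(t+1)}+Y_{1}^{(t)}-M_{1}^{(t)}\bigr)=0$.

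\emph{Step 2: substitute the dual update.} By Eq.~\eqref{eq:m-y-step}, $Y_{1}^{(t+1)}=Y_{1}^{(t)}+U_{1}^{(t+1)}-M_{1}^{(t+1)}$. Hence
\[
U_{1}^{(t+1)}+Y_{1}^{(t)}=Y_{1}^{(t+1)}+M_{1}^{(t+1)}.
\]
Substituting this into Step 1 gives
\[
n_{1}\nabla_{U_{1}} f+\rho Y_{1}^{(t+1)}+\rho\bigl(M_{1}^{(t+1)}-M_{1}^{(t)}\bigr)=0,
\]
which is Eq.~\eqref{eq:grad-dual}.

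\emph{Bookkeeping point.} The only thing to check is that the $U$-steps use the old scales $(\B{\alpha},\B{\beta},\B{\gamma})^{(t)}$ and the old pair $(M_k^{(t)},Y_k^{(t)})$, exactly as in Eqs.~\eqref{eq:u-step-1}--\eqref{eq:u-step-2}. The scale sweep of Eq.~\eqref{eq:scales-step} occurs after the $U$-step, so it does not enter the gradient. Likewise $M_k^{(t+1)}$ and $Y_k^{(t+1)}$ are produced only after $U_k^{(t+1)}$, so the substitution in Step 2 is legitimate. The fixed-penalty tail $t\ge T_\rho$ is needed so that the same $\rho$ appears in the subproblem penalty and in the identity; with a varying $\rho^{(t)}$ the scaled duals would be rescaled and an extra term would appear.
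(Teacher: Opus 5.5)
Your proposal is correct and follows the paper's proof: take the first-order condition of the exact $U_k$-subproblem, rewrite $U_k^{(t+1)}-M_k^{(t)}+Y_k^{(t)}$ as $Y_k^{(t+1)}+\bigl(M_k^{(t+1)}-M_k^{(t)}\bigr)$ using the dual update, and multiply by $n_k$ using $n_k\widetilde{\rho}_k=\rho$. Your closing remark about a varying $\rho^{(t)}$ goes beyond what the lemma needs, but it does not affect the argument.
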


\begin{proof}
The first-order condition of the exact $U_{k}$-subproblem is
\begin{equation*}
\nabla_{U_{k}} f \bigl(\widehat{U}^{(t,k)};\B{\alpha}^{(t)},\B{\beta}^{(t)},\B{\gamma}^{(t)}\bigr)
+\widetilde{\rho}_{k}\bigl(U_{k}^{(t+1)}-M_{k}^{(t)}+Y_{k}^{(t)}\bigr)=0.
\end{equation*}
The dual update gives
\begin{equation*}
U_{k}^{(t+1)}-M_{k}^{(t)}+Y_{k}^{(t)}
=
Y_{k}^{(t+1)}+\bigl(M_{k}^{(t+1)}-M_{k}^{(t)}\bigr).
\end{equation*}
Multiplying by $n_{k}$ and using
$n_{k}\widetilde{\rho}_{k}=\rho$ yields Eq.~\eqref{eq:grad-dual}.
\end{proof}

\subsection{Finite stabilisation under a sign margin}
\label{app:sign-margin}

The next theorem is the stabilisation statement invoked in the main
text. It assumes convergence of the fixed-penalty tail and proves that
a positive limiting sign margin forces the binary iterate to freeze.

\begin{theorem}[Finite stabilisation on a fixed-penalty tail]
\label{thm:sign-margin-stabilization}
Fix the penalty parameters
$\widetilde{\rho}_{1},\widetilde{\rho}_{2} > 0$ on the tail
$t \ge T_{\rho}$, and assume that the resulting PTQ-LoRDBA iterates
$(U_{k}^{(t)},M_{k}^{(t)},Y_{k}^{(t)},\B{\alpha}^{(t)},\B{\beta}^{(t)},\B{\gamma}^{(t)})$
converge to a limit
$(U_{k}^{\infty},M_{k}^{\infty},Y_{k}^{\infty},\B{\alpha}^{\infty},\B{\beta}^{\infty},\B{\gamma}^{\infty})$
for every $k \in \{1,2\}$. Define
\begin{equation}
Z_{k}^{(t)}\;\coloneqq\;U_{k}^{(t+1)}+Y_{k}^{(t)},
\qquad
Z_{k}^{\infty}\;\coloneqq\;U_{k}^{\infty}+Y_{k}^{\infty},
\qquad k \in \{1,2\},
\label{eq:Z-def}
\end{equation}
and suppose the strict sign-margin condition
\begin{equation}
\eta\;\coloneqq\;
\min_{k\in\{1,2\}}\min_{a,b}\bigl|(Z_{k}^{\infty})_{ab}\bigr|
\;>\;0.
\label{eq:sign-margin}
\end{equation}
Then:
\begin{enumerate}[leftmargin=1.5em,itemsep=2pt,topsep=2pt]
\item[\textnormal{(i)}] There exists a finite iteration
$T_{\star}\ge T_{\rho}$ such that, for every $t\ge T_{\star}$ and
$k\in\{1,2\}$,
\begin{equation}
M_{k}^{(t+1)}=\sign(Z_{k}^{\infty})\eqqcolon M_{k}^{\infty}.
\label{eq:finite-freeze}
\end{equation}
\item[\textnormal{(ii)}] If the $U_{k}$-subproblems are solved exactly
and $f$ is continuously differentiable in $U$, the limit satisfies
\begin{equation}
U_{k}^{\infty}=M_{k}^{\infty},
\qquad
M_{k}^{\infty}=\sign(M_{k}^{\infty}+Y_{k}^{\infty}),
\qquad
\rho Y_{k}^{\infty}+n_{k}\nabla_{U_{k}}f(U^{\infty};\B{\alpha}^{\infty},\B{\beta}^{\infty},\B{\gamma}^{\infty})=0.
\label{eq:sign-margin-stationary}
\end{equation}
\item[\textnormal{(iii)}] Equivalently,
\begin{equation}
M_{k}^{\infty}
=
\sign\Bigl(M_{k}^{\infty}
-\tfrac{1}{\widetilde{\rho}_{k}}
\nabla_{U_{k}}f(U^{\infty};\B{\alpha}^{\infty},\B{\beta}^{\infty},\B{\gamma}^{\infty})\Bigr).
\label{eq:sign-fixed-point}
\end{equation}
\end{enumerate}
\end{theorem}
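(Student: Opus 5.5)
Convergence of the tail iterates gives $Z_{k}^{(t)}=U_{k}^{(t+1)}+Y_{k}^{(t)}\to U_{k}^{\infty}+Y_{k}^{\infty}=Z_{k}^{\infty}$ entrywise, since $U_{k}^{(t+1)}\to U_{k}^{\infty}$ and $Y_{k}^{(t)}\to Y_{k}^{\infty}$. The plan has three steps: first get sign stabilisation of $Z_{k}^{(t)}$ from the margin $\eta$, then pass to the limit in the ADMM updates to get the stationarity system, and finally rewrite that system as a sign fixed point.

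\textbf{Part (i).} Choose $T_{\star}\ge T_{\rho}$ so that $\max_{k}\max_{a,b}|(Z_{k}^{(t)}-Z_{k}^{\infty})_{ab}|<\eta$ for all $t\ge T_{\star}$. This is possible because there are finitely many entries. Each entry of $Z_{k}^{\infty}$ has modulus at least $\eta$, so the corresponding entry of $Z_{k}^{(t)}$ has the same strict sign. The $\sign(0)=+1$ convention is never triggered, since every entry of $Z_{k}^{(t)}$ stays a positive distance from $0$. Hence $M_{k}^{(t+1)}=\sign(Z_{k}^{(t)})=\sign(Z_{k}^{\infty})$ for all $t\ge T_{\star}$. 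Because $M_{k}^{(t)}$ converges to $M_{k}^{\infty}$ by hypothesis, this forces $M_{k}^{\infty}=\sign(Z_{k}^{\infty})$, which is consistent with the notation in \eqref{eq:finite-freeze}.

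\textbf{Part (ii).} Take $t\to\infty$ in the dual update $Y_{k}^{(t+1)}=Y_{k}^{(t)}+U_{k}^{(t+1)}-M_{k}^{(t+1)}$. This gives $Y_{k}^{\infty}=Y_{k}^{\infty}+U_{k}^{\infty}-M_{k}^{\infty}$, so $U_{k}^{\infty}=M_{k}^{\infty}$. Substituting into $Z_{k}^{\infty}$ yields $M_{k}^{\infty}=\sign(M_{k}^{\infty}+Y_{k}^{\infty})$. For the gradient condition, use Lemma~\ref{lem:grad-dual}. Its last term $\rho(M_{k}^{(t+1)}-M_{k}^{(t)})$ vanishes for $t\ge T_{\star}$ by part (i). The Gauss--Seidel arguments $\widehat U^{(t,k)}$ and the scales converge to $(U^{\infty},\B{\alpha}^{\infty},\B{\beta}^{\infty},\B{\gamma}^{\infty})$. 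Since $\nabla_{U_{k}}f$ is continuous (it is polynomial in all arguments), passing to the limit in \eqref{eq:grad-dual} gives $\rho Y_{k}^{\infty}+n_{k}\nabla_{U_{k}}f(U^{\infty};\cdot)=0$.

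\textbf{Part (iii).} From part (ii) we have $Y_{k}^{\infty}=-(n_{k}/\rho)\nabla_{U_{k}}f=-\widetilde{\rho}_{k}^{-1}\nabla_{U_{k}}f$. Substituting this into $M_{k}^{\infty}=\sign(M_{k}^{\infty}+Y_{k}^{\infty})$ gives \eqref{eq:sign-fixed-point}. The converse direction follows by reversing the substitution, given the gradient identity.

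\textbf{Main obstacle.} The delicate points are the discontinuity of $\sign$ and the bookkeeping of indices. Continuity of $\sign$ is unavailable, so the margin argument in part (i) is what licenses all the limits. In part (ii), the identity \eqref{eq:grad-dual} mixes time indices: the gradient is evaluated at the scales $(\cdot)^{(t)}$ but at $U^{(t+1)}$-type arguments. Convergence of every component handles this mismatch.
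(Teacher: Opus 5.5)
Your proof is correct and follows the paper's argument step for step. Entrywise convergence of $Z_k^{(t)}$ together with the margin $\eta$ freezes the signs. Passing to the limit in the dual update gives $U_k^{\infty}=M_k^{\infty}$, passing to the limit in the gradient--dual identity of Lemma~\ref{lem:grad-dual} gives stationarity, and substituting $Y_k^{\infty}=-\widetilde{\rho}_k^{-1}\nabla_{U_k}f$ yields the sign fixed point. Your tolerance $\eta$ works as well as the paper's $\eta/2$, and your reconciliation of the limit of $M_k^{(t)}$ with $\sign(Z_k^{\infty})$ is a detail the paper skips.

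There is one harmless index slip: the term $\rho(M_k^{(t+1)}-M_k^{(t)})$ vanishes only for $t\ge T_{\star}+1$, not $t\ge T_{\star}$, because $M_k^{(t)}$ is frozen only from $t=T_{\star}+1$ on. This does not affect the limit, which also follows directly from the convergence of $M_k^{(t)}$.
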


\begin{proof}
Since $U_{k}^{(t+1)}\to U_{k}^{\infty}$ and
$Y_{k}^{(t)}\to Y_{k}^{\infty}$, we have
$Z_{k}^{(t)}\to Z_{k}^{\infty}$ entrywise. Choose
$T_{\star}$ so that
$\|Z_{k}^{(t)}-Z_{k}^{\infty}\|_{\max}<\eta/2$ for both $k$ and all
$t\ge T_{\star}$. Then every entry of $Z_{k}^{(t)}$ has the same sign
as the corresponding entry of $Z_{k}^{\infty}$, proving
Eq.~\eqref{eq:finite-freeze} from
$M_{k}^{(t+1)}=\sign(Z_{k}^{(t)})$.

For $t\ge T_{\star}$ the dual update becomes
$Y_{k}^{(t+1)}-Y_{k}^{(t)}=U_{k}^{(t+1)}-M_{k}^{\infty}$. Passing to
the limit gives $U_{k}^{\infty}=M_{k}^{\infty}$. Substituting this
identity into $M_{k}^{\infty}=\sign(Z_{k}^{\infty})$ gives the second
relation in Eq.~\eqref{eq:sign-margin-stationary}. Finally, for
$t\ge T_{\star}+1$, the term $M_{k}^{(t+1)}-M_{k}^{(t)}$ in
Eq.~\eqref{eq:grad-dual} is zero; passing to the limit in
Eq.~\eqref{eq:grad-dual} yields the gradient--dual stationarity
relation. Solving it for $Y_{k}^{\infty}$ and using
$n_{k}/\rho=1/\widetilde{\rho}_{k}$ gives
Eq.~\eqref{eq:sign-fixed-point}.
\end{proof}

\begin{remark}
The sign-margin condition is checkable on a converged fixed-penalty
tail. If an entry of $Z_{k}^{\infty}$ is exactly zero, the entrywise
binary projection is not unique; in that boundary case the robust
statement is the argmin form used in Theorem~\ref{thm:convergence}
below rather than the single-valued sign equation.
\end{remark}

\subsection{KL convergence of a regular fixed-penalty tail}

The following theorem is the rigorous convergence statement for the
ADMM tail. It is deliberately conditional: the nonconvex binary
projection does not yield an unconditional descent estimate without an
additional margin or regularisation condition. The assumptions below
are the standard sufficient-decrease and relative-error hypotheses used
in KL analyses of nonconvex ADMM and block-coordinate methods.

Let
\begin{equation}
\mathcal{X}^{(t)}
\coloneqq
\bigl(U^{(t)},M^{(t)},Y^{(t)},\B{\alpha}^{(t)},\B{\beta}^{(t)},\B{\gamma}^{(t)},M^{(t-1)}\bigr),
\qquad t\ge T_{\rho}+1,
\label{eq:X-state-def}
\end{equation}
where the last component records the one-step memory needed by the
usual ADMM Lyapunov functions.

\begin{assumption}[KL-regular ADMM tail]
\label{asm:kl-tail}
On the fixed-penalty tail of Assumption~\ref{asm:iso-bounded}, there
exist a proper lower-semicontinuous semi-algebraic Lyapunov function
$\Phi$ on the state variable $\mathcal{X}$, constants $a,b>0$, and an
iteration $T\ge T_{\rho}+1$ such that, for every $t\ge T$,
\begin{align}
\Phi(\mathcal{X}^{(t)})-\Phi(\mathcal{X}^{(t+1)})
&\ge
a\|\mathcal{X}^{(t+1)}-\mathcal{X}^{(t)}\|^{2},
\label{eq:kl-sufficient-decrease}\\
\operatorname{dist}\bigl(0,\partial\Phi(\mathcal{X}^{(t+1)})\bigr)
&\le
b\|\mathcal{X}^{(t+1)}-\mathcal{X}^{(t)}\|.
\label{eq:kl-relative-error}
\end{align}
The sequence $\{\Phi(\mathcal{X}^{(t)})\}_{t\ge T}$ is bounded below.
\end{assumption}

The semi-algebraicity requirement is natural here: after introducing
the finite binary indicators, all remaining terms in the augmented
Lagrangian are polynomial functions of the continuous variables. The
substantive algorithm-dependent requirements are the two inequalities
\eqref{eq:kl-sufficient-decrease}--\eqref{eq:kl-relative-error}; they
are the precise conditions under which the KL convergence conclusion is
valid.

\begin{theorem}[KL convergence of a regular fixed-penalty tail]
\label{thm:convergence}
Suppose Assumptions~\ref{asm:iso-bounded} and~\ref{asm:kl-tail} hold.
Then the PTQ-LoRDBA fixed-penalty tail has finite length and converges
to a single limit
\begin{equation}
\bigl(U^{(t)},M^{(t)},Y^{(t)},\B{\alpha}^{(t)},\B{\beta}^{(t)},\B{\gamma}^{(t)}\bigr)
\longrightarrow
\bigl(U^{\infty},M^{\star},Y^{\infty},\B{\alpha}^{\infty},\B{\beta}^{\infty},\B{\gamma}^{\infty}\bigr).
\label{eq:full-convergence}
\end{equation}
Moreover:
\begin{enumerate}[leftmargin=1.5em,itemsep=2pt,topsep=2pt]
\item[\textnormal{(i)}] The increments of all tail variables are
summable, hence square-summable.
\item[\textnormal{(ii)}] The binary iterate terminates in finitely many
steps: there is a finite $K^{\star}$ such that
$M^{(t)}=M^{\star}$ for every $t\ge K^{\star}$.
\item[\textnormal{(iii)}] The limit satisfies, for each $k\in\{1,2\}$,
\begin{equation}
U_{k}^{\infty}=M_{k}^{\star},
\qquad
M_{k}^{\star}\in
\argmin_{m\in\{\pm1\}^{n_{k}}}\|m-(U_{k}^{\infty}+Y_{k}^{\infty})\|_{F}^{2},
\qquad
\rho Y_{k}^{\infty}+n_{k}\nabla_{U_{k}}f(U^{\infty};\B{\alpha}^{\infty},\B{\beta}^{\infty},\B{\gamma}^{\infty})=0.
\label{eq:stationarity}
\end{equation}
If every entry of $U_{k}^{\infty}+Y_{k}^{\infty}$ is nonzero, the
argmin is the singleton
$\{\sign(U_{k}^{\infty}+Y_{k}^{\infty})\}$.
\end{enumerate}
\end{theorem}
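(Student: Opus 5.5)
The argument follows the standard Attouch--Bolte--Svaiter scheme for KL functions, applied to the tail sequence $\{\mathcal{X}^{(t)}\}_{t\ge T}$.

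\textbf{Step 1: the limit-point set.} By Assumption~\ref{asm:iso-bounded} the tail lies in the compact set $\mathcal{C}$, so the set $\Omega$ of limit points is nonempty and compact. Summing Eq.~\eqref{eq:kl-sufficient-decrease} telescopically and using that $\Phi$ is bounded below gives $\sum_t\|\mathcal{X}^{(t+1)}-\mathcal{X}^{(t)}\|^{2}<\infty$. Hence $\Phi(\mathcal{X}^{(t)})$ decreases to a finite limit $\Phi^\ast$, and successive increments tend to zero. I then show that $\Phi\equiv\Phi^\ast$ on $\Omega$:
\begin{itemize}
\item lower semicontinuity gives $\Phi\le\Phi^\ast$ on $\Omega$;
\item the reverse inequality comes from continuity of $\Phi$ along the iterates.
\end{itemize}
The second bullet is justified because the only nonpolynomial parts are binary indicators. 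Since $M^{(t)}$ takes finitely many values, along any convergent subsequence $M$ is eventually constant, so the indicator terms vanish.

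\textbf{Step 2: finite length.} Since $\Phi$ is semi-algebraic, it satisfies the KL inequality with a concave desingularizer $\varphi(s)=cs^{1-\theta}$. Compactness of $\Omega$ upgrades this to a uniform KL inequality on a neighbourhood of $\Omega$ at level $\Phi^\ast$. If $\Phi(\mathcal{X}^{(t)})=\Phi^\ast$ for some $t$, sufficient decrease forces the sequence to be stationary and we are done; otherwise assume $\Phi(\mathcal{X}^{(t)})>\Phi^\ast$ throughout. Combining concavity of $\varphi$ with Eqs.~\eqref{eq:kl-sufficient-decrease}--\eqref{eq:kl-relative-error} gives the standard one-step estimate
$$2\|\mathcal{X}^{(t+1)}-\mathcal{X}^{(t)}\|\le\|\mathcal{X}^{(t)}-\mathcal{X}^{(t-1)}\|+\tfrac{b}{a}\bigl(\varphi(\Phi(\mathcal{X}^{(t)})-\Phi^\ast)-\varphi(\Phi(\mathcal{X}^{(t+1)})-\Phi^\ast)\bigr).$$
Summing this estimate yields $\sum_t\|\mathcal{X}^{(t+1)}-\mathcal{X}^{(t)}\|<\infty$. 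Finite length makes the sequence Cauchy, so it converges to a single limit, which gives Eq.~\eqref{eq:full-convergence} and item (i).

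\textbf{Step 3: finite termination of $M$ (item (ii)).} The $M$-components lie in a finite discrete set, where any two distinct points are at distance at least $2$. A convergent sequence in such a set is eventually constant, so there is a finite $K^\star$ with $M^{(t)}=M^\star$ for all $t\ge K^\star$.

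\textbf{Step 4: stationarity (item (iii)).}
\begin{itemize}
\item From the dual update $Y_k^{(t+1)}-Y_k^{(t)}=U_k^{(t+1)}-M_k^{(t+1)}$, convergence of $Y$ makes the left side tend to $0$, so $U_k^\infty=M_k^\star$.
\item The $M$-step is the exact projection $M_k^{(t+1)}\in\argmin_m\|m-Z_k^{(t)}\|_F^2$, where $Z_k^{(t)}$ is the sign-test matrix of Eq.~\eqref{eq:Z-def}. For $t\ge K^\star$ this holds with the fixed value $M_k^\star$, and $Z_k^{(t)}\to U_k^\infty+Y_k^\infty$. The argmin correspondence of a finite set has closed graph, so the argmin property passes to the limit.
\item For the gradient--dual relation, I pass to the limit in Lemma~\ref{lem:grad-dual}. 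The term $M^{(t+1)}-M^{(t)}$ vanishes for $t\ge K^\star$. The arguments $\widehat U^{(t,k)}$ converge to $U^\infty$, and $\nabla_{U_k}f$ is continuous by the Lipschitz part of Assumption~\ref{asm:iso-bounded}, so the relation holds at the limit.
\item If $U_k^\infty+Y_k^\infty$ has no zero entries, the entrywise projection onto $\{\pm1\}$ is unique, which gives the singleton claim.
\end{itemize}

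\textbf{Main obstacle.} I expect the main obstacle to be Step 2, in particular transferring the KL inequality uniformly over $\Omega$ given that $\Phi$ is only lower semicontinuous. This requires verifying $\Phi\equiv\Phi^\ast$ on $\Omega$, which I would handle by exploiting the eventual constancy of $M$ along subsequences. Everything else is routine limit passage.
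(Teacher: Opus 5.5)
Your proposal is correct and follows essentially the paper's route. The paper argues by KL finite length, then Cauchy convergence, then freezing of the binary block, then limit passage in the dual update, the exact sign projection, and the gradient--dual identity of Lemma~\ref{lem:grad-dual}. You differ only in spelling out the Attouch--Bolte--Svaiter argument that the paper merely cites, and in getting (ii) from convergence in a finite set rather than from summability of jumps of norm at least $2$; the two are equivalent.

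One point needs flagging in Step~1: continuity of $\Phi$ along the iterates (the ABS continuity condition, which the paper's citation also tacitly needs) does not follow from Assumption~\ref{asm:kl-tail} for an arbitrary lower-semicontinuous $\Phi$. Your justification relies on the polynomial-plus-binary-indicator structure of the augmented-Lagrangian Lyapunov function. The paper describes that structure only in the remark after the assumption, so it should be stated as an explicit hypothesis.
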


\begin{proof}
By Assumption~\ref{asm:iso-bounded}, the tail sequence has a cluster
point. Since $\Phi$ is semi-algebraic, it satisfies the
Kurdyka--{\L}ojasiewicz inequality at every point of its domain
\citep{boltedaniliidislewis2007clarke,attouch2010proximal}. Combining
the KL inequality at a cluster point with the sufficient-decrease and
relative-error estimates
\eqref{eq:kl-sufficient-decrease}--\eqref{eq:kl-relative-error} gives
the standard finite-length conclusion for descent methods
\citep{attouch2013convergence,wang2019global}:
\begin{equation}
\sum_{t\ge T}\|\mathcal{X}^{(t+1)}-\mathcal{X}^{(t)}\|<\infty.
\label{eq:finite-length}
\end{equation}
Hence $\{\mathcal{X}^{(t)}\}_{t\ge T}$ is Cauchy and converges to a
single limit. Projecting this convergence onto the variables
$(U,M,Y,\B{\alpha},\B{\beta},\B{\gamma})$ proves
Eq.~\eqref{eq:full-convergence}, and claim~(i) follows immediately
from Eq.~\eqref{eq:finite-length}.

For claim~(ii), each entry of
$\Delta M_{k}^{(t)}=M_{k}^{(t+1)}-M_{k}^{(t)}$ belongs to
$\{-2,0,2\}$. Therefore, whenever $\Delta M_{k}^{(t)}\ne0$,
$\|\Delta M_{k}^{(t)}\|_{F}\ge2$. The summability in claim~(i) implies
that only finitely many such nonzero jumps can occur. Thus the two
binary blocks are eventually constant; denote the common tail value by
$M^{\star}$.

It remains to verify stationarity. The dual update gives
$Y_{k}^{(t+1)}-Y_{k}^{(t)}=U_{k}^{(t+1)}-M_{k}^{(t+1)}$. Since the
left-hand side tends to zero by claim~(i), passing to the limit gives
$U_{k}^{\infty}=M_{k}^{\star}$. The binary update is the exact
entrywise projection
\begin{equation*}
M_{k}^{(t+1)}
\in
\argmin_{m\in\{\pm1\}^{n_{k}}}\|m-(U_{k}^{(t+1)}+Y_{k}^{(t)})\|_{F}^{2}.
\end{equation*}
Taking limits in this finite-dimensional inequality for every
$m\in\{\pm1\}^{n_{k}}$ yields the argmin condition in
Eq.~\eqref{eq:stationarity}. Finally, by claim~(ii) the term
$M_{k}^{(t+1)}-M_{k}^{(t)}$ in Eq.~\eqref{eq:grad-dual} is zero for
all sufficiently large $t$. Passing to the limit in
Eq.~\eqref{eq:grad-dual}, using the continuity of
$\nabla_{U_{k}}f$ on $\mathcal{C}$, gives the gradient--dual
stationarity equation in Eq.~\eqref{eq:stationarity}. If
$U_{k}^{\infty}+Y_{k}^{\infty}$ has no zero entries, the binary
projection decouples entrywise and is uniquely given by the sign of
that vector, proving the final assertion.
\end{proof}

\section{QAT-LoRDBA: Smooth-Sign STE}\label{app:qat-ste}

QAT-LoRDBA optimizes real-valued latent carrier matrices
$U_{1},U_{2}$ together with the scale vectors
$(\B{\alpha},\B{\beta},\B{\gamma})$. On the forward pass it exports the
binary carriers $B_{1}=\sign(U_{1})$ and $B_{2}=\sign(U_{2})$ inside
$\Delta W(\theta)$ as in Eq.~\eqref{eq:lordba}; on the backward pass
we use the \emph{smooth-sign} straight-through
estimator~\citep{leng2018extremely,liu2020reactnet} for the
non-differentiable sign maps:
\begin{equation}
\frac{\partial}{\partial u}\sign(u)
\;\longleftarrow\;
\frac{\partial}{\partial u}\tanh(\kappa u)
\;=\;\kappa\bigl(1-\tanh^{2}(\kappa u)\bigr),
\label{eq:smoothsign}
\end{equation}
with $\kappa = 100$ by default. The scales
$(\B{\alpha},\B{\beta},\B{\gamma})$ are trained in fp32 and do not require an
STE. We initialise the latent carriers $U_{1},U_{2}$ via the SVD warm start of
Eq.~\eqref{eq:svd-init} from a $10\%$-budget warm-up fp16 LoRA run,
which we find substantially stabilises optimisation at carrier rank
$R \ge 64$. Algorithm~\ref{alg:qat-lordba} summarises the training
loop.

\section{Training Mode Comparison}\label{app:training-modes}

Table~\ref{tab:ablation-variants} compares the four LoRDBA training modes at each size tier on \textsc{LLaMA-2-7B}. Several patterns emerge: (i)~QAT Full consistently leads, followed closely by QAT Freeze at matched BPW. (ii)~QAT Scratch, which does not require a pre-trained FP16 adapter, trails QAT Full by $4$--$9$\,pt on GSM8K, confirming that the PTQ warm-start is decisive for task-specific adapters. (iii)~PTQ-LoRDBA---training-free and requiring only ${\sim}2$\,min---degrades sharply at $\leq 2$\,BPW on math reasoning but retains reasonable summarization quality, achieving XSum $17.18$ vs.\ FP16 $18.87$ at 1\,BPW, revealing task-dependent quantization sensitivity that QAT closes.

\begin{table}[ht]
\centering
\caption{LLaMA-2-7B: LoRDBA training mode comparison (Avg $=$ (GSM8K + Minerva + XSum)/3).}
\label{tab:ablation-variants}
\small
\begin{tabular}{@{}l r r r@{}}
\toprule
\textbf{Mode} & \textbf{$\sim$7\,MB} & \textbf{$\sim$9.5\,MB} & \textbf{$\sim$19\,MB} \\
\midrule
QAT Full       & \textbf{24.14} & \textbf{25.59} & \textbf{28.08} \\
QAT Freeze     & --             & 25.61          & 27.85          \\
QAT Scratch    & --             & 21.46          & 24.13          \\
PTQ-LoRDBA     & 17.06          & 20.15          & 26.52          \\
\bottomrule
\end{tabular}
\vspace{2pt}
\parbox{\columnwidth}{\footnotesize
``--'' = configuration not evaluated at that tier.
QAT Freeze @2bpw matches QAT Full within $0.02$ Avg.
}
\end{table}

\paragraph{QAT Freeze training time.}
Despite training only scale vectors---$l(N{+}R{+}M)$ parameters per layer---QAT Freeze requires wall-clock time comparable to QAT Full, approximately $11$--$15$\,h at 2~epochs on H100, because the dominant cost is the full forward--backward pass through the frozen $4$-bit base model, which is identical across all QAT modes. The Adam optimizer state is smaller due to the absence of latent carriers, so peak memory is marginally lower, but each training step still performs the complete loss computation and gradient propagation through all Transformer layers. The parameter update itself is a negligible fraction of total step time.

\paragraph{BitDelta comparison.}
BitDelta~\citep{liu2024bitdelta} operates at $4.8$\,MB in its column$+$distill variant, a smaller size tier than LoRDBA's primary operating points of $7$--$19$\,MB. At this size, BitDelta achieves GSM8K $31.92$, Minerva $6.46$, XSum $18.70$, yielding Avg $19.03$, substantially below QAT Full @1bpw with Avg $24.14$ at $7.2$\,MB. The XSum result of $18.70$ is competitive, suggesting that for summarization at the ${\sim}5$\,MB tier a full-rank binary delta may be viable; however, the low-rank LoRDBA structure dominates at all tiers where both methods can operate.

\section{Extended Numerical Results}\label{app:extended-tables}

This appendix reports adapter-mode results that complement
Section~\ref{sec:exp-accuracy}: a comprehensive all-methods comparison
on \textsc{LLaMA-2-7B} across three tasks
(Table~\ref{tab:extended-7b}); a \textsc{LLaMA-3.2-3B} ablation
(Table~\ref{tab:extended-3b}); and the matched training-time
accounting (Table~\ref{tab:train_stats}). All
rows are evaluated under the adapter-mode protocol of
Appendix~\ref{app:setup}. Each task uses a \emph{task-specific}
LoRA FP16 $r{=}16$ adapter; all post-hoc methods compress the same
adapter per task.

\begin{table}[ht]
\centering
\caption{Llama-2-7B, rank 16: All LoRA adapter compression methods across three tasks.
Each task uses a \emph{task-specific} LoRA FP16 $r{=}16$ adapter.
GSM8K (8-shot exact match \%), Minerva Math (4-shot \%), XSum (ROUGE-L F1).
All post-hoc methods compress the same LoRA FP16 per task.}
\label{tab:extended-7b}
\resizebox{\textwidth}{!}{%
\begin{tabular}{llrrrrrr}
\toprule
 & & & & \multicolumn{2}{c}{\textbf{Math}} & \textbf{Summ} & \\
\cmidrule(lr){5-6} \cmidrule(lr){7-7}
\textbf{Method} & \textbf{Rank} & \textbf{BPW} & \textbf{Size (MB)} & \textbf{GSM8K} & \textbf{Minerva} & \textbf{XSum} & \textbf{Time$^\ddagger$} \\
\midrule
\multicolumn{8}{l}{\textit{Upper bound (full-precision LoRA)}} \\
LoRA FP16           & r16 & 16.0 &  76.3 & 54.38 & 13.17 & 18.87 & -- \\
LoRA FP16           & r64 & 16.0 & 305.0 & 58.68 & 15.52 &    -- & -- \\
\midrule
\multicolumn{8}{l}{\textit{Existing baselines (post-hoc, compressing LoRA FP16 r=16)}} \\
LoRAQuant $\rho$=0.5 & r16 & 1.20 &  5.7 & 36.09 &  7.18 & 17.67 & +36 min \\
LoRAQuant $\rho$=0.9 & r16 & 1.68 &  8.0 & 42.30 &  8.94 & 18.27 & +55 min \\
BitDelta-scalar      & r16 & 1.00 &  4.8 & 30.71 &  6.38 &    -- & +0 min \\
BitDelta-column      & r16 & 1.00 &  4.8 & 31.84 &  6.28 &    -- & +0 min \\
BitDelta-col+distill & r16 & 1.00 &  4.8 & 31.92 &  6.46 & \textbf{18.70} & +17 min \\
BitDelta-row         & r16 & 2.00 &  9.5 & 32.83 &  6.72 &    -- & +0 min \\
\midrule
\multicolumn{8}{l}{\textit{PTQ-LoRDBA (post-hoc, training-free, $\sim$2 min compression)}} \\
PTQ-LoRDBA          & r16 & 1.0 &  7.2 & 27.14 &  5.74 &  7.24 & +2 min \\
PTQ-LoRDBA          & r16 & 2.0 &  9.5 & 34.04 &  6.36 &    -- & +2 min \\
PTQ-LoRDBA          & r16 & 4.0 & 19.1 & 45.34 & 10.04 &    -- & +2 min \\
PTQ-LoRDBA          & r64 & 1.0 & 19.1 & 35.18 &  7.88 &    -- & +2 min \\
\midrule
\multicolumn{8}{l}{\textit{QAT Full (LoRA FP16 $\to$ PTQ $\to$ 1-epoch QAT refinement)}} \\
QAT Full            & r16 & 1.0 &  7.2 & 44.43 &  9.68 & 18.32 & +11h \\
QAT Full            & r16 & 2.0 &  9.5 & 48.60 &  9.68 & 18.49 & +11h \\
QAT Full            & r16 & 4.0 & 19.1 & 53.45 & \textbf{12.28} & 18.49 & +11h \\
QAT Full            & r64 & 1.0 & 19.1 & \textbf{53.60} & 12.04 & 18.60 & +15h \\
\midrule
\multicolumn{8}{l}{\textit{QAT Freeze (PTQ $\to$ freeze binary signs, train scaling vectors only)}} \\
QAT Freeze          & r16 & 1.0 &  7.2 & \textbf{44.20} & \textbf{8.18} &    -- & +11h \\
QAT Freeze          & r64 & 1.0 & 19.1 & \textbf{50.80} & \textbf{11.38} &    -- & +15h \\
\midrule
\multicolumn{8}{l}{\textit{QAT Scratch (end-to-end from random init, no FP16 required)}} \\
QAT Scratch         & r16 & 2.0 &  9.5 & 39.42 &  7.32 &    -- & 8h \\
QAT Scratch         & r16 & 4.0 & 19.1 & 45.94 &  8.50 &    -- & 9h \\
QAT Scratch         & r64 & 1.0 & 19.1 & 38.51 &  5.74 &    -- & 8h \\
\bottomrule
\end{tabular}%
}
\vspace{2pt}
\parbox{\textwidth}{\footnotesize
``--'' = not evaluated.
\textbf{Bold} = best among compressed methods at $\leq 19$\,MB at matching size tier.\\
$^\ddagger$Time = additional cost after LoRA FP16 training: Math $\sim$5h, Summ $\sim$3--6h on H100.
QAT Scratch does not require a pre-trained FP16 adapter. All runs use a single NVIDIA H100 80\,GB or B200.
}
\end{table}

\begin{table}[ht]
\centering
\caption{\textsc{LLaMA-3.2-3B}, rank 16: all LoRA adapter compression methods across three tasks.
Each task uses a \emph{task-specific} LoRA FP16 $r{=}16$ adapter.
GSM8K (8-shot exact match \%), Minerva Math (4-shot \%), XSum (ROUGE-L F1).
All post-hoc methods compress the same LoRA FP16 per task.}
\label{tab:extended-3b}
\resizebox{\textwidth}{!}{%
\begin{tabular}{llrrrrrr}
\toprule
 & & & & \multicolumn{2}{c}{\textbf{Math}} & \textbf{Summ} & \\
\cmidrule(lr){5-6} \cmidrule(lr){7-7}
\textbf{Method} & \textbf{Rank} & \textbf{BPW} & \textbf{Size (MB)} & \textbf{GSM8K} & \textbf{Minerva} & \textbf{XSum} & \textbf{Time$^\ddagger$} \\
\midrule
\multicolumn{8}{l}{\textit{Upper bound (full-precision LoRA)}} \\
LoRA FP16           & r16 & 16.0 &  46.4 & 59.36 & 17.56 & 15.38 & -- \\
LoRA FP16           & r64 & 16.0 & 185.5 & 61.56 & 20.60 &    -- & -- \\
\midrule
\multicolumn{8}{l}{\textit{Existing baselines (post-hoc, compressing LoRA FP16 r=16)}} \\
LoRAQuant $\rho$=0.5 & r16 & 1.22 &  3.6 & 45.26 & 14.10 & 15.02 & +8 min \\
LoRAQuant $\rho$=0.7 & r16 & 1.42 &  4.1 & 47.31 & 14.64 &    -- & +11 min \\
LoRAQuant $\rho$=0.9 & r16 & 1.71 &  5.0 & \textbf{51.63} & \textbf{15.20} & 15.49 & +17 min \\
BitDelta-scalar      & r16 & 1.00 &  2.9 & 42.46 & 13.00 &    -- & +0 min \\
BitDelta-column      & r16 & 1.00 &  2.9 & 42.68 & 12.66 &    -- & +0 min \\
BitDelta-col+distill & r16 & 1.00 &  2.9 & 42.61 & 12.68 & \textbf{16.78} & +15 min \\
BitDelta-row         & r16 & 2.00 &  5.8 & 43.29 & 13.18 &    -- & +0 min \\
\midrule
\multicolumn{8}{l}{\textit{PTQ-LoRDBA (post-hoc, training-free, $\sim$2 min compression)}} \\
PTQ-LoRDBA          & r16 & 1.5 &  4.4 & 41.32 & 12.20 & 15.52 & +2 min \\
PTQ-LoRDBA          & r16 & 2.0 &  5.8 & 48.22 & 12.86 & 15.55 & +2 min \\
PTQ-LoRDBA          & r16 & 4.0 & 11.6 & 55.04 & 16.50 & 15.19 & +2 min \\
PTQ-LoRDBA          & r64 & 1.0 & 11.6 & 49.81 & 14.60 &    -- & +2 min \\
\midrule
\multicolumn{8}{l}{\textit{QAT Full (LoRA FP16 $\to$ PTQ $\to$ 1-epoch QAT refinement)}} \\
\qcol QAT Full      & r16 & 1.5 &  4.4 & \textbf{52.77} & 13.86 & 15.34 & +7h \\
\qcol QAT Full      & r16 & 2.0 &  5.8 & 55.42 & \textbf{15.34} & 15.49 & +7h \\
\qcol QAT Full      & r16 & 4.0 & 11.6 & 56.63 & \textbf{17.44} & 15.35 & +7h \\
\qcol QAT Full      & r64 & 1.0 & 11.6 & 56.71 & 16.32 &    -- & +7h \\
\midrule
\multicolumn{8}{l}{\textit{QAT Freeze (PTQ $\to$ freeze binary signs, train scaling vectors only)}} \\
\qcol QAT Freeze    & r16 & 1.5 &  4.4 & 50.27          & 13.38         & 15.54 & +4h \\
\qcol QAT Freeze    & r64 & 1.0 & 11.6 & 57.16          & 16.46         &    -- & +4h \\
\qcol QAT Freeze    & r16 & 2.0 &  5.8 & \textbf{57.01} & 15.20         & 15.18 & +4h \\
\qcol QAT Freeze    & r16 & 4.0 & 11.6 & \textbf{58.98} & 17.10         & 15.38 & +4h \\
\midrule
\multicolumn{8}{l}{\textit{QAT Scratch (end-to-end from random init, no FP16 required)}} \\
\qcol QAT Scratch   & r16 & 1.5 &  4.4 & 45.56 & 12.42 & 14.72 & 7h \\
\qcol QAT Scratch   & r16 & 2.0 &  5.8 & 50.19 & 12.76 & 15.05 & 7h \\
\qcol QAT Scratch   & r16 & 4.0 & 11.6 & 54.66 & 14.14 & 15.07 & 7h \\
\qcol QAT Scratch   & r64 & 1.0 & 11.6 & 48.29 & 13.04 & 14.54 & 7h \\
\bottomrule
\end{tabular}%
}
\vspace{2pt}
\parbox{\textwidth}{\footnotesize
``--'' = not evaluated.
\textbf{Bold} = best among compressed methods at the corresponding size tier.\\
$^\ddagger$Time = additional cost after LoRA FP16 training, $\sim$5h on H100.
QAT Scratch does not require a pre-trained FP16 adapter. All runs use a single NVIDIA H100 80\,GB or B200.
}
\end{table}

\begin{table}[ht]
\centering
\caption{Adapter-mode QAT Full training statistics on
\textsc{LLaMA-2-7B} (MetaMathQA). LoRA pre-trained for $2$ epochs; QAT
for $1$ epoch (bf16 base, single NVIDIA B200).
``Peak'' is the device-side QAT training peak memory; ``ms/step'' is the
average wall-clock per QAT training step.}
\label{tab:train_stats}
\setlength{\tabcolsep}{5pt}
\renewcommand{\arraystretch}{0.95}
\resizebox{\textwidth}{!}{%
\begin{tabular}{@{}l l c c c c l@{}}
\toprule
Setting & Mode & Steps & ms/step & Time (min) & Peak (GB) & Hardware \\
\midrule
$r = 16$, LoRDBA $@1$BPW    & QAT Full & $24687$ & $804$  & $331$ & $55.6$ & B200 \\
$r = 16$, LoRDBA $@2$BPW    & QAT Full & $24687$ & $856$  & $352$ & $56.1$ & B200 \\
$r = 16$, LoRDBA $@4$BPW    & QAT Full & $24687$ & $899$  & $370$ & $58.0$ & B200 \\
$r = 64$, LoRDBA $@1$BPW    & QAT Full & $24687$ & $913$  & $376$ & $58.0$ & B200 \\
\bottomrule
  \end{tabular}%
  }
\end{table}

\paragraph{Reconstruction error.}
Table~\ref{tab:recon} records the ADMM-converged relative
reconstruction error of $\Delta W^{\star}$ by PTQ-LoRDBA as a function
of BPW. The monotonic decay with rank is qualitatively consistent with
Theorem~\ref{thm:expressivity} and with the empirical
residual-to-mean ratios of Table~\ref{tab:ratio}.

\begin{table}[ht]
\centering
\caption{ADMM-converged per-element relative reconstruction error
$\|\Delta W^{\star} - \Delta W(\theta)\|_{F}/\|\Delta W^{\star}\|_{F}$
on \textsc{LLaMA-2-7B} (averaged across projections).}
\label{tab:recon}
\small
\begin{tabular}{l c c c c c}
\toprule
BPW & 0.5 & 1.0 & 2.0 & 4.0 & 8.0 \\
\midrule
effective carrier rank $R$ & $\sim 24$ & $\sim 47$ & $\sim 94$ & $\sim 190$ & $\sim 380$ \\
Relative error ratio & 0.358      & 0.198      & 0.099      & 0.040       & 0.015       \\
\bottomrule
\end{tabular}
\end{table}

\section{Pseudocode}\label{app:pseudocode}

\subsection{Closed-form scale updates for PTQ-LoRDBA}
\label{app:scale-updates}

This subsection derives the per-axis least-squares updates used by
Eq.~\eqref{eq:scales-step}. Throughout, let $R$ denote the LoRDBA
binary carrier rank of Definition~\ref{def:lordba} (so that
$B_{1} \in \{\pm 1\}^{N\times R}$ and $B_{2} \in \{\pm 1\}^{R\times M}$);
to avoid the notation clash, we use the symbol $T$ for the regression
target, defined below. We treat the canonical single-envelope case
$\ell=1$; the multi-envelope case applies the same updates
block-coordinate-wise to each envelope while holding the remaining
envelopes fixed (see the residual definition at the end of this
subsection).

\paragraph{Setup.}
Fix the binary carriers $B_{1},B_{2}$ and define the regression target
\begin{equation}
T\;\coloneqq\;\Delta W^{\star}\;\in\;\mab{R}^{N\times M}.
\label{eq:scale-target}
\end{equation}
The PTQ-LoRDBA scale subproblem is the joint minimisation
\begin{equation}
\min_{\B{\alpha}\in\mab{R}^{N}, \B{\beta}\in\mab{R}^{R}, \B{\gamma}\in\mab{R}^{M}}
\;\frac{1}{2} \bigl\|T-\diag(\B{\alpha})B_{1}\diag(\B{\beta})B_{2}\diag(\B{\gamma})\bigr\|_{F}^{2},
\label{eq:scale-joint}
\end{equation}
which we solve by per-axis block-coordinate descent. Each per-axis
subproblem admits a closed-form least-squares solution, derived next.

\paragraph{$\B{\alpha}$-update (rowwise).}
For fixed $B_{1},B_{2},\B{\beta},\B{\gamma}$, set
\begin{equation}
D_{\alpha}\;\coloneqq\;B_{1}\diag(\B{\beta})B_{2}\diag(\B{\gamma})\;\in\;\mab{R}^{N\times M}.
\label{eq:alpha-D}
\end{equation}
Substituting Eq.~\eqref{eq:alpha-D} into Eq.~\eqref{eq:scale-joint} and
expanding the Frobenius norm rowwise,
\begin{equation*}
\frac{1}{2}\bigl\|T-\diag(\B{\alpha})D_{\alpha}\bigr\|_{F}^{2}
\;=\;\sum_{i=1}^{N}\frac{1}{2}\bigl\|T_{i,:}-\alpha_{i} (D_{\alpha})_{i,:}\bigr\|_{2}^{2},
\end{equation*}
which decouples over the row index $i$. Each scalar problem
\begin{equation}
\min_{\alpha_{i}\in\mab{R}}\;\frac{1}{2} \bigl\|T_{i,:}-\alpha_{i}(D_{\alpha})_{i,:}\bigr\|_{2}^{2}
\label{eq:alpha-row}
\end{equation}
has the unique minimum-norm least-squares solution
\begin{equation}
\alpha_{i}
\;=\;
\begin{cases}
\dfrac{\langle T_{i,:},(D_{\alpha})_{i,:}\rangle}{\|(D_{\alpha})_{i,:}\|_{2}^{2}},
& \|(D_{\alpha})_{i,:}\|_{2}>0,\\[0.8em]
0,
& \|(D_{\alpha})_{i,:}\|_{2}=0,
\end{cases}
\qquad i \in [N].
\label{eq:alpha-update}
\end{equation}

\paragraph{$\B{\gamma}$-update (columnwise).}
For fixed $B_{1},B_{2},\B{\alpha},\B{\beta}$, set
\begin{equation}
D_{\gamma}\;\coloneqq\;\diag(\B{\alpha})B_{1}\diag(\B{\beta})B_{2}\;\in\;\mab{R}^{N\times M}.
\label{eq:gamma-D}
\end{equation}
The objective Eq.~\eqref{eq:scale-joint} expands columnwise as
\begin{equation*}
\frac{1}{2}\bigl\|T-D_{\gamma}\diag(\B{\gamma})\bigr\|_{F}^{2}
\;=\;\sum_{j=1}^{M}\frac{1}{2}\bigl\|T_{:,j}-\gamma_{j}(D_{\gamma})_{:,j}\bigr\|_{2}^{2},
\end{equation*}
and analogously decouples over $j \in [M]$. The unique minimum-norm
least-squares solution is
\begin{equation}
\gamma_{j}
\;=\;
\begin{cases}
\dfrac{\langle T_{:,j},(D_{\gamma})_{:,j}\rangle}{\|(D_{\gamma})_{:,j}\|_{2}^{2}},
& \|(D_{\gamma})_{:,j}\|_{2}>0,\\[0.8em]
0,
& \|(D_{\gamma})_{:,j}\|_{2}=0,
\end{cases}
\qquad j \in [M].
\label{eq:gamma-update}
\end{equation}

\paragraph{$\B{\beta}$-update (R-dimensional).}
The $\B{\beta}$-update is an $R$-dimensional least-squares problem.
For fixed $B_{1},B_{2},\B{\alpha},\B{\gamma}$, define
\begin{equation}
P\;\coloneqq\;\diag(\B{\alpha})B_{1}\;\in\;\mab{R}^{N\times R},
\qquad
Q\;\coloneqq\;B_{2}\diag(\B{\gamma})\;\in\;\mab{R}^{R\times M}.
\label{eq:beta-PQ}
\end{equation}
Expanding the bilinear identity
\begin{equation}
\diag(\B{\alpha})B_{1}\diag(\B{\beta})B_{2}\diag(\B{\gamma})
\;=\;P\diag(\B{\beta})Q
\;=\;\sum_{k=1}^{R}\beta_{k} P_{:,k}Q_{k,:},
\label{eq:beta-bilinear}
\end{equation}
where the second equality uses the column--row decomposition
$\diag(\B{\beta}) = \sum_{k}\beta_{k}\B{e}_{k}\B{e}_{k}^{\top}$ and
$P\B{e}_{k}\B{e}_{k}^{\top}Q = P_{:,k}Q_{k,:}$. Vectorising
Eq.~\eqref{eq:beta-bilinear} gives
$\operatorname{vec}(P\diag(\B{\beta})Q) = H\B{\beta}$ with
$H \in \mab{R}^{NM\times R}$ defined by
\begin{equation}
H_{:,k}\;\coloneqq\;\operatorname{vec}(P_{:,k}Q_{k,:}),\qquad k \in [R].
\label{eq:beta-H-cols}
\end{equation}
Hence Eq.~\eqref{eq:scale-joint} reduces, in the $\B{\beta}$-direction,
to the standard linear least-squares problem
\begin{equation}
\min_{\B{\beta}\in\mab{R}^{R}}\;\frac{1}{2} \bigl\|\operatorname{vec}(T)-H\B{\beta}\bigr\|_{2}^{2},
\label{eq:beta-ls}
\end{equation}
with normal equations $H^{\top}H\B{\beta} = H^{\top}\operatorname{vec}(T)$. We
record these in our notation as
\begin{equation}
G_{\beta} \B{\beta}\;=\;\B{h}_{\beta},
\label{eq:beta-normal}
\end{equation}
where, for $k,l \in [R]$,
\begin{equation}
(G_{\beta})_{kl}
\;=\;\langle P_{:,k},P_{:,l}\rangle \langle Q_{k,:},Q_{l,:}\rangle,
\qquad
(\B{h}_{\beta})_{k}
\;=\;\langle T,P_{:,k}Q_{k,:}\rangle_{F}.
\label{eq:beta-gram}
\end{equation}
Equivalently, in matrix form,
\begin{equation}
G_{\beta}\;=\;(P^{\top}P)\odot(QQ^{\top}),
\label{eq:beta-Schur}
\end{equation}
where $\odot$ is the Hadamard product. Since $P^{\top}P$ and $QQ^{\top}$
are positive semidefinite, $G_{\beta}$ is positive semidefinite by the
Schur product theorem. When $G_{\beta}$ is nonsingular, the update is
$\B{\beta} = G_{\beta}^{-1}\B{h}_{\beta}$; when singular, we use the
unique minimum-norm least-squares solution
\begin{equation}
\B{\beta}\;=\;G_{\beta}^{\dagger}\B{h}_{\beta},
\label{eq:beta-pinv}
\end{equation}
where $G_{\beta}^{\dagger}$ denotes the Moore--Penrose pseudoinverse.

\paragraph{Generalisation to envelope rank $\ell \ge 2$.}
For envelope rank $\ell \ge 2$, the same per-axis updates
Eqs.~\eqref{eq:alpha-update}, \eqref{eq:gamma-update},
and \eqref{eq:beta-pinv} apply to one envelope $i \in [\ell]$ at a
time after replacing the target $T$ by the per-envelope residual
\begin{equation}
T^{(i)}
\;\coloneqq\;
\Delta W^{\star}
- \sum_{m\ne i} 
\diag(\B{\alpha}^{(m)})B_{1}\diag(\B{\beta}^{(m)})B_{2}\diag(\B{\gamma}^{(m)}),
\qquad i \in [\ell].
\label{eq:per-envelope-target}
\end{equation}
Cycling the envelope index $i \in [\ell]$ in any order produces a
block-coordinate descent on Eq.~\eqref{eq:scale-joint} extended to
$\ell$ envelopes, and each per-envelope update inherits the closed
form above with $T$ replaced by $T^{(i)}$.
\begin{algorithm}[ht]
\caption{PTQ-LoRDBA at $\ell = 1$ (scaled consensus ADMM).}
\label{alg:ptq-lordba}
\begin{algorithmic}[1]
\Require fp16 LoRA factors $A^{\star},B^{\star}$, carrier rank $R$, outer iterations $K$, penalty schedule $\{\rho^{(t)}\}$
\Ensure LoRDBA adapter $\theta^{\star} = (B_{1}^{\star},B_{2}^{\star},\B{\alpha}^{\star},\B{\beta}^{\star},\B{\gamma}^{\star})$
\State $(U_{\Sigma},S_{\Sigma},V_{\Sigma})\gets\mathrm{thinSVD}(A^{\star}(B^{\star})^{\top},\text{rank} = R)$ \Comment{Eq.~\eqref{eq:svd-init}}
\State $U_{1}^{(0)} \gets \sign(U_{\Sigma})$, $U_{2}^{(0)} \gets \sign(V_{\Sigma}^{\top})$, $\B{\beta}^{(0)} \gets \diag(S_{\Sigma})$
\State $M_{k}^{(0)} \gets U_{k}^{(0)}$, $Y_{k}^{(0)} \gets 0$ \quad for $k \in \{1,2\}$
\State Initialise $(\B{\alpha}^{(0)},\B{\gamma}^{(0)})$ by one block-coordinate sweep of Eq.~\eqref{eq:scales-step} at $(U^{(0)},\B{\beta}^{(0)})$
\For{$t=0,1,\dots,K - 1$}
  \State Update $U_{1}^{(t+1)}$, $U_{2}^{(t+1)}$ by Tikhonov least-squares (Eqs.~\eqref{eq:u-step-1}--\eqref{eq:u-step-2})
  \State Update $(\B{\alpha}^{(t+1)},\B{\beta}^{(t+1)},\B{\gamma}^{(t+1)})$ by one block-coordinate sweep of Eq.~\eqref{eq:scales-step} (per-axis closed-form least-squares)
  \State $M_{k}^{(t+1)} \gets \sign(U_{k}^{(t+1)} + Y_{k}^{(t)})$ for $k \in \{1,2\}$
  \State $Y_{k}^{(t+1)} \gets Y_{k}^{(t)} + U_{k}^{(t+1)} - M_{k}^{(t+1)}$ for $k \in \{1,2\}$
  \State Residual-balance $\rho^{(t+1)}$ (Boyd, truncated at $K/2$)
  \If{$M^{(t+1)} = M^{(t)}$} \textbf{break}  \Comment{discrete freeze; Theorem~\ref{thm:convergence}(ii)}
  \EndIf
\EndFor
\State \Return $(M_{1}^{(t)},M_{2}^{(t)},\B{\alpha}^{(t)},\B{\beta}^{(t)},\B{\gamma}^{(t)})$
\end{algorithmic}
\end{algorithm}

\begin{algorithm}[ht]
\caption{QAT-LoRDBA (smooth-sign STE).}
\label{alg:qat-lordba}
\begin{algorithmic}[1]
\Require base model $W_{0}$, stream $\{(\B{x}_{t},\B{y}_{t})\}$, carrier rank $R$, STE temperature $\kappa$, optimiser $\mathsf{opt}$
\State Initialise $(U_{1},U_{2})$ via Eq.~\eqref{eq:svd-init} from a $10\%$-budget fp16 LoRA warm-up
\State Initialise scales $(\B{\alpha},\B{\beta},\B{\gamma})$ by one block-coordinate sweep of Eq.~\eqref{eq:scales-step} (per-axis closed-form least-squares)
\For{each training step $t$}
  \State Forward: $\Delta W \gets \diag(\B{\alpha})\sign(U_{1})\diag(\B{\beta})\sign(U_{2})\diag(\B{\gamma})$
  \State Compute task loss $\ell_{t}(W_{0}+\Delta W;\B{x}_{t},\B{y}_{t})$
  \State Backward: replace $\partial\sign(U)/\partial U$ by $\kappa(1-\tanh^{2}(\kappa U))$ in autograd
  \State Step $(U_{1},U_{2},\B{\alpha},\B{\beta},\B{\gamma}) \gets \mathsf{opt}(\cdot,\nabla\ell_{t})$
\EndFor
\State \Return $(B_{1},B_{2},\B{\alpha},\B{\beta},\B{\gamma}) \gets (\sign(U_{1}),\sign(U_{2}),\B{\alpha},\B{\beta},\B{\gamma})$
\end{algorithmic}
\end{algorithm}

\begin{algorithm}[ht]
\caption{PTQ-LoRDBA at general envelope rank $\ell \ge 1$.}
\label{alg:ptq-lordba-general}
\begin{algorithmic}[1]
\Require fp16 LoRA factors $A^{\star},B^{\star}$, carrier rank $R$, envelope rank $\ell \ge 1$, outer iterations $K$
\State $(U_{\Sigma},S_{\Sigma},V_{\Sigma})\gets\mathrm{thinSVD}(A^{\star}(B^{\star})^{\top},\text{rank} = R)$
\State Split singular spectrum into $\ell$ disjoint index sets $\mathcal{I}_{1},\dots,\mathcal{I}_{\ell} \subseteq [R]$; initialise $\B{\beta}^{(i)}$ from $\{(S_{\Sigma})_{jj}\}_{j\in\mathcal{I}_{i}}$ (extended by zeros to $\mab{R}^{R}$)
\State Initialise $(B_{1},B_{2})$ by Eq.~\eqref{eq:svd-init}, and $\{(\B{\alpha}^{(i)},\B{\gamma}^{(i)})\}_{i=1}^{\ell}$ by solving the per-envelope scale step
\For{$t=0,1,\dots,K - 1$}
  \State $(U_{1},U_{2})$-update: outer-loop sign-ADMM step (Algorithm~\ref{alg:ptq-lordba}, lines~6--10) with $f$ replaced by the envelope-sum objective $\frac{1}{2}\|\Delta W^{\star} - \sum_{i}\diag(\B{\alpha}^{(i)})U_{1}\diag(\B{\beta}^{(i)})U_{2}\diag(\B{\gamma}^{(i)})\|_{F}^{2}$
  \State Update $\{(\B{\alpha}^{(i)},\B{\beta}^{(i)},\B{\gamma}^{(i)})\}_{i=1}^{\ell}$ by block-coordinate closed-form least-squares, cycling over envelope index $i$
\EndFor
\State \Return $\bigl(B_{1},B_{2},\{(\B{\alpha}^{(i)},\B{\beta}^{(i)},\B{\gamma}^{(i)})\}_{i=1}^{\ell}\bigr)$
\end{algorithmic}
\end{algorithm}

\section{Inference-Kernel Details}\label{app:kernel}

Equation~\eqref{eq:lordba-fwd} expresses the LoRDBA forward pass as
three element-wise scale fusions and two binary matmuls. Each binary
matmul $\B{z} = B^{\top}\B{w}$ with $B \in \{\pm 1\}^{p\times q}$ and
fp16 input $\B{w} \in \mab{R}^{p}$ computes, for every $j \in [q]$,
the exact sign-accumulation
\begin{equation}
z_{j}\;=\;\sum_{i=1}^{p}B_{ij}w_{i}
\;=\;\sum_{i : B_{ij}=+1} w_{i}\;-\;\sum_{i : B_{ij}=-1} w_{i},
\label{eq:sign-accum-app}
\end{equation}
i.e.\ a signed sum of fp16 values with no floating-point multiplications.
On contemporary GPUs this primitive is realised by the
\texttt{gemlite} library~\citep{badri2023gemlite} using warp-level
predicated fp16 reductions (the same kernel used by
DBF~\citep{boza2026dbf}, BitNet~\citep{wang2023bitnet}, and
OneBit~\citep{xu2024onebit}), giving a measured $\approx 2\times$
speedup over an INT4 GEMM at typical adapter shapes.
Table~\ref{tab:kernel-flops} summarises the arithmetic intensity.

\begin{table}[ht]
\centering
\caption{Arithmetic-intensity breakdown of the adapter matmul on a
host projection of shape $N \times M$. The fp16/INT4 LoRA baselines
use rank $r_{0}$; LoRDBA uses binary carrier rank $R$ and
envelope rank $\ell$ (the table is for $\ell = 1$). One fp16
multiplication and one fp16 addition count as two separate ops.
LoRDBA trades two fp16 GEMMs for two sign-accumulation matmuls (zero
floating-point multiplications on the inner edges) plus three
channel-wise vector scales that fuse into the residual-stream epilogue
with the scale traffic already accounted for in the bytes column.}
\label{tab:kernel-flops}
\resizebox{\textwidth}{!}{%
\begin{tabular}{l c c c}
\toprule
Method & Bytes loaded (per output row) & Arithmetic (ops/row) & Dominant kernel \\
\midrule
LoRA fp16                & $2r_{0}(N{+}M)$             & $2r_{0}(N{+}M)$, half muls / half adds              & fp16 GEMM \\
LoftQ (INT4)             & $0.5r_{0}(N{+}M)$           & $2r_{0}(N{+}M)$, half muls / half adds              & INT4 GEMM \\
BitDelta (full-$\Delta$) & $NM/8 + O(1)$               & $NM$ fp16 adds                                      & addition-only GEMM \\
LoRDBA ($\ell = 1$)    & $R(N{+}M)/8 + 2(N{+}R{+}M)$ & $R(N{+}M)$ fp16 adds $+O(N{+}R{+}M)$ muls           & sign-accumulation GEMM \\
\bottomrule
\end{tabular}%
}
\end{table}

\clearpage
\section{Implementation Details and Compute}\label{app:setup}
\label{app:compute}

\paragraph{Hyper-parameters.}
\textbf{PTQ-LoRDBA.} $K = 100$ outer iterations; Boyd
residual-balancing with $(\tau,\mu) = (2,10)$; scale-matched warm
penalty $\rho^{(0)} = \|\Delta W^{\star}\|_{F}^{2}/(NR + RM)$. The
SVD warm start of Eq.~\eqref{eq:svd-init} is used throughout.
\textbf{QAT-LoRDBA.} LoRA warm-up: two epochs, AdamW, $\mathrm{lr} = 2 \times 10^{-4}$;
QAT refinement: one epoch, $\mathrm{lr} = 5 \times 10^{-5}$.
Both use linear warm-up over $5\%$ of steps, cosine to zero, $\kappa = 100$.
\textbf{Envelope rank.} Identical schedule for $\ell \ge 2$, with
the split-spectrum initialisation of
Algorithm~\ref{alg:ptq-lordba-general}.

\paragraph{Compute budget.}
On a single NVIDIA H100 (80\,GB), PTQ-LoRDBA completes in ${\sim}2$
minutes per model, while QAT Full requires ${\sim}11$h (1 epochs)
for \textsc{LLaMA-2-7B} on MetaMathQA.
Full-run numbers are in Table~\ref{tab:compute-app}.

\begin{table}[ht]
\centering
\caption{Wall-clock budget for adapter production on a single
NVIDIA H100 (80GB).}
\label{tab:compute-app}
\small
\begin{tabular}{l c c}
\toprule
Stage                       & LLaMA-2-7B & LLaMA-3.2-3B \\
\midrule
fp16 LoRA fine-tuning       &  $\sim$5 h    & $\sim$5 h \\
PTQ-LoRDBA ($\ell = 1$)    &  $\sim$2 min  & $\sim$2 min \\
QAT Full ($\ell = 1$, 2 ep) &  $\sim$11 h   & $\sim$7 h \\
QAT Freeze ($\ell = 1$, 2 ep) &  $\sim$11--15 h & $\sim$7 h \\
QAT Scratch ($\ell = 1$, 2 ep) & $\sim$8--9 h & $\sim$7 h \\
\bottomrule
\end{tabular}
\end{table}

\section{Additional Figures}\label{app:additional-figs}

\begin{figure}[ht]
  \centering
  \includegraphics[width=0.98\linewidth]{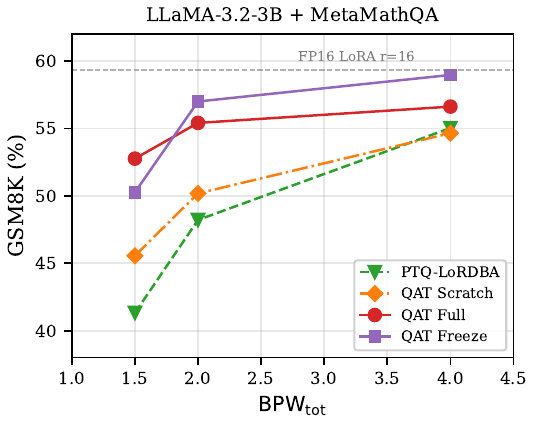}
  \caption{BPW sweep on \textsc{LLaMA-3.2-3B}+\textsc{MetaMathQA} (GSM8K, $r{=}16$).
  Four training modes are compared; PTQ saturates below $2$ BPW while QAT variants close the gap to FP16 LoRA.}
  \label{fig:ablation}
\end{figure}

\stopcontents[appendix]

\end{document}